\def\eqref#1{equation~\ref{#1}}
\def\ceil#1{\lceil #1 \rceil}
\def\floor#1{\lfloor #1 \rfloor}
\def\1{\bm{1}}
\DeclareMathAlphabet{\mathsfit}{\encodingdefault}{\sfdefault}{m}{sl}
\SetMathAlphabet{\mathsfit}{bold}{\encodingdefault}{\sfdefault}{bx}{n}
\newcommand{\E}{\mathbb{E}}
\newcommand{\R}{\mathbb{R}}
\newcommand{\Z}{\mathbb{Z}}
\newcommand{\N}{\mathbb{N}^+}
\def\zero{\boldsymbol{0}}
\def\a{\boldsymbol{a}}
\def\e{\boldsymbol{e}}
\def\p{\boldsymbol{p}}
\def\h{\boldsymbol{h}}
\def\bu{\boldsymbol{u}}
\def\bv{\boldsymbol{v}}
\def\x{\boldsymbol{x}}
\def\y{\boldsymbol{y}}
\def\z{\boldsymbol{z}}
\def\P {\mathbb{P}}
\def\E {\mathbb{E}}
\def\bzeta{\boldsymbol{\zeta}}
\newtheorem{thm}{Theorem}[section]
\newtheorem{lem}{Lemma}[section]
\newtheorem{cor}{Corollary}[section]
\theoremstyle{definition}
\newtheorem{dfn}{Definition}[section]
\theoremstyle{remark}
\newtheorem{rem}{Remark}[section] 
\numberwithin{equation}{section}
\def\u{{\textnormal{u}}}
\def\b{{\boldsymbol{b}}}
\def\TF{{\textnormal{TF}}}
\def\Attn{{\textnormal{Attn}}}
\def\FFN{{\textnormal{FFN}}}
\def\id{{\textnormal{id}}}
\def\pro{{\textnormal{P}}}
\def\dat{{\textnormal{D}}}
\def\Id{{\mathbb{I}}}
\def\EUAF{{\textnormal{EUAF}}}
\def\spn{{\operatorname{span}}}
\title{A Theoretical Framework for Prompt Engineering:\\
Approximating Smooth Functions with Transformer Prompts}
\patchcmd{\AB@output}{\par\vspace{1em}}{}{}{}
\renewcommand\AB@affilsepx{\quad}  
\author[1]{Ryumei Nakada}
\author[2]{Wenlong Ji}
\author[3]{Tianxi Cai}
\author[2]{James Zou}
\author[1]{Linjun Zhang}
\affil[1]{Rutgers University}
\affil[2]{Stanford University}
\affil[3]{Harvard University}
\begin{document}
\maketitle

\begin{abstract} 
    Prompt engineering has emerged as a powerful technique for guiding large language models (LLMs) toward desired responses, significantly enhancing their performance across diverse tasks. Beyond their role as static predictors, LLMs increasingly function as intelligent agents, capable of reasoning, decision-making, and adapting dynamically to complex environments. However, the theoretical underpinnings of prompt engineering remain largely unexplored. In this paper, we introduce a formal framework demonstrating that transformer models, when provided with carefully designed prompts, can act as a configurable computational system by emulating a ``virtual'' neural network during inference. Specifically, input prompts effectively translate into the corresponding network configuration, enabling LLMs to adjust their internal computations dynamically. Building on this construction, we establish an approximation theory for $\beta$-times differentiable functions, proving that transformers can approximate such functions with arbitrary precision when guided by appropriately structured prompts. Moreover, our framework provides theoretical justification for several empirically successful prompt engineering techniques, including the use of longer, structured prompts, filtering irrelevant information, enhancing prompt token diversity, and leveraging multi-agent interactions. By framing LLMs as adaptable agents rather than static models, our findings underscore their potential for autonomous reasoning and problem-solving, paving the way for more robust and theoretically grounded advancements in prompt engineering and AI agent design.
\end{abstract}

\section{Introduction}

The emergence of large language models (LLMs) has revolutionized a broad spectrum of natural language processing tasks, but their impact extends far beyond text-based applications. LLMs are increasingly recognized as general-purpose reasoning engines capable of tackling complex problems across multiple domains. These models have achieved remarkable milestones in artificial intelligence, excelling in various tasks including language translation \citep{costa2022no}, text summarization \citep{lewis2019bart}, code generation \citep{chen2021evaluating}, protein engineering \citep{shen2024proteinengine}, image generation \citep{wang2024genartist}, and even solving Olympiad-level mathematical problems \citep{trinh2024solving}. At the heart of the development of LLMs is the transformer model. Transformer, as introduced by \citet{vaswani2017attention}, utilizes a self-attention mechanism to capture dependencies between input tokens without being restricted by their relative position in the sequence. Since first proposed in 2017, the transformer model has become the fundamental building block of most of the state-of-the-art models in various artificial intelligence applications \citep{brown2020language, dosovitskiy2020image, brohan2022rt}. 

A key advantage of using transformers in LLMs is their ability to handle prompts---sequences of input tokens used to guide the model towards generating relevant responses. The use of prompts plays a critical role in enhancing model performance in diverse applications, as they provide execution guidance needed for the model to solve a given task and ensure that the model generation aligns with human instruction. For example, \citet{wei2022chain} and \citet{kojima2022large} found that using a Chain-of-Thought (CoT) prompting can significantly increase the reasoning capabilities of LLMs. Intuitively, the CoT prompting explicitly instructs the model to solve the problem using step-by-step reasoning. This helps decompose a complicated problem into simple steps, reducing the chance of errors in the final answer (Figure~\ref{fig: prompt engineer}). 

\begin{figure}
    \centering
    \includegraphics[width=0.47\linewidth]{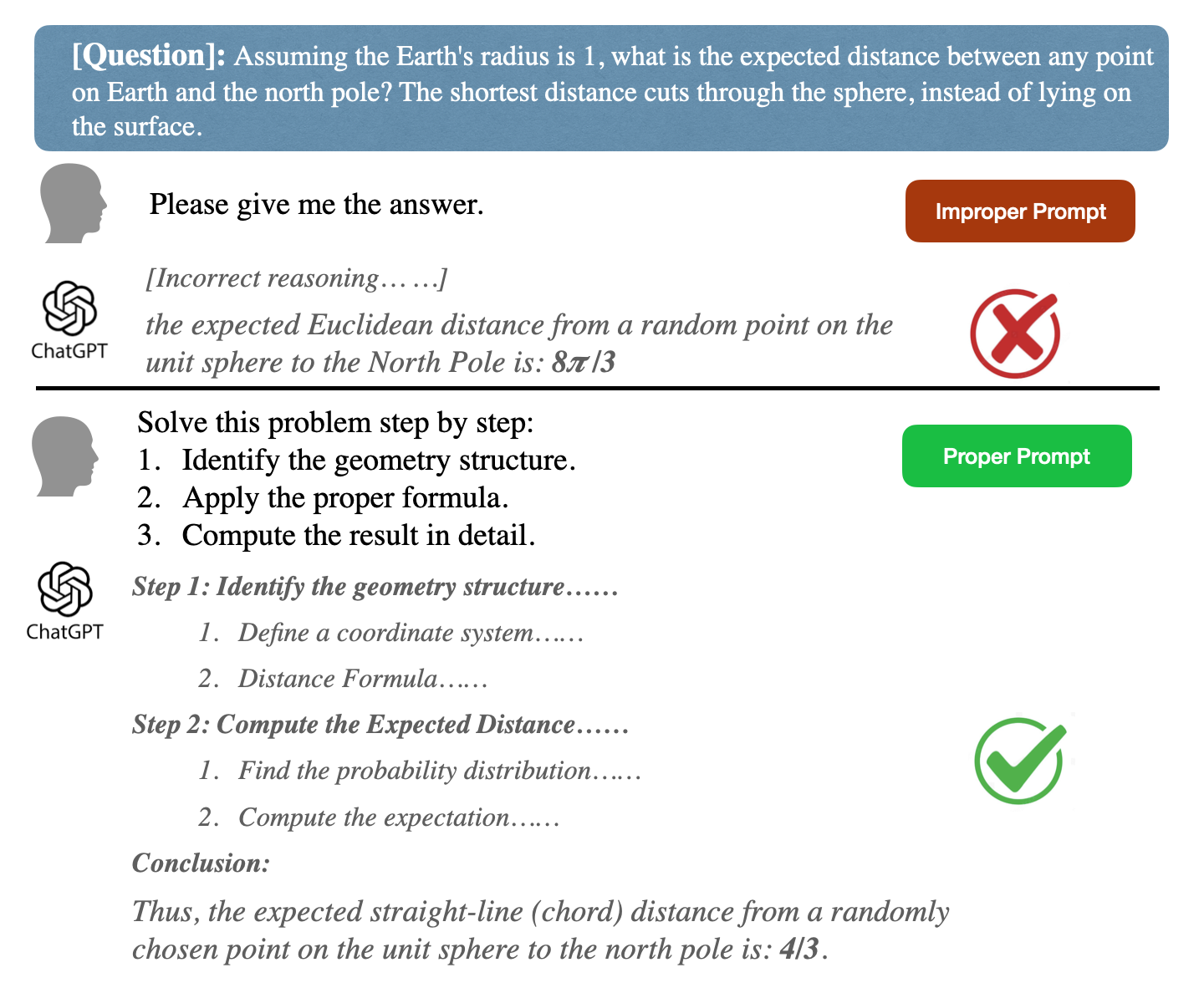}
    \includegraphics[width=0.52\linewidth]{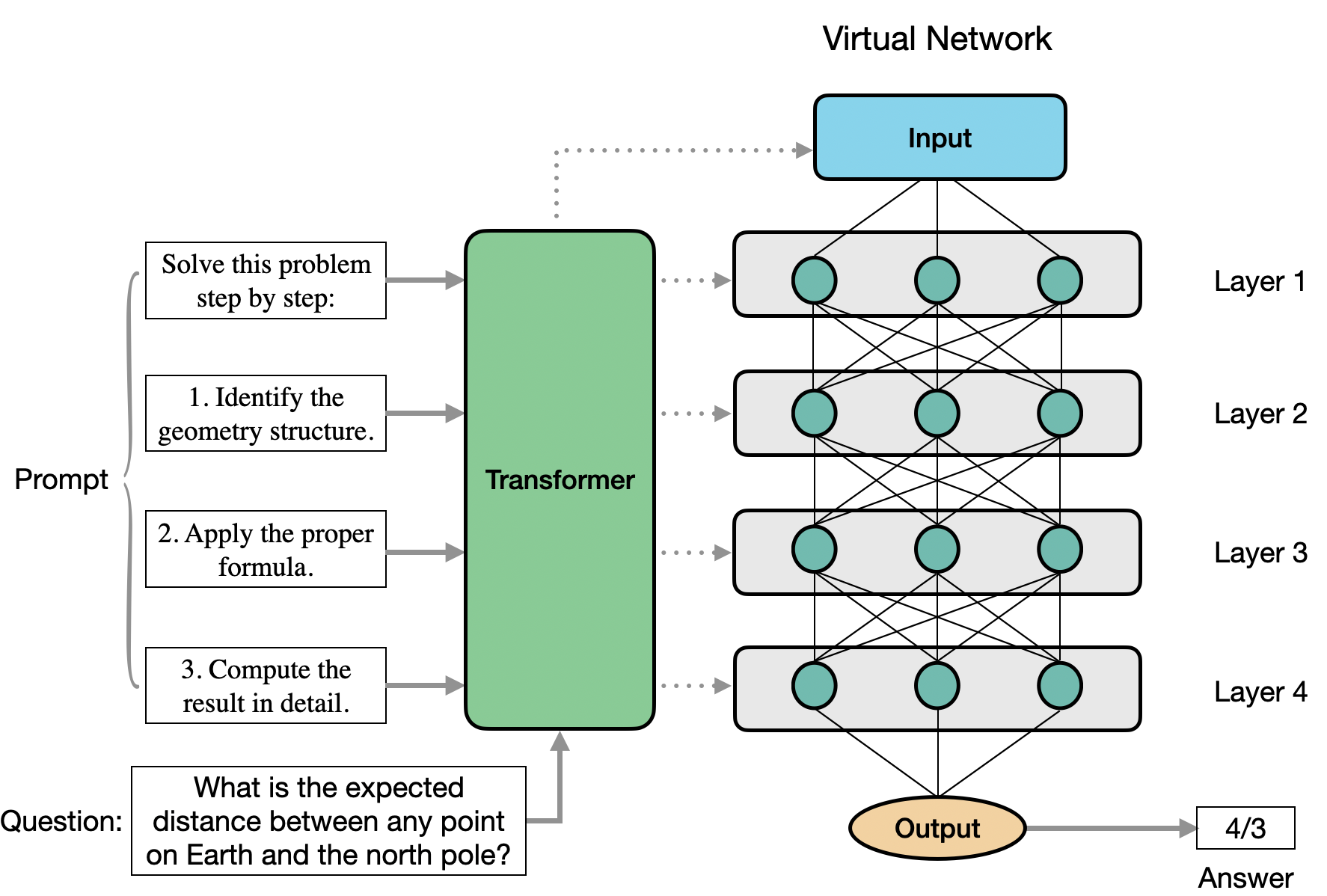}
    \caption{\textbf{Left}: Example of prompt engineering. The responses are collected from GPT-4o, and the detailed computations are omitted for simplicity. Proper prompt design can improve the reasoning ability of large language model generations. \textbf{Right}: Illustration of our theory. Transformer can emulate a ``virtual'' neural network based on the prompts to execute a given task.}
    \label{fig: prompt engineer}
\end{figure}

Despite the practical success of prompt engineering, the mechanism by which transformers utilize input prompts to solve tasks remains poorly understood. Recent advances in approximation theory have shed light on the expressive power of transformers when augmented with prompts, offering new insights into their capabilities. For instance, \citet{wang2024universality,petrov2024prompting,hu2024fundamental} demonstrated that sufficiently long prompts enable transformers to approximate Lipschitz functions arbitrarily well. 
In contrast to these works, which primarily focus on Lipschitz continuous functions, our approach introduces a fundamentally different mechanism by viewing the sequential output as a layer-wise emulation of a neural network, thereby enabling the approximation of smooth functions.

Moreover, the in-context learning (ICL) literature has provided valuable insights into how transformers adapt by leveraging multiple examples embedded within the input. Research in this area has demonstrated that transformers can effectively perform tasks such as gradient descent \citep{akyurek2022learning,von2022transformers}, ridge regression, least squares \citep{akyurek2022learning,guo2023transformers}, algorithm selection \citep{bai2024transformers}, and kernel regression \citep{han2023context}. However, these studies primarily focus on example-driven adaptation, rather than examining the structural role of the prompt itself in shaping the model's internal representation.   

Our work addresses a critical gap in understanding the theoretical foundations of prompt engineering by shifting the focus toward a paradigm in which an instruction precedes the data, effectively acting as a dynamic function applied during inference. In our approach, we adopt a general prompt paradigm where an instruction precedes the data, and analyze how the prompts shape subsequent data processing in transformer models. 
By integrating this perspective with approximation theory, we derive error bounds for approximating $\mathcal{C}^\beta$ functions.
In doing so, our work offers theoretical justifications for several empirically observed phenomena: 
(1) longer and more detailed prompts enhance the model performance \citep{brown2020language,min2022rethinking,garg2022can,fu2022complexity}, (2) filtering out irrelevant tokens improves prediction accuracy \citep{shi2023large,zhou2024can,jiang2024enhancing,wu2024instructing}, (3) increasing prompt diversity enhances expressivity \citep{naik2023diversity,yao2023react}, and (4) multi-agent prompting further reduces approximation error \citep{zhou2022least,jiang2023llm,wu2023autogen,wang2024mixture}. By formalizing these effects within an approximation-theoretic framework, our work bridges the gap between empirical prompt engineering practices and rigorous mathematical theory, offering a principled foundation for optimizing prompt design.

\paragraph{Our Contributions}  

We summarize our contributions as follows:
\begin{enumerate}
    \item We develop a novel theoretical framework that formalizes the role of prompts in transformer models. Specifically, we prove that a \emph{small} transformer can leverage prompt information to configure itself dynamically during inference, effectively forming a ``virtual'' neural network. This insight establishes a direct connection between the expressivity of deep neural networks and prompt-based architectures.
    \item We present the first approximation theory for transformers augmented with prompts for functions in the $\mathcal{C}^\beta$ class. We derive a bound showing that a token length of $O(\varepsilon^{-p/(2\beta)})$ is sufficient to achieve an $\varepsilon$-approximation. Furthermore, we demonstrate that replacing the ReLU activation with the Elementary Universal Activation Function (EUAF) \citep{zhang2022deep} in the first transformer layer enables transformers to approximate arbitrary smooth functions with a finite number of prompt tokens.
    \item Our framework provides a formal foundation for several empirically observed prompt engineering effects, including the impact of prompt length, noise reduction through token filtering, enhanced expressivity from diverse prompts, and the approximation benefits of multi-agent prompting. By grounding these phenomena in a rigorous theoretical context, our work bridges the gap between empirical findings and fundamental approximation theory. 
\end{enumerate}

\paragraph{Organization of Paper}

This paper is structured as follows. In Section~\ref{sec: transformer}, we provide a formal description of transformers and their core components. Section~\ref{sec: emulate} introduces our theoretical results, demonstrating how transformers can emulate neural networks using prompts. In Section~\ref{sec: approximation}, we analyze the capacity of transformers with prompts to approximate smooth functions in the $\mathcal{C}^\beta$ class. In Section~\ref{sec: application}, we provide insights into practical prompt engineering techniques using our theoretical framework. Finally, we discuss the implications of our findings and potential future research in Section~\ref{sec: discussion}.

\subsection{Related Works}

Transformers have exhibited remarkable few‐shot learning capabilities \citep{brown2020language,liu2021makes,zhao2021calibrate,schick2020s,wei2022chain,kojima2022large,zhou2022least}, which have sparked extensive research into prompt engineering. This approach leverages carefully designed prompts to steer model behavior across diverse tasks, eliminating the need for model fine-tuning \citep{qiao2022reasoning,liu2023pre}.

One area of research focuses on automatic prompt tuning, which aims to automate the discovery of task‐specific prompts \citep{shin2020autoprompt,prasad2022grips,diao2022black,zhang2022tempera,pryzant2023automatic,zhou2023survival}. For instance, AutoPrompt \citep{shin2020autoprompt} utilizes a gradient‐based search to generate discrete prompts that effectively elicit desired responses. 
At the same time, continuous prompt tuning techniques have gained traction. These methods introduce learnable vectors (or ``prefixes'') that are prepended to inputs, allowing the model's behavior to be guided without modifying its parameters \citep{li2021prefix,liu2021p,lester2021power,sun2022black,sun2022bbtv2,wang2023review}.
Prefix-Tuning \citep{li2021prefix} pioneered this approach by optimizing a sequence of continuous prompts, demonstrating that a small set of tunable parameters can achieve comparable performance to model fine-tuning. Building on this foundation, subsequent research has further refined and expanded this paradigm. \citet{lester2021power} explored the scalability of prompt tuning, while P-Tuning v2 \citep{liu2021p} demonstrated that prompt tuning could match the performance of full fine-tuning across a variety of tasks and model scales.

From the theoretical point of view, the general ability of transformers to approximate complex functions has been extensively investigated \citep{yun2019transformers,jiang2023approximation,takakura2023approximation}. Moreover, there have been some recent theoretical developments on the impact of prompts on the performance of transformers. We next discuss related existing theoretical work on prompt engineering and ICL. 

\paragraph{Prompt Engineering Theory}

\citet{wang2024universality} and \citet{hu2024fundamental} examined the approximation capacity of transformers when augmented with prompts.
They proved that for any specified error tolerance, one can construct a transformer and determine a corresponding prompt length such that prompt tuning can approximate any Lipschitz sequence-to-sequence function to the desired accuracy. In particular, \citet{hu2024fundamental} showed that even a minimal transformer architecture, consisting of a single self-attention layer and just two feed-forward layers, can serve as a universal approximator. They also derived a lower bound on the soft prompt length required for memorization.
Moreover, \citet{oymak2023role} showed that prompt tuning markedly enhances a model's focus on task-relevant inputs. Their analysis revealed that a few gradient descent steps allow the model to concentrate on these critical tokens, achieving near-optimal performance with exponentially decreasing error rates. 
In addition, \citet{petrov2024prompting} investigated the universal approximation capacity facilitated by prompts. Specifically, they demonstrated that for a target Lipschitz sequence-to-sequence function, an $\varepsilon$-approximation can be attained using prompts of length $O(\varepsilon^{-O(p^2)})$, where $p$ denotes the dimensionality of the function's domain. 
Under a hidden Markov model data-generating process, \citet{wei2021pretrained} demonstrated that soft prompt tuning combined with a linear head effectively recovers the latent information essential for downstream classification tasks. Their results indicate that this soft prompt approach imposes milder non-degeneracy conditions than those required by conventional head fine-tuning.
Beyond these works, we further exploit the sequential nature of transformers to demonstrate their capacity to emulate deep neural networks, which enables us to derive approximation bounds for smooth functions. Specifically, in Section~\ref{sec: approximation} we provide the approximation error for $\mathcal{C}^\beta$ functions, showing that an $\varepsilon$-approximation can be achieved with a prompt length of $O(\varepsilon^{-p/(2\beta)})$.

\paragraph{In-context Learning Theory}

Another closely related line of research in prompt engineering is ICL, a paradigm in which multiple examples are provided within the input to guide model behavior without modifying the model weights. Theoretical investigations have revealed that ICL implicitly performs Bayesian inference \citep{xie2021explanation,zhang2023and,mahankali2023one,hu2024unveiling}. Moreover, transformers are capable of executing implicit gradient descent updates on the provided examples \citep{garg2022can,akyurek2022learning,von2022transformers,dai2022can,cheng2023transformers,ahn2023transformers}. They can also effectively perform tasks such as ridge regression, least squares \citep{akyurek2022learning,guo2023transformers}, algorithm selection \citep{bai2024transformers}, kernel regression \citep{han2023context}, and unsupervised domain adaptation \citep{hataya2024automatic}. In addition, \citet{kim2024transformers} demonstrated that pre-trained transformers can achieve the minimax optimal rate of convergence for ICL by pre-training on a diverse set of nonparametric regression tasks, thereby learning to approximate regression functions in Besov spaces via neural network representations. \citet{guo2023transformers} investigated linear regression with features generated by multi-layer neural networks, revealing that transformers can memorize network weights and effectively run gradient descent on in-context ridge regression. Recently, \citet{wu2024transformers} proved that transformer models can, through ICL, effectively simulate the training process of deep feed-forward neural networks via gradient descent.
Unlike ICL theory, which is inherently example-driven and relies on in-context examples to drive model adaptation, our work provides a result in terms of the prompt length required to describe the model adaptation.

Regarding transformers' ability to process sequential problems, recent research employs circuit complexity to analyze their expressiveness -- their capacity to represent and compute complex functions effectively. \citet{feng2023towards} and \citet{li2024chain} demonstrate that when transformers are allowed to generate intermediate reasoning steps as Chain-of-Thought (CoT), they effectively increase their computational depth, enabling them to tackle complex, inherently sequential tasks. In contrast, constant-depth transformers operating with finite precision remain confined to relatively simple computational classes. Furthermore, \citet{chen2024theoretical} established a lower bound on the transformer size required for multi-step sequential function composition, implying that CoT prompting can significantly reduce parameter requirements by distributing computation across multiple tokens. See \citet{ji2025overview} for an overview of the theoretical understanding of LLMs.

\subsection{Notation}

For any two sequences of positive numbers $\{a_k\}_k$ and $\{b_k\}_k$, 
indexed by $k \in \mathcal{K}$, we write $a_k \lesssim b_k$ if there exists a constant $C > 0$, independent of $k$, such that $\sup_{k \in \mathcal{K}} a_k / b_k < C$ holds. 
We denote $a_k = O(b_k)$ when $a_k \lesssim b_k$, and similarly, $a_k = \Omega(b_k)$ if $a_k \gtrsim b_k$ (i.e., if $b_k \lesssim a_k$). Additionally, $a_k = \Theta(b_k)$ if $a_k = O(b_k)$ and $a_k = \Omega(b_k)$ hold.
For any matrix $A$, we denote its operator norm by $\|A\|$.
For any vector $\a = (a_1, \dots, a_D)^\top \in \R^D$, define the max norm as $\|\a\|_{\max} = \max_{j \in [D]} |a_j|$.
For any positive integer $I$, define $[I]$ as the set $\{1,2,\cdots,I\}$. By convention, we set $[0] = \emptyset$.
We use $a \vee b$ and $a \wedge b$ to denote $\max(a, b)$ and $\min(a, b)$, respectively.
For any vector $\a = (a_1, \dots, a_D)^\top \in \R^D$, denote its $j$-th entry by $(\a)_j := a_j$. For indices $t_1 \leq t_2 \in \N$, define the sub-vector $(\a)_{t_1:t_2} := (a_{t_1}, a_{t_1+1}, \dots, a_{t_2})^\top \in \R^{t_2 - t_1+1}$.
For a matrix $A$ expressed as $[\a_1, \dots, \a_D]$, we write $(A)_j := \a_j$.
We denote by $\mathcal{B}_D(R)$ the ball of radius $R > 0$ in $\R^D$ centered at $\zero_D$.
Let $\Id\{\text{(condition)}\}$ denote the indicator function, which takes the value $1$ if the condition is satisfied and $0$ otherwise. 
We use the notation $\mathcal{C}^\beta([0,1]^p)$ (with $\beta \in \N$) to denote the class of functions with domain $[0, 1]^p$ that are $\beta$-times continuously differentiable.
For any $f \in \mathcal{C}^\beta([0, 1]^p)$, define its norm by $\|f\|_{\mathcal{C}^\beta([0, 1]^p)} = \max\{\|\partial^{\bm \alpha} f\|_{\mathcal{L}^\infty([0, 1]^p)} : \|\bm \alpha\|_1 \leq \beta, \bm \alpha \in (\{0\}\cup\N)^p\}$. 
Furthermore, denote by $\mathcal{C}_\u^\beta([0, 1]^p) = \{f \in \mathcal{C}^\beta([0, 1]^p): \|f\|_{\mathcal{C}^\beta([0, 1]^p)} \leq 1\}$ the unit ball in $\mathcal{C}^\beta([0, 1]^p)$. Let $\mathcal{C}([0,1]^p)$ denote the set of continuous functions on $[0, 1]^p$.
For convenience, the notations introduced in later sections are summarized in Table~\ref{tab: notations} in Appendix.

\section{Preliminaries}\label{sec: transformer}

In this section we introduce the basic setups for our analysis.

\subsection{Transformers}

We follow the notation in \citet{akyurek2022learning,von2022transformers,bai2024transformers} to introduce transformers, which consist of two main types of layers: the self-attention layer and the feed-forward transformation layer. 

\paragraph{Self-attention.} 
The core of transformers is the self-attention mechanism, which determines how much focus each token should have on others in a sequence. Each input token is projected by three sets of matrices: Query ($Q$) representing the token we are currently processing, Key ($K$) representing all tokens in the sequence, and Value ($V$) containing the actual information to be aggregated. Given the token matrix for any sequence of length $n \in \N$ and dimension $D$, $H = [\h_1, \dots, \h_n] \in \R^{D\times n}$, the self-attention layer with $M$ heads and $3M$ projection matrices, $\mu=\{(Q_m, K_m, V_m)\}_{m \in [M]}$, (each $Q_m$, $K_m$, and $V_m$ are $D \times D$ matrices) takes $H$ as input and outputs
$$
\Attn_\mu(H)_j := \h_j + \sum_{m\in[M]} \sum_{j' \in [n]} \sigma(\langle Q_m \h_j, K_m \h_{j'} \rangle) V_m \h_{j'}, \ \ j \in [n],
$$
where $\sigma: \R \rightarrow \R$ is the activation function. The attention layer computes a weighted average of transformed token representations, with the attention weights determined by the similarity between query and key projections.

\paragraph{Feed-forward transformation.} 
Every Transformer layer incorporates a Feed-forward Network (FFN) that processes tokens independently. The FFN applies a non-linear, token-wise transformation that complements the global interactions captured by self-attention.
Let the token embedding dimension $D$ and the hidden dimension $D' \in \N$ be given. 
Given a matrix $H\in\R^{D\times n}$ with any sequence length $n \in \N$, the feed-forward transformation layer with parameters $\nu=(W_1,W_2) \in \R^{D' \times D} \times \R^{D \times D'}$, takes $H$ as input and outputs
$$
\FFN_\nu(H): = H+W_2 \sigma(W_1 H).
$$
where $\sigma(W_1 H)$ is obtained by applying the activation function $\sigma : \R \to \R$ elementwise to the matrix $W_1 H$. We refer to $D'$ as the FFN width. 

Unless specified, we consider a transformer with the ReLU activation function, i.e., $\sigma(x) := \max\{0, x\}$. We denote a transformer layer by $\TF_\theta(H) = \FFN_\nu \circ \Attn_\mu(H)$, where $\theta = (\mu, \nu)$. With a slight abuse of notation, we write a multi-layer transformer with parameters $\Theta = (\theta_1, \dots, \theta_l)$ as $\TF_{\Theta} = \TF_{\theta_l} \circ \dots \circ \TF_{\theta_1}$.

\paragraph{Sequential Generation.}

Given input tokens $H = [\h_1, \dots, \h_n]$, a transformer iteratively generates a sequence of tokens $\h_{n+1}, \h_{n+2}, \dots$ by repeatedly performing next-token prediction.
Formally, for any transformer $\TF_\Theta$ and each $v \in \{0\} \cup \N$, the subsequent token $\h_{n+v+1}$ given all previous tokens $\h_1, \dots, \h_{n+v}$ is generated by $\h_{n+v+1} = \TF_\Theta([\h_1, \dots, \h_{n+v}])_{n+v}$, i.e., by selecting the last token from the output of transformer layers. The detailed transformer algorithm is summarized in Algorithm~\ref{alg: iterative generation}.
Note that this formulation differs from that of standard language models, in which a new token is sampled from a categorical distribution after applying a softmax to the linear transformation of the transformer's output. This difference arises because we focus on approximating smooth functions in Euclidean space rather than modeling discrete words in text. We impose this simplification to ensure the mathematical tractability of the problem.

\begin{algorithm}[H]
    \caption{Simplified Iterative Generation by Transformer}\label{alg: iterative generation}
    \begin{algorithmic}[1]
    \Require Transformer $\TF_{\Theta}$, initial sequence $H = [\h_1, \h_2, \dots, \h_n] \in \mathbb{R}^{D \times n}$,
    and maximum generation steps $K$.
    \Ensure Generated sequence $[\h_{n+1},\h_{n+2}, \dots, \h_{n+K}]$.
    \State Initialize $H_0 \gets H$.
    \For{$v \gets 0$ \textbf{to} $K-1$}
        \State Compute $\tilde H_v \gets \TF_{\Theta}(H_v)$.
        \State Set $\h_{n+v+1} \gets (\tilde H_v)_{n+v}$, i.e., the last column of $\tilde H_v$.
        \State Update $H_{v+1} \gets [H_v, \h_{n+v+1}]$.
    \EndFor
    \State \Return $[\h_{n+1},\h_{n+2}, \dots, \h_{n+K}]$.
    \end{algorithmic}
\end{algorithm}

\subsection{Prompts}\label{sec: prompts setup}

Here we define the class of continuous, or ``soft'', prompts.
For simplicity, we adopt a composite positional encoding instead of the typical additive scheme, as used in recent ICL literature \citep{guo2023transformers,bai2024transformers,hu2024unveiling,nakada2024synthetic}. Notably, the T5 model \citep{raffel2020exploring} employs an encoding method that effectively adds a scalar to the attention logits, which is similar in spirit to using a separate dimension for positional encoding. Since the positional encoding is separated from the word embedding, the token embedding dimension (i.e., the dimension of $\h_j$) differs from that of the word embeddings. 

Let $\mathcal{U} \subset \R^d$ denote the set of word embeddings, where $d$ is the embedding dimension.\footnote{We use the term `tokens' to refer to the $D$-dimensional direct inputs to the transformer, and we call the $d$-dimensional part of these tokens the (word) embeddings.}
We assume that the initial input tokens are divided into $T$ prompt tokens and $N$ data tokens: 
\begin{align*}
    H & = [\underbrace{\h_1, \h_2, \dots, \h_T}_{\text{prompt tokens}}, \underbrace{\h_{T+1}, \h_{T+2}, \dots, \h_{T+N}}_{\text{data tokens}}] \in \R^{D \times (T+N)}.
\end{align*}

Data tokens correspond to the variables to which the procedures described in the prompt are applied. For example, we assume the following type of input:
``Add 1 to the following numbers: 1, 2, 3, 4, 5'', where the tokens corresponding to ``Add 1 to the following numbers:'' are the prompt tokens, and ``1, 2, 3, 4, 5'' correspond to the data tokens.

Specifically, with the choice of token dimension $D = 4d+8$, we consider the following prompt tokens with their associated embeddings:
\begin{align}
    \h_j := (\underbrace{\bu_j^\top}_{\text{$j$-th word embedding}}, \underbrace{\p_j^\top}_{\text{$j$-th positional encoding}})^\top \in \R^{4d+8} \ \ \text{ for $j \in [T]$},\label{def: prompt token}
\end{align}
where $\bu_j \in \mathcal{U} \subset \R^d$ is the word embedding, and $\p_j \in \R^{3d+8}$ is the positional encoding defined as $\p_j = p(w_j, j, S) := (\zero_{3d+4}^\top, 1, S, S w_j, S j)^\top$.
The scaling parameter $S>0$ ensures that the positional encoding remains significant relative to the word embeddings. Increasing this parameter enables transformers to correctly model global interactions even with potentially long input sequences.
The constant $1$ serves as a bias term for the attention layers and FFNs, while the zeros in $\p_j$ function as a temporal memory to store computed variables in each transformer layer. 
We later establish the existence of a transformer that processes a prompt and generates a value computed by a neural network constructed from the prompt.
For such a transformer, each word embedding corresponds to a specific (scaled) singular vector in the model weights of a virtual network, and the integer $w_j \in \Z$ encodes the layer of the virtual network to which the $j$-th word belongs. More precisely, the word embedding $\bu_j$ corresponds to one of the left or the right (scaled) singular vectors of the $\ell$-th layer weight if $w_j = 2\ell-1$ or $w_j = 2\ell$, respectively.
At a high level, $w_j$ imposes a hierarchical structure on the prompt. From ICL perspective, it generalizes the notion of a sample index by indicating the ``layer'' or processing stage a token contributes to. Similarly, in CoT prompting, $w_j$ marks sequential reasoning steps, guiding the layer-by-layer computation in the emulated neural network.

Denote the class of meaningful prompts defined in \eqref{def: prompt token} with \textit{exactly} $T \in \N$ tokens, a virtual layer index of at most $L$, 
scale $S > 0$, and bounded $(w_j)_{j \in [T]}$, by $\mathcal{P}_{\mathcal{U}}(T,L,S) \subset \R^{(4d+8) \times T}$, defined by
\begin{align*}
    \mathcal{P}_{\mathcal{U}}(T,L,S) = \{[\h_1, \dots, \h_T]: \h_j = (\bu_j^\top, p(w_j, j, S)^\top)^\top,\ w_j \in [2L],\ \bu_j \in \mathcal{U}\}.
\end{align*}
We also denote the class of prompts of length \textit{at most} $T$ by 
\begin{align*}
    \mathcal{Q}_{\mathcal{U}}(T,L,S) := \bigcup_{T' \in [T]} \mathcal{P}_{\mathcal{U}}(T',L,S).
\end{align*}

Given data $(\z_i)_{i \in [N]} \subset \R^d$, we define data tokens $(\h_{T+i})_{i \in [N]}$ following a prompt with length $T$ and scale $S$ as
\begin{align}
    \h_{T+i} = h^\dat(\z_i, T+i, S) := (\z_i^\top, p(0, T+i, S)^\top)^\top \in \R^{4d+8}.\label{eq: h data}
\end{align}
Table~\ref{tab: input tokens} summarizes the input, which consists of embeddings of the $T$ prompt tokens and $N$ data tokens.

\renewcommand{\arraystretch}{1.1}
\begin{table}[H]
    \centering
    \begin{tabular}{|c||c|c|c||c|c|c|}
    \hline
    \multicolumn{1}{|c||}{} & \multicolumn{3}{c||}{Prompt tokens} & \multicolumn{3}{c|}{Data tokens} \\ \hline
    Indices & $\h_1$ & \dots & $\h_T$ & $\h_{T+1}$ & \dots & $\h_{T+N}$\\ \hline\hline
    $[d]$ & $\bu_1$ & \dots & $\bu_T$ & $\z_1$ & \dots & $\z_N$\\ \hline
    $[4d+8]\setminus[d]$ & $p(w_1, 1, S)$ & \dots & $p(w_T,T,S)$ & $p(0,T+1,S)$ & \dots & $p(0,T+N,S)$ \\ \hline
    \end{tabular}
    \caption{A sketch of the input consisting of $T$ prompt tokens and $N$ data tokens.}
    \label{tab: input tokens}
\end{table}

\begin{rem}
    Although the notion of $w_j$ is ambiguous, we can understand its role by considering a prompt consisting of multiple sentences conveying different meanings. 
    For such a prompt, altering the length of the first sentence (while preserving its meaning) should not affect the interpretation of a word in the subsequent sentences. 
    In other words, $w_j$ captures the position of the $j$-th word in the overall high-level structure of the prompt.
    From a technical perspective, as seen in Section~\ref{sec: emulate}, $w_j$ helps specify the virtual `layer' to which the $j$-th word belongs.
    Alternatively, one may treat $w_j$ as a computed value from previous layers of a transformer by considering deeper transformer structure. Nevertheless, since our primary focus is on the analysis of the capacity of transformers with prompts, we do not address the determination of $w_j$ in this work.
\end{rem}

\section{Prompting Transformers to Emulate Neural Networks}\label{sec: emulate}

In this section, we demonstrate the expressive power of transformers when guided by carefully designed prompts. We show that transformers can effectively construct a \emph{virtual} neural network---a network architecture dynamically instantiated with weights encoded directly in the prompt.
In essence, Theorem 3.1 establishes that for any specific neural network, there exists a prompt that causes a transformer to sequentially generate tokens equivalent to the neural network's output.
The emulated virtual neural network has weights determined by the word embeddings of the prompt tokens, and its network architecture is specified via positional encodings.

To this end, we define the class of \emph{coarse} weight matrices, whose rank-$1$ factors are drawn from any elements in $\mathcal{U}$:
\begin{align}
    \mathcal{W}(r, d, \mathcal{U}, B) = \biggl\{W \in \R^{d \times d}: W = \sum_{k \in[r]} \tilde \bu_k \bu_k^\top, \ \ \|W\| \leq B^2, \ \ \tilde \bu_k, \bu_k \in\mathcal{U} \biggr\}.\label{eq: coarse NN class}
\end{align}
This class captures the coarse structure of weight matrices realizable via prompt-based emulation.
Note that when $\mathcal{U} = \R^d$, $\mathcal{W}(r, d, \mathcal{U}, B)$ exactly coincides with the set of rank-$r$ matrices in $\R^{d\times d}$ with operator norm bounded by $B^2$. 

We then define the class of coarse neural networks with ReLU activation function from $\R^d$ to $\R^d$ by
\begin{align*}
    \mathcal{G}(\mathbf{r}, d, \mathcal{U}, B) = \qty{g: g(\z) = W_L \sigma( W_{L-1} \sigma ( \dots \sigma (W_1 \z) \dots )), W_\ell \in \mathcal{W}(r_\ell,d,\mathcal{U},B)}
\end{align*}
where $\mathbf{r} = (r_1, r_2, \dots, r_L)$ denotes the ranks of weight matrices.

To investigate the capacity of emulation, we define the following transformer emulator.
\begin{dfn}\label{def: virtual NN}
    Let $d,T,N,L \in \N$ and $S > 0$.
    For any $\z_1, \z_2, \dots, \z_N\in[0,1]^d$, define the corresponding $N$ data tokens by
    $H^\dat = [h^\dat(\z_1, T+1, S), \dots, h^\dat(\z_N, T+N, S)] \in \R^{(4d+8)\times N}$.
    Given a prompt $H^\pro \in \mathcal{P}_{\mathcal{U}}(T,L,S)$ and data tokens $H^\dat$, the transformer $\TF_\Theta$, provided with the initial input sequence $H = [H^\pro, H^\dat] \in \R^{(4d+8)\times (T+N)}$, sequentially generates tokens $\h_{T+N+1}, \h_{T+N+2}, \dots$ according to Algorithm~\ref{alg: iterative generation} with $K\geq NL$. 
    The transformer emulation from $\R^d$ to $\R^d$ is then defined as
    $$
        \hat g_{\Theta,H^\pro,N,L,d}(\z_i) := (\h_{T+NL+i})_{1:d} \in \R^d.
    $$
\end{dfn}

We show that there exists a \emph{fixed} transformer that can generate arbitrary coarse neural networks given an appropriate prompt. 
Notably, this model dynamically constructs the equivalent of a virtual neural network using the word embeddings provided in the prompt, without relying on a specific embedding space $\mathcal{U}$.
\begin{thm}\label{thm: prompt engineering NN}
    Fix any $d \in \N$. There exists a $7$-layer transformer parameterized by $\Theta^*$ such that for any $B \geq 1$, $L \in \N$, $\mathbf{r} \in (\N \cup \{0\})^L$, $\mathcal{U} \subset \mathcal{B}_d(B)$, and $g \in \mathcal{G}(\mathbf{r}, d, \mathcal{U}, B)$, there exists a prompt $H^\pro \in \mathcal{P}_{\mathcal{U}}(T,L,S)$ with $T = 2\sum_{\ell \in [L]} r_\ell$ and $S \geq d T B^{4L}$ so that for any $N \in \N$, $(\z_i)_{i \in [N]} \subset [0, 1]^d$, 
    \begin{align*}
        \hat g_{\Theta^*,H^\pro,N,L,d}(\z_i) = g(\z_i) \ \ \text{ for $i \in [N]$.}
    \end{align*}
\end{thm}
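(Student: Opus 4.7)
The plan is to construct a fixed $7$-layer transformer $\Theta^\ast$ together with a prompt-encoding scheme that lets Algorithm~\ref{alg: iterative generation} realize the virtual layers of any $g \in \mathcal{G}(\mathbf{r}, d, \mathcal{U}, B)$ one at a time. For each virtual layer $\ell \in [L]$ with $W_\ell = \sum_{k \in [r_\ell]} \tilde{\bu}_k \bu_k^\top$, I would place two prompt tokens per rank-one summand: $\bu_k$ carrying layer marker $w_j = 2\ell - 1$ and $\tilde{\bu}_k$ carrying $w_j = 2\ell$, in adjacent positions within the block of layer $\ell$. This uses exactly $T = 2\sum_\ell r_\ell$ prompt tokens. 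The target is an induction on the generation step showing that at step $v = (\ell - 1)N + i - 1$, the newly emitted token at position $T + \ell N + i$ carries $W_\ell \sigma(W_{\ell-1} \cdots \sigma(W_1 \z_i) \cdots)$ (with the outer ReLU omitted when $\ell = L$) in its first $d$ coordinates.

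The workhorse of the construction is the large positional scale $S$. Since the positional encoding stores $S w_j$ and $S j$, carefully chosen query and key matrices produce bilinear forms $\h_j^\top Q^\top K \h_{j'}$ whose sign, after the ReLU activation $\sigma$ in $\Attn_\mu$, is determined exclusively by matching either the layer tag $w_{j'}$ or the position index $j'$ to a prescribed target. A single attention head can therefore serve as a hard selector routed by layer or by position. Exploiting this, I would allocate the seven layers roughly as: (i) an attention layer that pairs each right prompt token (with $w_j = 2\ell - 1$) with its adjacent left partner (with $w_j = 2\ell$), copying $\tilde{\bu}_k$ into a scratch slot of the $\bu_k$ token; (ii) a layer that reads the last column's index $j$, extracts the target $\ell$ and $i$ via ReLU arithmetic in an FFN, and uses attention to fetch the previous virtual layer's output $\h_{\rm prev}$ from position $T + (\ell-1)N + i$ into a scratch slot of the current column; (iii) a multi-head attention gated by $w_{j'} = 2\ell - 1$ that produces $\sum_k \sigma(\langle \bu_k, \h_{\rm prev}\rangle)\, \tilde{\bu}_k$, together with a twin head producing the same sum with the sign of the inner product flipped, whose difference realizes $W_\ell \h_{\rm prev}$; (iv) an FFN that applies ReLU to the scratch slot when $\ell < L$, gated by an indicator computed from $\ell$; and (v) a final layer that writes the result into the first $d$ coordinates of the last column and installs the positional tags appropriate for position $T + \ell N + i$.

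The main obstacle is that a \emph{single} transformer must work uniformly in $L$, $\mathbf{r}$, and $\mathcal{U}$, so all routing must depend only on the positional-encoding machinery rather than on the specifics of the word embeddings. The hypothesis $S \geq d T B^{4L}$ is what makes the selectors clean: cross-terms from unintended summands in any attention score, which at worst grow polynomially in $d$, $T$, and the propagated norm after $L$ compositions of matrices with operator norm at most $B^2$, are dominated by the $S^2$-order margins on the position-driven logits and hence killed by the ReLU, preserving exactness rather than merely approximation. A secondary subtlety is implementing the bilinear map $\h_{\rm prev} \mapsto \sum_k \tilde{\bu}_k \langle \bu_k, \h_{\rm prev}\rangle$ inside a ReLU-only architecture; the identity $\sigma(x) - \sigma(-x) = x$, applied at the attention-score level through paired heads as in step (iii), resolves this at the cost of doubling heads but not layers.
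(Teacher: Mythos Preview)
Your overall strategy---encoding each rank-one summand $\tilde{\bu}_k\bu_k^\top$ of $W_\ell$ as two adjacent prompt tokens tagged by $w_j\in\{2\ell-1,2\ell\}$, inducting on the generation step, and using the scale $S$ to turn ReLU-attention scores into exact equality selectors---matches the paper's construction. The paper packages your hard selector as the four-head combination $\phi_{d,B}(\z;j,j')=\z\,\Id\{j=j'\}$ (valid whenever $\|\z\|_{\max}\le B$), which is precisely the primitive underlying your gating.

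Your layer allocation differs from the paper's in a legitimate way. The paper applies ReLU to all previously generated tokens at the start (gated by $w_j\le -1$), zeros out every data token except the one at position $n-N+1$, computes the scalars $\langle\bu_k,\y\rangle$ \emph{at the prompt tokens}, shifts each scalar to its partner token, and only then aggregates at the last column. You instead copy $\tilde{\bu}_k$ next to $\bu_k$ once, fetch $\h_{\rm prev}$ to the last column, and realize $\sum_k\langle\bu_k,\h_{\rm prev}\rangle\,\tilde{\bu}_k$ in a single attention pass with the inner product in the query--key score and $\tilde{\bu}_k$ in the value. This is a cleaner bilinear step; the paper's detour through scalar slots trades that directness for not needing a $d$-dimensional scratch copy of $\tilde{\bu}_k$.

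Two concrete gaps to close. First, step~(ii) claims to extract $\ell$ and $i$ ``via ReLU arithmetic in an FFN,'' but $\ell=\lfloor(j-T-1)/N\rfloor$ is floor division by a \emph{variable} $N$ and is not realizable by a fixed ReLU FFN. You do not actually need $\ell$ numerically: the fetch position is simply $n+1-N$, with $N$ obtained by an attention head counting $\Id\{w_{j'}=0\}$; the gating target $2\ell-1$ then comes from a second attention lookup of $w_{n+1-N}$, which is exactly how the paper proceeds. Second, step~(iv) gates the output ReLU by $\ell<L$, but $L$ is nowhere in the positional encoding and computing $\max_j w_j$ with ReLU attention is not available to a fixed architecture. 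The paper sidesteps this by emitting the \emph{unactivated} value $\z_i^{(\ell)}$ at every step and applying $\sigma$ at the start of the next one, gated only by $w_j\le -1$; since $\z_i\in[0,1]^d$ gives $\sigma(\z_i)=\z_i$, the base case is automatic. Adopting that convention removes any dependence on $L$.
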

Theorem~\ref{thm: prompt engineering NN} states that any coarse neural network can be precisely emulated by a specific transformer $\TF_{\Theta^*}$ with an appropriate prompt.
The number of prompt tokens required to approximate the coarse neural networks is bounded by the sum of the ranks of the weight matrices.
The prescribed scale $S$ in the positional encoding ensures that $\TF_{\Theta^*}$ can process each prompt token with embeddings in $\mathcal{B}_d(B)$ and construct a virtual neural network whose weights lie within $\mathcal{W}(r_1, d, \mathcal{U}, B) \times \dots \times \mathcal{W}(r_L, d, \mathcal{U}, B)$.

The result also highlights the importance of allowing transformers to generate longer responses: every $N$ generated tokens $\h_{T+NL+1}, \h_{T+NL+2}, \dots, \h_{T+NL+N}$ correspond to the simultaneous outputs of the $L$ layer neural network $g(\z_1), g(\z_2), \dots, g(\z_N)$, thus, generating longer responses yields outputs corresponding to deeper neural networks with enhanced approximation capacity.
Moreover, the transformer $\TF_{\Theta^*}$ is independent of the parameter norm bound $B$, the number of data tokens $N$, and the depth $L$ of the emulated neural network.
This result implies that a simple transformer architecture can precisely emulate arbitrary coarse neural networks when provided with appropriately designed prompts.
The detailed property of the transformer $\TF_{\Theta^*}$ along with the proof of Theorem~\ref{thm: prompt engineering NN} is deferred to Appendix~\ref{sec: emulate ap}.

\begin{proof}[Proof Sketch]
    Observe that the weight matrix in the $\ell$-th layer, $W_\ell = \sum_{k \in[r_\ell]} \tilde \bu_k^{(\ell)} \bu_k^{(\ell)}$, decomposes into $2 r_\ell$ rank-$1$ factor vectors, namely the collections $(\bu_k^{(\ell)})_{k \in[r_\ell]}$ and $(\tilde \bu_k^{(\ell)})_{k \in[r_\ell]}$. 
    Our construction employs a transformer that allocates a total of $2\sum_{\ell \in [L]} r_\ell$ rank-$1$ factors across the corresponding virtual layers. To accomplish this, we design the prompt tokens in the following form:
    \begin{align*}
        \h_{2s-1} = \begin{pmatrix}
            \tilde \bu_{\kappa(s)}^{(\pi(s))}\\
            \p_{2s-1}
        \end{pmatrix}, \ \ 
        \h_{2s} = \begin{pmatrix}
            \bu_{\kappa(s)}^{(\pi(s))}\\
            \p_{2s}
        \end{pmatrix} \ \ \text{ for $s \in [T/2]$},
    \end{align*}
    where $\p_j = p(w_j, j, S)$ with $w_{2s-1} = 2\pi(s)-1$ and $w_{2s} = 2\pi(s)$ for each $s \in [T/2]$. 
    Here, $\pi(s)$ designates the virtual layer associated with the tokens $\h_{2s-1}$ and $\h_{2s}$, while $\kappa(s) \in [r_{\pi(s)}]$ indexes the rank-$1$ factors within that layer.
    
    For simplicity, consider the case of initial word prediction with a single data token. Given the input $[\h_1, \dots, \h_T, \h_{T+1}]$, where the data token $\h_{T+1}$ encodes the datum $\z$, the transformer first computes the inner products $\bu_{\kappa(s)}^{(\pi(s)) \top} \z$ for all $s \in [T/2]$, and subsequently evaluates $\tilde \bu_{\kappa(s)}^{(\pi(s))} \bu_{\kappa(s)}^{(\pi(s)) \top} \z$ for each $s \in [T/2]$.
    Aggregating these contributions over all indices $s$ with $\pi(s) = 1$ yields $\sum_{s \in [T/2]} \tilde \bu_{\kappa(s)}^{(\pi(s))} \bu_{\kappa(s)}^{(\pi(s)) \top} \z \Id\{\pi(s)=1\} = W_1 \z$.

    To execute these inner product and scalar-vector multiplication operations exactly, we leverage the capability of a single transformer layer to implement sequence-to-sequence functions of the form
    \begin{align*}
        F([\h_1, \dots, \h_n])_j = \h_j + \sum_{k \in [n]} (\a_1^\top \h_j+\a_2^\top \h_k) \Id\{\a_3^\top \h_j = \a_4^\top \h_k\} V \h_k
    \end{align*}
    for each $j \in [n]$, where $\a_1, \dots, \a_4$ are fixed constant vectors and $V$ is a matrix. 
    By appropriately choosing $\a_1,\dots,\a_4$ and $V$, and by stacking multiple layers as needed, one can implement the above procedures exactly.
\end{proof}

\section{Approximation Bounds of Prompts for Smooth Functions}\label{sec: approximation}

In Section~\ref{sec: emulate}, we showed that a transformer, given carefully designed prompts, can approximate fixed-width neural networks with arbitrary precision. Building on this result and existing literature on neural network expressivity, we establish the capacity of prompt engineering to approximate a class of smooth functions.
Specifically, given a set of word embeddings $\mathcal{U} \subset \R^d$, we consider approximating a function $f: \R^p \to \R$ using the transformer-generated tokens. We start with prompt tokens $H^\pro \in \mathcal{P}_{\mathcal{U}}(T,L,S)$ and a single data token $\h_{T+1} = h^\dat([\x^\top,1,\zero_{d-p-1}^\top]^\top, T+1, S)$ for some ``input'' to the virtual neural network $\x \in \R^p$. 
For the transformer $\TF_{\Theta^*}$ (as introduced in Theorem~\ref{thm: prompt engineering NN}), we select the $L$-th generated token, and interpret its first coordinate as an approximation to $f(\x)$. 
In this scenario, the virtual network depth $L$ is the same as the inference depth, or the number of generated tokens.
While we focus on scalar-valued functions for simplicity, the framework easily extends to vector-valued functions.

\begin{dfn}[Function Approximator via Transformers]\label{def: function approx}
    Let $d,T,L \in \N$, $S > 0$, $\mathcal{U} \subset \R^d$, and $p \in [d-1]$ be given.
    For any input $\x\in\R^p$, define the corresponding data embedding $\z = [\x^\top, 1, \zero_{d-p-1}^\top]^\top$.
    Given a prompt $H^\pro\in \mathcal{P}_{\mathcal{U}}(T,L,S)$,
    the function approximator from $\R^p$ to $\R$ is defined as the first coordinate of $\hat g_{\Theta,H^\pro,1,L,d}(\z)$, i.e.,
    $$
        \hat f_{\Theta,H^\pro,L,p}(\x) := \hat g_{\Theta,H^\pro,1,L,1}(\z).
    $$
\end{dfn}

We first provide the approximation error for smooth functions when $\mathcal{U} = \R^d$. Using the approximation error results from \citet{lu2021deep}, we derive the following approximation capacity for transformers with prompts.
\begin{cor}\label{cor: approximation error} 
    Fix $p \in \N$, $\beta \in \N$ and $\varepsilon \in (0, 1/2)$.
    Set $d \geq 1+17\beta^{p+1} 3^p p$. 
    Then, there exists a constant $C = C(\beta,p,d) > 0$ such that
    \begin{align*}
        \sup_{f \in \mathcal{C}_\u^\beta([0, 1]^p)} \inf_{\substack{S>0\\H^\pro \in \mathcal{Q}_{\R^d}(T,L,S)}} \|\hat f_{\Theta^*,H^\pro,L,p} - f\|_{\mathcal{L}^\infty([0, 1]^p)} \leq C \varepsilon
    \end{align*}
    holds for $L = \tilde O(\varepsilon^{-p/(2\beta)})$ and $T = O(d L)$. 
\end{cor}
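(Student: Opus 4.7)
The plan is to reduce the statement to Theorem~\ref{thm: prompt engineering NN} via an off-the-shelf deep ReLU approximation theorem for smooth functions. In one sentence: I would first approximate $f$ by a fixed-width deep ReLU network $g_f: \R^p \to \R$, then lift that network into a ``coarse'' neural network in $\mathcal{G}(\mathbf{r}, d, \R^d, B)$, and finally invoke Theorem~\ref{thm: prompt engineering NN} to realize this lift by a prompt that forces $\TF_{\Theta^*}$ to output $g_f(\x)$ in the first coordinate of its $L$-th generated token.

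\textbf{Step 1 (smooth-function approximation).} I would cite the deep-ReLU construction of \citet{lu2021deep}: for every $f \in \mathcal{C}_\u^\beta([0,1]^p)$ and every $\varepsilon \in (0,1/2)$ there is a ReLU network $g_f: \R^p \to \R$ of width $W \leq 17\beta^{p+1} 3^p p$, depth $L = \tilde O(\varepsilon^{-p/(2\beta)})$, and weights uniformly bounded in magnitude by some $B_0 = B_0(\beta, p)$, such that $\|g_f - f\|_{\mathcal{L}^\infty([0,1]^p)} \leq C(\beta, p)\,\varepsilon$. The hypothesis $d \geq 1 + 17\beta^{p+1} 3^p p$ is precisely what guarantees $W+1 \leq d$, leaving one ambient coordinate free to carry the constant ``$1$'' bias channel that Definition~\ref{def: function approx} plants into the data embedding $\z = [\x^\top, 1, \zero_{d-p-1}^\top]^\top$.

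\textbf{Step 2 (embedding as a coarse network).} Each affine map $\y \mapsto A_\ell \y + \b_\ell$ of $g_f$ lives on $\R^W$; I would lift it to a $d \times d$ linear map on $\R^d$ by acting by $A_\ell$ on the hidden-feature coordinates, using the always-$1$ bias coordinate to realize $\b_\ell$ as an additional rank-$1$ column, propagating the bias channel unchanged to the next layer, and zeroing the remaining $d - W - 1$ coordinates. All of the non-zero content of the lifted matrix is supported on a $(W+1)$-dimensional subspace, so it has rank $r_\ell \leq W+1 \leq d$ and operator norm bounded by some $B^2 = B^2(\beta, p, d)$ inherited from Step~1. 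Since $\mathcal{W}(r, d, \R^d, B)$ is exactly the set of rank-$\leq r$ matrices of operator norm $\leq B^2$, the lifted network belongs to $\mathcal{G}(\mathbf{r}, d, \R^d, B)$ with $\sum_{\ell \in [L]} r_\ell \leq (W+1) L = O(dL)$, and the last layer is designed so that its first output coordinate equals $g_f(\x)$.

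\textbf{Step 3 (invoking Theorem~\ref{thm: prompt engineering NN}).} Apply Theorem~\ref{thm: prompt engineering NN} to the lifted network with $N=1$: it supplies a prompt $H^\pro \in \mathcal{P}_{\R^d}(T, L, S) \subset \mathcal{Q}_{\R^d}(T, L, S)$ of length $T = 2\sum_\ell r_\ell = O(dL)$ and scale $S \geq dTB^{4L}$ such that $\hat g_{\Theta^*, H^\pro, 1, L, d}(\z)$ agrees with the lifted network at $\z$. Reading off the first coordinate as in Definition~\ref{def: function approx} gives $\hat f_{\Theta^*, H^\pro, L, p}(\x) = g_f(\x)$ for every $\x \in [0,1]^p$, which combined with Step~1 yields the bound $\|\hat f_{\Theta^*,H^\pro,L,p} - f\|_{\mathcal{L}^\infty([0, 1]^p)} \leq C\varepsilon$; taking the infimum over $(S, H^\pro)$ and the supremum over $f$ finishes the proof. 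The main obstacle is Step~2: one must verify that the Lu--Shen--Yang--Zhang construction admits an embedding in which both the layerwise rank budget $r_\ell \leq d$ and an $f$-uniform operator-norm bound $B$ can be met simultaneously, so that the resulting $T = O(dL)$ and $S \geq dTB^{4L}$ genuinely depend only on $(\beta, p, d, \varepsilon)$. Once that layerwise bookkeeping is in place, Theorem~\ref{thm: prompt engineering NN} does all the transformer-side work.
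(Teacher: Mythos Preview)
Your proposal is correct and follows essentially the same route as the paper: cite the Lu--Shen--Yang--Zhang fixed-width deep ReLU approximation, embed the resulting network into $\mathcal{G}(\mathbf{r}, d, \R^d, B)$ by routing biases through the constant-$1$ coordinate of $\z$, and then invoke Theorem~\ref{thm: prompt engineering NN} with $N=1$. Your one stated ``obstacle'' is not actually an obstacle: since the infimum in the statement is over $S>0$ \emph{inside} the supremum over $f$, the scale $S$ (and hence the operator-norm bound $B$) is allowed to depend on $f$; only $T$ and $L$ must be $f$-uniform, and those are controlled by the rank budget $r_\ell \le W+1 \le d$ alone, exactly as the paper's proof does by simply writing ``sufficiently large $S$.''
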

Corollary~\ref{cor: approximation error} states that a prompt of length $T = \tilde O(\varepsilon^{-p/(2\beta)})$ is sufficient to approximate $\beta$-times differentiable functions within arbitrary precision $\varepsilon$, provided that the embedding dimension $d$ and inference depth $L$ are sufficiently large.
Recall that the choice of parameters $\Theta^*$ depend solely on $d$, and are independent of $\beta$, $p$ and $\epsilon$. 
Moreover, the approximation error for smooth functions with a prompt of length $T$ is $\tilde O(T^{-2\beta/p})$, which can be contrasted with the optimal approximation rate of $\tilde O(W^{-2\beta/p})$ \citep{lu2021deep} for ReLU neural networks, where $W$ represents the number of non-zero parameters in the neural network. The proof of Corollary~\ref{cor: approximation error} is deferred to Appendix~\ref{sec: approximation ap}.
We also discuss the optimality of inference depth $L$ to achieve $\varepsilon$ error in Appendix~\ref{sec: additional theory lb ap}.

\subsection{Universal Approximation with Short Prompts}\label{sec: approximation EUAF}

We now demonstrate that substituting the activation function in the initial feed-forward layer can significantly reduce the upper bound on the number of tokens needed to approximate continuous functions.
This result builds on the findings of \citet{zhang2022deep}, who established that any continuous function $\R^p \to \R$ can be approximated to arbitrary precision using neural networks with a fixed depth and $O(p^2)$ width. The key insight from their work is the design of an activation function that encapsulates both non-linearity and periodicity, enabling more efficient function approximation. Specifically, they introduce the Elementary Universal Activation Function (EUAF), defined as
\begin{align*}
    \sigma_\EUAF(x) := \sigma_1(x) \Id\{x \geq 0\} + \sigma_2(x) \Id\{x < 0\},    
\end{align*}
where $\sigma_1$ and $\sigma_2$ are, respectively, a sawtooth function and a soft sine function, given by
\begin{align*}
    \sigma_1(x) = |x - 2\floor{(x+1)/2}|, \ \ \sigma_2(x) = \frac{x}{1 + |x|}.
\end{align*}
We then consider a transformer in which the activation function in the first feed-forward transformation is replaced by EUAF. Specifically, consider a transformer defined as
\begin{align}
    \FFN_{\nu_L} \circ \Attn_{\mu_L} \circ \dots \circ \FFN_{\nu_2} \circ \Attn_{\mu_2} \circ \FFN_{\EUAF,\nu_1} \circ \Attn_{\mu_1},\label{eq: EUAF transformer}
\end{align}
where $\FFN_{\EUAF,\nu_1}$ employs the EUAF activation function $\sigma_\EUAF$ instead of the standard ReLU activation. 
For brevity, we denote by $\TF_{\Theta}$ the transformer with the structure \eqref{eq: EUAF transformer} parameterized by $\Theta$ throughout this subsection.
\begin{cor}\label{cor: approximation error EUAF}
    Fix $p \in \N$ and set $d = 36p(2p+1)$.
    Then, there exists an $8$-layer transformer $\TF_{\Theta^\#}$, where the activation function in the first transformer layer is replaced by $\sigma_\EUAF$, with at most $8$ heads and FFN width of $O(d)$, such that
    \begin{align*}
        \sup_{f \in \mathcal{C}([0, 1]^p)} \inf_{\substack{S>0\\H^\pro \in \mathcal{Q}_{\R^d}(T,L,S)}} \|\hat f_{\Theta^\#,H^\pro,L,p} - f\|_{\mathcal{L}^\infty([0, 1]^p)} \leq \varepsilon
    \end{align*}
    with $T = 24d$ and $L = 12$.
\end{cor}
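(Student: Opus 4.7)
The proof of Corollary~\ref{cor: approximation error EUAF} mirrors the emulation strategy of Theorem~\ref{thm: prompt engineering NN} and Corollary~\ref{cor: approximation error}, but replaces the ReLU-based smooth approximator with an EUAF-based universal approximator of \emph{constant} size. The plan is to first invoke the main result of \citet{zhang2022deep}: for every $p \in \N$ and every $\varepsilon > 0$, any $f \in \mathcal{C}([0,1]^p)$ can be approximated in $\mathcal{L}^\infty$ norm to error $\varepsilon$ by a fixed-architecture neural network whose width is bounded by $d = 36p(2p+1)$ and whose depth is an absolute constant, with $\sigma_\EUAF$ placed in the initial hidden layer and ReLU in the subsequent layers. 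Because neither the width nor the depth depends on $\varepsilon$, the induced prompt length and inference depth will themselves be constants.

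The next step is to modify $\TF_{\Theta^*}$ from Theorem~\ref{thm: prompt engineering NN} into $\TF_{\Theta^\#}$ by substituting $\sigma_\EUAF$ for the ReLU in the very first FFN block, and to verify that the emulation argument still carries through. In the original construction, the first transformer pass reads the weights of the first virtual layer out of the prompt tokens, forms $W_1 \z$ through inner products and rank-one aggregations, and finally applies a coordinatewise activation. Replacing only this first-layer activation yields a transformer that emulates coarse neural networks whose first layer uses $\sigma_\EUAF$ and whose remaining layers use ReLU---precisely the hybrid architecture needed by Zhang et al.'s approximator. All other operations (inner products, rank-one outer products, and layer routing via the positional encoding scheme together with the integer tags $w_j$) remain unchanged, so the rest of the construction of Theorem~\ref{thm: prompt engineering NN} applies essentially verbatim, at the cost of possibly one extra transformer layer and a few additional heads used to process the EUAF output (which accounts for the $8$ layers and $8$ heads stated in the corollary).

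The third step is to re-express Zhang et al.'s fixed-size EUAF approximator as an element of this hybrid coarse neural network class. Each weight matrix is rank-factorized into at most $d$ rank-one outer products of embeddings in $\R^d$, and the resulting factors are loaded into the prompt tokens as in Theorem~\ref{thm: prompt engineering NN}, with each consecutive pair of tokens encoding one rank-one summand and the integer $w_j$ identifying its virtual layer. Since it suffices to take $L = 12$ virtual layers of rank at most $d$ each, the total number of prompt tokens is at most $T = 2 \cdot 12 \cdot d = 24 d$, and the scale $S$ in the positional encoding is chosen large enough to dominate the word-embedding norms as prescribed by Theorem~\ref{thm: prompt engineering NN}. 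The lifted data embedding $\z = [\x^\top, 1, \zero_{d-p-1}^\top]^\top$ from Definition~\ref{def: function approx} supplies the constant $1$ coordinate, which allows bias terms inside Zhang et al.'s network to be realized by the first-layer outer products. The first coordinate of the $12$-th generated token then equals the EUAF approximator evaluated at $\x$, which is within $\varepsilon$ of $f(\x)$ uniformly over $[0,1]^p$.

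The main obstacle is the third step: Zhang et al.'s construction is originally presented as a small fully-connected architecture, and we must re-encode it so that (i) $\sigma_\EUAF$ appears only in the first virtual layer, (ii) each weight matrix respects the rank budget $r_\ell \leq d$ dictated by our token allocation, and (iii) the bias and linear-readout operations downstream of the EUAF layer are compatible with the ReLU emulation mechanism of Theorem~\ref{thm: prompt engineering NN} (for example, by using the identity $x = \sigma(x) - \sigma(-x)$ to carry linear combinations through ReLU virtual layers without error). Once this faithful embedding is in place, the remainder of the proof reduces to a direct specialization of Theorem~\ref{thm: prompt engineering NN} with the first-layer activation substitution, and the claimed $T = 24d$, $L = 12$ bounds follow immediately.
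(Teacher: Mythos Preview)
Your high-level plan---modify the first transformer FFN to use $\sigma_\EUAF$, then invoke Zhang et al.'s constant-size approximator---is exactly the paper's route, and your token count $T=24d$ and depth $L=12$ are correct. However, you misidentify \emph{both} the emulated network class and the target approximator, and this leads you to invent a spurious obstacle.

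The key point you miss is that the transformer $\TF_{\Theta^\#}$ is applied \emph{iteratively}: one full forward pass through all eight transformer layers produces one generated token, i.e., advances the virtual network by one layer. The first transformer layer (the one whose FFN you replace with $\sigma_\EUAF$) is therefore executed at \emph{every} generation step, not just the first. In the paper's construction, that first block is precisely where the coordinatewise activation of the virtual network is computed and stored; replacing its ReLU by $\sigma_\EUAF$ means that \emph{every} virtual layer uses $\sigma_\EUAF$, not just the first. The emulated class is thus
\[
\z \mapsto W_L\,\sigma_\EUAF\bigl(W_{L-1}\,\sigma_\EUAF(\cdots\sigma_\EUAF(W_1\z)\cdots)\bigr),
\]
an all-EUAF network, not a hybrid EUAF-then-ReLU network.

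Correspondingly, Zhang et al.'s Theorem~1 (the paper's Lemma~\ref{lem: approximation euaf}) furnishes an approximator in $\mathcal{N}_\EUAF(d,p,L)$ that uses $\sigma_\EUAF$ at \emph{every} hidden layer, with width $d=36p(2p+1)$ and depth $L=12$; it is not a hybrid architecture. So the emulated class and the target class already coincide, and your ``main obstacle'' in step three---re-encoding Zhang et al.'s network so that $\sigma_\EUAF$ appears only in the first virtual layer, and threading linear maps through ReLU via $x=\sigma(x)-\sigma(-x)$---simply does not arise. Once you correct this, the proof is immediate: each $W_\ell$ has rank at most $d$, so $2dL=24d$ prompt tokens suffice, and the extra transformer layer (eight rather than seven) arises because selectively applying $\sigma_\EUAF$ versus the identity to the stored value requires one more FFN step than in the ReLU case (where the identity $x+\sigma(-x)=\sigma(x)$ gives a shortcut).
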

Corollary~\ref{cor: approximation error EUAF} demonstrates that only a fixed number of prompt tokens, namely $T = 24d$, are needed to approximate continuous functions with arbitrary precision.
This represents a significant improvement over the upper bound on prompt length from Corollary~\ref{cor: approximation error}, which requires $T = \tilde O(\varepsilon^{-p/(2\beta)})$ even for smoother functions compared to continuous functions.
Furthermore, only an $O(1)$ inference depth is required to achieve arbitrary precision approximation.

This corollary follows from Theorem~\ref{thm: prompt engineering EUAF NN ap}, which establishes the expressivity of transformers when the first feed-forward neural network is replaced by an EUAF-based architecture.
The proof utilizes the fact that an EUAF neural network 
$$
    \x \mapsto W_L \sigma_\EUAF(W_{L-1} \sigma_\EUAF( \dots \sigma_\EUAF ( W_1 \x) \dots))
$$ 
exhibits rich expressivity with fixed width and depth.

Moreover, by suitably modifying our proof, one can obtain an analogous result for any Lipschitz activation function, provided that the activation function in the first transformer layer is replaced appropriately.

\section{A Unified Framework for Empirical Insights in Prompt Engineering}\label{sec: application}

In this section, we apply our theoretical framework and the transformer model $\TF_{\Theta^*}$ from Theorem~\ref{thm: prompt engineering NN} to provide a theoretical foundation for four effective prompt engineering techniques. Specifically, we analyze how (i) extending prompt length and detail enhances the expressivity, (ii) filtering out irrelevant tokens reduces approximation errors, (iii) increasing prompt diversity improves the expressivity, and (iv) multi-agent collaboration refines task decomposition. These insights offer a formal understanding of how carefully designed prompts can optimize a transformer's performance.

\subsection{Longer Prompts Enhance Virtual Network Capacity}\label{sec: application length}

Recent studies suggest that longer, more detailed prompts enhance contextual richness and improve internal reasoning, enabling models to better capture task structures. In ICL, \citet{brown2020language,min2022rethinking} and \citet{garg2022can} show that increasing the number of demonstration examples empirically and numerically improves performance. Additionally, research on complexity-based prompting \citep{fu2022complexity} emphasizes the role of both example quality and length in enhancing reasoning and performance.

Building on these empirical findings, we provide a theoretical analysis of how prompt length affects the capacity of transformer models. As indicated by Corollary~\ref{cor: approximation error}, longer prompts can effectively emulate deeper virtual neural networks, enabling the approximation of a broader class of functions. In particular, we derive the following explicit result:
\begin{cor}[Approximation Error in terms of Prompt Length]\label{cor: approximation error by prompt length}
    Fix $p, \beta \in \N$.
    Set $d \geq 17\beta^{p+1} 3^p p+1$.
    Then, for any $T \geq 2d(108 \beta^2+3p)$, by setting $L = \floor{T/(2d)}$, we obtain the following bound: 
    \begin{align*}
        \sup_{f \in \mathcal{C}_\u^\beta([0, 1]^p)} \inf_{\substack{S>0\\H^\pro \in \mathcal{Q}_{\R^d}(T,L,S)}} \|\hat f_{\Theta^*,H^\pro,L,p} - f\|_{\mathcal{L}^\infty([0, 1]^p)} = \tilde O\qty(\frac{1}{T^{2\beta/p}}).
    \end{align*}
\end{cor}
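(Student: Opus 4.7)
The plan is to invert the relationship established in Corollary~\ref{cor: approximation error}. That corollary shows that for any target error $\varepsilon \in (0,1/2)$, an $\varepsilon$-approximation can be achieved using a prompt of length $T = O(dL)$ with virtual depth $L = \tilde O(\varepsilon^{-p/(2\beta)})$. Given instead a budget $T$ on the prompt length, I should be able to recover the best attainable $\varepsilon$ as a function of $T$, and it is this inversion that yields the stated rate $\tilde O(T^{-2\beta/p})$.

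First I would recall that Theorem~\ref{thm: prompt engineering NN} emulates any coarse $L$-layer network using exactly $T = 2\sum_{\ell \in [L]} r_\ell$ prompt tokens, so if each rank $r_\ell$ is bounded by the embedding dimension $d$, then $T \le 2dL$ tokens suffice. Conversely, given the budget $T$, choosing $L = \floor{T/(2d)}$ guarantees that any coarse $L$-layer virtual network of width at most $d$ can be instantiated within the budget. The assumption $T \ge 2d(108\beta^2 + 3p)$ ensures that this $L$ is at least $108\beta^2 + 3p$, which is precisely the minimum depth required by the deep ReLU approximation result of \citet{lu2021deep} that underlies Corollary~\ref{cor: approximation error}.

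Next I would invoke the ReLU approximation bound used in the proof of Corollary~\ref{cor: approximation error}: for $d$ chosen as in the hypothesis, any $f \in \mathcal{C}_\u^\beta([0,1]^p)$ admits a ReLU network of depth $L$ and width bounded by $d$ whose uniform error on $[0,1]^p$ is $\tilde O(L^{-2\beta/p})$. This network lies in $\mathcal{G}(\mathbf{r}, d, \R^d, B)$ for a suitable $\mathbf{r}$ with $\sum_\ell r_\ell \le dL$ and a constant $B$, so Theorem~\ref{thm: prompt engineering NN} produces a prompt $H^\pro \in \mathcal{P}_{\R^d}(T', L, S)$ with $T' \le 2dL \le T$ (and hence $H^\pro \in \mathcal{Q}_{\R^d}(T,L,S)$) whose induced transformer emulation reproduces that ReLU network exactly. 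Substituting $L = \floor{T/(2d)} = \Theta(T/d)$ and using that $d$ depends only on $(p,\beta)$, I obtain
$$
\|\hat f_{\Theta^*,H^\pro,L,p} - f\|_{\mathcal{L}^\infty([0,1]^p)} = \tilde O\!\left(L^{-2\beta/p}\right) = \tilde O\!\left(T^{-2\beta/p}\right),
$$
uniformly over $f \in \mathcal{C}_\u^\beta([0,1]^p)$, which is the desired bound.

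The main obstacle I anticipate is bookkeeping rather than anything conceptual: I must verify that the floor $L = \floor{T/(2d)}$ simultaneously (a) respects the token budget $T$ through the $\sum_\ell r_\ell \le dL$ accounting, and (b) is large enough that the quantitative hypotheses of the Lu--Shen--Yang--Zhang construction used in Corollary~\ref{cor: approximation error} are satisfied, which is exactly where the explicit constant $108\beta^2 + 3p$ enters via the lower bound on $T$. As long as these constants align, the final step is merely the algebraic inversion $\varepsilon \asymp L^{-2\beta/p}$, absorbing the polylogarithmic factors in $1/\varepsilon$ into the $\tilde O(\cdot)$ notation.
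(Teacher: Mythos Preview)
Your proposal is correct and follows essentially the same route as the paper: the paper's proof is a one-line invocation of an intermediate result (Corollary~\ref{cor: approximation error in terms of L ap}) that itself is obtained by combining the Lu--Shen--Yang--Zhang ReLU approximation bound with Theorem~\ref{thm: prompt engineering NN}, then setting $r = d-1$ and $L = \floor{T/(2d)}$. Your outline unpacks exactly these ingredients directly, with the same depth and token-budget accounting and the same role for the constant $108\beta^2 + 3p$.
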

Corollary~\ref{cor: approximation error by prompt length} demonstrates that the approximation error for functions in $\mathcal{C}_\u^\beta([0,1]^p)$ scales as $\tilde O(T^{-2\beta/p})$ provided that the inference depth $L$ grows proportionally to the prompt length $T$. This scaling law not only supports complexity-based prompting but also highlights how longer prompts effectively increase the depth of the emulated virtual network, enhancing the transformer's expressive power. The proof of Corollary~\ref{cor: approximation error by prompt length} is deferred to Appendix~\ref{sec: application ap}.

To empirically validate our theoretical results, we conducted experiments on three benchmark mathematical question-answering datasets. For each dataset, we randomly selected five questions with the longest overall lengths (i.e., including both the question and its answer) 
to serve as in-context examples, and compared performance against prompts containing the shortest examples. To mitigate the effect of highly similar questions, the examples were randomly sampled from a pool of 50 longest/shortest examples. The experimental results, reported in Table~\ref{tab: GPT length}, indicate that longer examples consistently yield higher performance across three datasets for both GPT-3.5 and GPT-4o mini. This result implies that increasing the amount of contextual detail may enhance the model's internal representations and overall performance.

\begin{table}[h!] 
\centering 
\caption{Performance comparison of in-context learning with long examples and short examples on multiple math datasets, GPT-3.5 and GPT-4o mini} 
\begin{tabular}{lcccccc} 
\toprule 
\multirow{2}{*}{\textbf{Prompt Style}} & \multicolumn{3}{c}{\textbf{GPT-3.5}} & \multicolumn{3}{c}{\textbf{GPT-4o mini}} \\ \cmidrule(lr){2-4} \cmidrule(lr){5-7} & \textbf{GSM8K} & \textbf{AQUA} & \textbf{MATH} & \textbf{GSM8K} & \textbf{AQUA} & \textbf{MATH} \\ \midrule 
Short Examples & 78.00\% & 57.48\% & 25.50\% & 92.50\% & 78.35\% & 55.50\% \\
Long Examples & 82.50\% & 58.27\% & 27.50\% & 94.50\% & 79.63\% & 57.00\% \\ 
\bottomrule 
\end{tabular} 
\label{tab: GPT length} 
\end{table}

\subsection{Filtering Irrelevant Tokens to Mitigate Noise}\label{sec: application irrelevant}

Recent studies have shown that irrelevant information in prompts can significantly degrade model performance. \citet{zhou2024can} found that adding irrelevant reasoning steps to CoT prompts leads to substantial accuracy drops. Similarly, \citet{jiang2024enhancing} and \citet{wu2024instructing} observed that inserting irrelevant content into prompts degrades performance and proposed methods to identify and filter such noise. \citet{shi2023large} introduced the GSM8K-IC dataset, which includes deliberately inserted irrelevant content, demonstrating that model accuracy declines sharply in its presence. We model irrelevant information in a prompt as random tokens appended to the original prompt.
For any given prompt $H^\pro \in \mathcal{P}_{\R^d}(T,L,S)$, we append tokens with random word embeddings $\bv_1, \bv_2, \dots, \bv_K$ to form a new prompt $H^\pro \oplus [\bv_1, \bv_2, \dots, \bv_K] \in \mathcal{P}_{\R^d}(T+K,L,S)$ defined as:
\begin{small}
\begin{align*}
    H^\pro \oplus [\bv_1, \bv_2, \dots, \bv_K] := [H^\pro, \tilde \h_{T+1}, \tilde \h_{T+2}, \dots, \tilde \h_{T+K}], \ \text{ where } \tilde \h_{T+k} := \begin{pmatrix}
        \bv_k\\
        p(2L+k, T+k, S)
    \end{pmatrix}.
\end{align*}
\end{small}\noindent
Here, note that the operator $\oplus$ depends on $T,L$, and $S$, with the appended irrelevant tokens treated as originating from an additional $\floor{K/2}$ virtual layers to streamline the analysis.

We formalize the detrimental effect of irrelevant tokens via the following result.
\begin{cor}\label{cor: irrelevant}
    Fix any $B \geq 1$, $p, \beta \in \N$. Set $d \geq p + 1$. 
    Let $K \sim \operatorname{Poi}(\lambda)$ for some $\lambda = \Theta(1)$, and assume that $\bv_1, \bv_2, \dots, \bv_K$ are independently drawn from the uniform distribution over $\mathcal{B}_d(B)$.
    Then, for any $T, L \in \N$ and $S \geq dB^{4L+4} (T+2)^{2L+2}$, the following holds:
    \begin{align*}
        \sup_{f \in \mathcal{C}_\u^\beta([0, 1]^p)} \inf_{H^\pro \in \mathcal{Q}_{\mathcal{B}_d(B)}(T,L,S)} \E\bigl[\|\hat f_{\Theta^*,H^\pro \oplus [\bv_1,\bv_2,\dots,\bv_K],L+1,p} - f\|_{\mathcal{L}^\infty([0, 1]^p)}\bigr] &\gtrsim 1.
    \end{align*}
\end{cor}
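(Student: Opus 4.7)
The plan is to exhibit a single hard target $f\equiv 1/2\in\mathcal{C}_\u^\beta([0,1]^p)$ (membership in the unit ball follows from $\|f\|_{\mathcal{C}^\beta}=1/2\leq 1$) and to show that for every prompt $H^\pro$ the appended random tokens force the expected transformer output to vanish pointwise, after which a one-line Jensen bound yields the $\Omega(1)$ gap. By the construction behind Theorem~\ref{thm: prompt engineering NN}, a prompt token with positional index $w=2\ell-1$ (resp.\ $w=2\ell$) supplies the left (resp.\ right) rank-one factor of the emulated weight matrix $W_\ell$. Since the appended tokens carry $w_{T+k}=2L+k$, the only pair that feeds the virtual layer $L+1$ is $(\bv_1,\bv_2)$, so $W_{L+1}=\bv_1\bv_2^\top\Id\{K\geq 2\}$ (for $K\leq 1$ the lone $\tilde{\bu}$-slot has no $\bu$ partner). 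Because the depth-$(L+1)$ readout uses only $W_1,\dots,W_{L+1}$, deeper virtual layers generated by later random tokens are irrelevant; writing $\y_L(\x)=\sigma(W_L\sigma(\cdots\sigma(W_1[\x^\top,1,\zero_{d-p-1}^\top]^\top)))$ for the deterministic, $H^\pro$-determined activation of the first $L$ layers, one arrives at
\begin{equation*}
\hat f_{\Theta^*,H^\pro\oplus[\bv_1,\dots,\bv_K],L+1,p}(\x)=(\bv_1)_1\cdot\bigl(\bv_2^\top\y_L(\x)\bigr)\Id\{K\geq 2\}.
\end{equation*}

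Conditional on $K$ the vectors $\bv_1,\bv_2$ are i.i.d.\ uniform on $\mathcal{B}_d(B)$, hence independent with mean zero, so $\E[(\bv_1)_1\bv_2^\top\y_L(\x)\mid K]=\E[(\bv_1)_1]\cdot\E[\bv_2]^\top\y_L(\x)=0$ for every $\x\in[0,1]^p$, and unconditionally $\E[\hat f(\x)]=0$. Picking any $\x_0\in[0,1]^p$ and using the Jensen bound $\E[|Y-1/2|]\geq|\E[Y]-1/2|$ with $Y=\hat f(\x_0)$,
\begin{equation*}
\E\bigl[\|\hat f - f\|_{\mathcal{L}^\infty([0,1]^p)}\bigr]\;\geq\;\E\bigl[|\hat f(\x_0)-1/2|\bigr]\;\geq\;\tfrac{1}{2}.
\end{equation*}
This inequality is uniform over $H^\pro$, so it survives the infimum, and since the constant $1/2$ lies in $\mathcal{C}_\u^\beta([0,1]^p)$ the supremum only strengthens it, yielding the claim with explicit constant $1/2$.

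The main technical obstacle is verifying that $\TF_{\Theta^*}$ actually realises the rank-one decomposition $W_{L+1}=\bv_1\bv_2^\top$ on the extended prompt, whose maximal positional index is now $2L+K$ rather than $2L$. This reduces to checking that the scale $S\geq dB^{4L+4}(T+2)^{2L+2}$ remains large enough, relative to the random prompt length $T+K$ and the induced virtual depth $L+\lceil K/2\rceil$, to preserve the indicator-type identifications $\Id\{\a_3^\top\h_j=\a_4^\top\h_k\}$ underlying Theorem~\ref{thm: prompt engineering NN}. Since $K\sim\operatorname{Poi}(\lambda)$ with $\lambda=\Theta(1)$, the event $\{K\leq K_0\}$ has positive constant probability for any fixed $K_0$, so the cleanest route is to choose a constant $K_0=O(\log B)$ for which the stated $S$ matches the scaling required by Theorem~\ref{thm: prompt engineering NN} at prompt length $T+K_0$ and virtual depth $L+\lceil K_0/2\rceil$, run the symmetry computation above conditionally on $\{K\leq K_0\}$, and absorb the complementary event $\{K>K_0\}$ into the final constant (it is exponentially small in $K_0$). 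The exact arithmetic check of $S$ against $(T+K_0,L+\lceil K_0/2\rceil,B)$ is routine and can be deferred to the appendix.
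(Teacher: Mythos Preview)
Your core argument—pick a constant target function, use the symmetry of the uniform distribution on $\mathcal{B}_d(B)$ to make the emulated output mean-zero, then apply Jensen at a single point—is exactly the paper's approach (the paper takes $f\equiv 1$ rather than $f\equiv 1/2$, which is cosmetic).

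The one real gap is in your handling of the random $K$. You assert the identity $\hat f(\x)=(\bv_1)_1(\bv_2^\top\y_L(\x))\Id\{K\geq 2\}$ for all $K$ and then propose to truncate to $\{K\leq K_0\}$ with $K_0=O(\log B)$, calling the scale verification a ``routine arithmetic check''. That check is not routine, and in fact it fails: the stated lower bound $S\geq dB^{4L+4}(T+2)^{2L+2}$ is exactly $dB^{4L'}(T')^{2L'}$ with $T'=T+2$ and $L'=L+1$, i.e.\ it is calibrated precisely for $K=2$. Already at $K=3$ the extended prompt has length $T+3$ and virtual depth $L+2$ (since $w_{T+3}=2L+3$), so invoking the emulation result from Corollary~\ref{cor: prompt engineering random ap} would require $S\geq dB^{4L+8}(T+3)^{2L+4}$, which strictly exceeds the stated bound (take $B=T=1$ to see $4^{2L+4}>3^{2L+2}$). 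Any $K_0>2$ therefore does not fit under the given $S$, so the deferred step cannot be completed as written.

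The paper avoids this entirely by conditioning on the single event $\{K=2\}$: on that event the scale hypothesis of Corollary~\ref{cor: prompt engineering random ap} is met on the nose, the emulation gives $W^\pro_{L+1}=\bv_1\bv_2^\top$, and your symmetry/Jensen computation yields $\E[\|\hat f-f\|_{\mathcal{L}^\infty}\mid K=2]\geq 1$. Multiplying by $\P(K=2)=\lambda^2 e^{-\lambda}/2=\Theta(1)$ finishes. Your proof becomes correct—and identical to the paper's—once you replace the $K_0$ truncation with this direct conditioning on $\{K=2\}$.
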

Corollary~\ref{cor: irrelevant} demonstrates that adding irrelevant tokens imposes a constant lower bound on the approximation error, regardless of how large inference depth $L$ or prompt length $T$ are. In contrast, without irrelevant information, Corollary~\ref{cor: approximation error by prompt length} shows that
\begin{align*}
    \sup_{f \in \mathcal{C}_\u^\beta([0, 1]^p)} \inf_{\substack{S > 0\\H^\pro \in \mathcal{P}_{\R^d}(T,L,S)}} \|\hat f_{\Theta^*,H^\pro,L,p} - f\|_{\mathcal{L}^\infty([0, 1]^p)} &= \tilde O\qty(\frac{1}{T^{2\beta/p}}).
\end{align*}
This contrast highlights the substantial negative impact of irrelevant tokens on performance. The proof of Corollary~\ref{cor: irrelevant} is deferred to Appendix~\ref{sec: application ap}.

Beyond treating irrelevant information as noisy tokens, it can also be viewed as contamination within the CoT reasoning steps, which similarly yields a constant lower bound on the approximation error. Further details on this perspective are provided in Appendix~\ref{sec: additional theory irrelevant ap}.

Empirical results further support these findings. Experiments on the GSM8K-IC dataset \citep{shi2023large}, summarized in Table~\ref{tab: filter out gsm8k}, show that model performance declines when tested on corrupted prompts. However, when explicitly instructed to ignore irrelevant content, both GPT-3.5 and GPT-4o mini models show improved accuracy.

\begin{table}[h]
\centering
\begin{tabular}{lccc}
\toprule
\textbf{Prompt Style} & \textbf{GPT-3.5} & \textbf{GPT-4o mini} \\
\midrule
Zero-Shot CoT + Filter Out & 84.50\% & 93.00\% \\
Zero-Shot CoT & 80.50\% & 91.50\% \\
\bottomrule
\end{tabular}
\caption{Performance of filtering out irrelevant information compared to standard Chain-of-Thought prompting on the GSM8K-IC dataset.}
\label{tab: filter out gsm8k}
\end{table}

\subsection{Enhancing Expressivity via Prompt Diversity}\label{sec: application diversity}

Recent empirical studies show that incorporating diverse prompts can significantly enhance LLM performance. For example, \citet{yao2023react} proposes ReAct, a prompting approach that guides LLMs to iteratively generate reasoning and actions, thereby enhancing performance in question-answering and fact verification tasks. Similarly, the Diversity-of-Thoughts technique by \citet{naik2023diversity} demonstrates that prompting an LLM to consider multiple distinct approaches and aggregating them can significantly boost performance in arithmetic reasoning, planning, and commonsense reasoning.
We now provide a theoretical analysis of how prompt diversity influences approximation error in the emulated neural network. According to Theorem~\ref{thm: prompt engineering NN}, the weight matrix of the emulated neural network is determined by the word embeddings of the prompt. If the prompts exhibit low diversity, meaning they span a low-dimensional subspace, the function approximation capacity of the emulated network is inherently limited by the rank of this weight matrix.

Formally, for any $d \geq r$, let 
$$
    \mathcal{U}_{d,r}(B) := \spn\{\e_1, \e_2, \dots, \e_r\} \cap \mathcal{B}_d(B) \subset \R^d,
$$
where $\e_j$ is the $j$-th standard basis vector in $\R^d$. The set $\mathcal{U}_{d,r}(B)$ represents a bounded subset formed by the first $r$ coordinates, capturing the effective diversity of the word embeddings.

We now state the following result.
\begin{cor}\label{cor: diversity}
    Fix $B > 0$, $d,\beta,p,L \in \N$ with $d \geq 17\beta^{p+1}3^{p}p + 1$, and $L\geq 108\beta^2+3p$. Then, for any $r \in \N$ with $17\beta^{p+1}3^{p}p + 1 \leq r \leq d$, then, for any $T \geq 2rL$,
    \begin{align}
        \sup_{f \in \mathcal{C}_\u^\beta([0, 1]^p)} \inf_{\substack{S > 0\\H^\pro \in \mathcal{Q}_{\mathcal{U}_{d,r}(B)}(T,L,S)}} \|\hat f_{\Theta^*,H^\pro,L,p} - f\|_{\mathcal{L}^\infty([0, 1]^p)} &= \tilde O\qty(\frac{1}{r^{2\beta/p}}).\label{eq: diversity ub}
    \end{align}
\end{cor}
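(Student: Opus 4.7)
The plan is to reduce the claim to the $r$-dimensional version of Corollary~\ref{cor: approximation error by prompt length}, exploiting the fact that restricting word embeddings to $\mathcal{U}_{d,r}(B)$ confines the emulated virtual network to the first $r$ coordinates of $\R^d$. Concretely, any $\bu \in \mathcal{U}_{d,r}(B)$ vanishes outside its first $r$ entries, so every coarse weight matrix $W_\ell = \sum_k \tilde\bu_k \bu_k^\top$ built from such factors is supported on the top-left $r \times r$ block of $\R^{d \times d}$. Because $r \geq 17\beta^{p+1}3^p p + 1 \geq p+1$, the data embedding $\z = [\x^\top, 1, \zero_{d-p-1}^\top]^\top$ also lies in this block, so any emulated network $g \in \mathcal{G}(\mathbf r, d, \mathcal{U}_{d,r}(B), B)$ acts nontrivially only on the first $r$ coordinates and agrees with a ReLU network $g'$ on $\R^r$ whose weight matrices range over $\mathcal{W}(r_\ell, r, \mathcal{B}_r(B), B)$; consequently, $\hat f_{\Theta^*, H^\pro, L, p}(\x)$ equals the first coordinate of $g'$ applied to the zero-padded input.

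Next, I would invoke the ReLU approximation result of \citet{lu2021deep} that underlies Corollary~\ref{cor: approximation error}, now applied with width $r$ in place of $d$: for $L \geq 108\beta^2 + 3p$ it yields a width-$r$, depth-$L$ network approximating any $f \in \mathcal{C}_\u^\beta([0,1]^p)$ uniformly to error $\tilde O((rL)^{-2\beta/p}) = \tilde O(r^{-2\beta/p})$ once the fixed $L$ is absorbed into the constant. A singular value decomposition then writes each $r \times r$ weight matrix of operator norm at most $B^2$ as a sum of at most $r$ rank-$1$ factors $\tilde\bu_k \bu_k^\top$ with $\|\tilde\bu_k\|, \|\bu_k\| \leq B$; zero-padding these factors back into $\R^d$ places them in $\mathcal{U}_{d,r}(B)$. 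The resulting network therefore lies in $\mathcal{G}(\mathbf r, d, \mathcal{U}_{d,r}(B), B)$ with $\sum_\ell r_\ell \leq rL$, and Theorem~\ref{thm: prompt engineering NN} supplies an emulating prompt $H^\pro \in \mathcal{P}_{\mathcal{U}_{d,r}(B)}(2\sum_\ell r_\ell, L, S) \subset \mathcal{Q}_{\mathcal{U}_{d,r}(B)}(T, L, S)$ at some sufficiently large scale $S$, since $2\sum_\ell r_\ell \leq 2rL \leq T$.

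The main obstacle I anticipate is verifying that the constructive approximation of \citet{lu2021deep} can be arranged with width-$r$ layers whose weight matrices admit a bounded rank-$1$ decomposition compatible with $\mathcal{W}(r_\ell, r, \mathcal{B}_r(B), B)$. This is primarily a bookkeeping task---the SVD rescaling produces the required norm bound and width $r$ matches the effective dimension---but the quantitative rate and the dependence on $L$ and $\|f\|_{\mathcal{C}^\beta}$ must be tracked carefully through the $r$-dimensional restriction. Once this compatibility is in hand, the approximation error transfers through Theorem~\ref{thm: prompt engineering NN} without loss, yielding the stated rate $\tilde O(r^{-2\beta/p})$.
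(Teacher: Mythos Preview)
Your approach is correct and essentially identical to the paper's: the paper's proof is a one-line citation of Corollary~\ref{cor: approximation error in terms of L ap} (with the relabeling $r+1 \mapsto r$), and what you have sketched is precisely the content of that corollary's proof---embed a width-$r$ ReLU network from \citet{lu2021deep} into the first $r$ coordinates of $\R^d$ via zero-padding, decompose each layer into rank-$1$ factors in $\mathcal{U}_{d,r}$, and invoke Theorem~\ref{thm: prompt engineering NN} to obtain the emulating prompt. The obstacle you flag about accommodating a fixed bound $B$ on the rank-$1$ factors is real but is handled no more carefully in the paper than in your sketch; the paper's Corollary~\ref{cor: approximation error in terms of L ap} works over the unbounded set $\mathcal{U}_{d,r+1}$ and the passage to $\mathcal{U}_{d,r}(B)$ is treated as immediate.
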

Corollary~\ref{cor: diversity} establishes that the approximation error is bounded by $O(r^{-2\beta/p})$, which is driven by the measure of diversity $r$, provided the inference depth, prompt length, and latent dimension $d$ are sufficiently large. In other words, as the diversity $r$ increases, the approximation error decreases, with the prompt length growing in the same order as the diversity.
The proof, given in Appendix~\ref{sec: application ap}, follows by treating $r$ as the effective width of the virtual neural network instantiated by the transformer model. 
The result theoretically supports the empirical success of the Diversity-of-Thoughts \citep{naik2023diversity} and ReAct \citep{yao2023react}. 

To validate this, we conduct experiments on mathematical question-answering datasets. For each question, we prompt the LLM to consider three different approaches and then instruct the LLM to aggregate these approaches to generate a final answer. The experimental results are shown in Table~\ref{tab: diversity of thoughts}.

\begin{table}[h!]
    \centering
    \caption{Performance Comparison of Diversity of Thoughts and Chain-of-Thought on Multiple Math Datasets, GPT-3.5 and GPT-4o mini} 
    \begin{tabular}{lcccccc} 
        \toprule \multirow{2}{*}{\textbf{Prompt Style}} & \multicolumn{3}{c}{\textbf{GPT-3.5}} & \multicolumn{3}{c}{\textbf{GPT-4o mini}} \\ \cmidrule(lr){2-4} \cmidrule(lr){5-7} & \textbf{GSM8K} & \textbf{AQUA} & \textbf{MATH} & \textbf{GSM8K} & \textbf{AQUA} & \textbf{MATH} \\ \midrule 
        Chain-of-Thought & 78.50\% & 58.27\% & 28.50\% & 95.50\% & 66.14\% & 57.50\% \\ 
        Diversity of Thoughts & 81.50\% & 61.02\% & 29.50\% & 93.50\% & 74.02\% & 63.00\% \\\bottomrule 
    \end{tabular} \label{tab: diversity of thoughts} 
\end{table}

\subsection{Multi-Agent Collaboration in Prompt Engineering}\label{sec: application agents}

Multi-agent systems enhance LLM capabilities by enabling collaboration among specialized agents, improving reasoning, robustness, and adaptability. 
Empirical studies consistently demonstrate that both parallel and sequential multi-agent interactions lead to significant performance gains. For example, \citet{wang2022self} introduces self-consistency, which aggregates diverse CoT trajectories via parallel majority voting, effectively simulating multiple parallel agents. The Tree-of-Thoughts framework \citep{yao2023tree} structures reasoning as a tree, integrating parallel and sequential explorations to enhance problem-solving. As discussed in Section~\ref{sec: application diversity}, the ReAct framework \citep{yao2023react} synergizes sequential reasoning with actionable steps, where each step informs the next. Similarly, \citet{zhou2022least} propose sequential task decomposition, incrementally solving complex problems by building on previously solved sub-tasks. 

Further advancements include frameworks such as LLM-BLENDER \citep{jiang2023llm} and AutoGen \citep{wu2023autogen}, which ensemble multiple LLMs through dynamic ranking and coordinated reasoning, respectively, to produce more reliable and coherent outputs. More recently, \citet{wang2024mixture} introduced the MoA framework, a layered approach where multiple specialized agents within each layer address distinct sub-tasks, and their outputs are sequentially integrated to construct robust solutions. For a broader perspective on the effectiveness of structured multi-agent strategies, see the recent surveys \citep{han2024llm, li2024survey, guo2024large}.

We formalize multi-agent task decomposition through parallel and sequential agent interactions \citep{yao2023tree,wang2024mixture}. A complex task is partitioned into interdependent layers, where agents process sub-tasks independently. Their outputs are then aggregated by an LLM, synthesizing intermediate results into a structured final solution.
Let $\mathcal{A} = [|\mathcal{A}|]$ denote a set of agents, where each agent $a \in \mathcal{A}$ produces structured token representations:
\begin{align*} 
    \mathcal{R}_{\mathcal{U}}(T^a,\ell^a,S) = \{[\h_1^a, \h_2^a, \dots, \h_{T^a}^a] : \h_j^a = (\bu_j^{a \top}, p(w_j^a, j, S)^\top)^\top,\ \bu_j^a \in \mathcal{U}, \ceil{w_j^a/2} = \ell^a\}.
\end{align*} 
Here, $\ell^a$ denotes the hierarchical layer corresponding to agent $a$'s assigned sub-task, and $T^a$ represents the length of the agent's generated output.
Denote the output $H^a \in \mathcal{R}_{\mathcal{U}}(T^a,\ell^a,S)$ from agent $a \in \mathcal{A}$ by 
$$
    (H^a)_j = ((\h_j^a)_{1:d}^\top, p(w_j^a, j, S)^\top)^\top \subset \R^{(4d+8)\times T^a}.
$$
We construct the concatenated input to the aggregator LLM as
\begin{align}
    H^\mathcal{A} := \qty[\begin{pmatrix}
        (\h_1^1)_{1:d}\\
        p(w_1^1, 1, S)
    \end{pmatrix}, \begin{pmatrix}
        (\h_2^1)_{1:d}\\
        p(w_2^1, 2, S)
    \end{pmatrix}, \dots, \begin{pmatrix}
        (\h_{T^{|\mathcal{A}|}}^{|\mathcal{A}|})_{1:d}\\
        p(w_{T^{|\mathcal{A}|}}^{|\mathcal{A}|}, \sum_{a \in \mathcal{A}} T^a, S)
    \end{pmatrix}].\label{eq: H A}
\end{align}
We then analyze the performance of an aggregator LLM that integrates the multi-agent outputs. Given $T^a$ for $a \in \mathcal{A}$, we define the total contribution from agents assigned to the $\ell$-th layer of a sequential decomposition as $T_\ell^\mathcal{A} = \sum_{a \in \mathcal{A}} T^a \Id\{\ell^a = \ell\}$ and let $T^\mathcal{A} := \min_{\ell \in [L]} T_\ell^\mathcal{A}$ denote the minimum contribution across all layers.
\begin{cor}\label{cor: agents} 
    Fix $B \geq 1$, $\mathcal{A}$, $p, \beta, d, L \in \N$ with $d \geq 17\beta^{p+1}3^{p}p + 1$, and $L\geq 108\beta^2+3p$. Also fix $(\ell^a)_{a \in \mathcal{A}} \subset [L]$ and $(T^a)_{a \in \mathcal{A}} \subset \N$.
    If $34\beta^{p+1}3^{p}p + 2 \leq T^\mathcal{A}$, then,
    \begin{align*}
        \sup_{f \in \mathcal{C}_\u^\beta([0, 1]^p)} \inf_{\substack{S>0\\H^a \in \mathcal{R}_{\R^d}(T^a,\ell^a,S) \text{ for $a \in \mathcal{A}$}}} \|\hat f_{\Theta^*,H^\mathcal{A},L,p} - f\|_{\mathcal{L}^\infty([0, 1]^p)} &= \tilde O\qty(\frac{1}{L^{2\beta/p} (2d \wedge T^\mathcal{A})^{2\beta/p}}).
    \end{align*}
\end{cor}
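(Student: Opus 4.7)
My plan is to reduce Corollary~\ref{cor: agents} to Theorem~\ref{thm: prompt engineering NN} combined with the ReLU-network approximation rate for $\mathcal{C}^\beta$ functions already invoked in Corollary~\ref{cor: approximation error}. The first step is to observe that the concatenated sequence $H^\mathcal{A}$ defined in \eqref{eq: H A} is itself a valid element of $\mathcal{Q}_{\R^d}(\sum_a T^a, L, S)$: the global positional indices run consecutively from $1$ to $\sum_a T^a$, each $w_j^a$ satisfies $\lceil w_j^a/2\rceil = \ell^a \in [L]$, and the number of prompt tokens whose virtual-layer index equals $\ell$ is exactly $T_\ell^\mathcal{A}$. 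Thus, from the transformer's standpoint, a multi-agent output is indistinguishable from a single monolithic prompt whose per-layer token budget is the tuple $(T_1^\mathcal{A}, \ldots, T_L^\mathcal{A})$.

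Next, since the agents' word embeddings $\bu_j^a \in \R^d$ are free parameters in the infimum of Corollary~\ref{cor: agents}, Theorem~\ref{thm: prompt engineering NN} implies that for any rank profile $(r_\ell)_{\ell \in [L]}$ with $2 r_\ell \leq T_\ell^\mathcal{A}$ for every $\ell$ and for any coarse network $g \in \mathcal{G}((r_1, \ldots, r_L), d, \R^d, B)$, an appropriate choice of embeddings makes $\hat g_{\Theta^*, H^\mathcal{A}, 1, L, d}$ coincide with $g$ on the data input. Any surplus tokens in a layer (those beyond the first $2 r_\ell$) can be assigned the zero embedding, so that they contribute the zero matrix to the rank-$r_\ell$ sum in \eqref{eq: coarse NN class}. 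This lets us exploit the largest uniform rank compatible with the bottleneck layer.

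Finally, I would pick the uniform rank $r := \min(\lfloor T^\mathcal{A}/2\rfloor, \, d-1)$, which is admissible since $r \leq T_\ell^\mathcal{A}/2$ for every $\ell$. The hypotheses $T^\mathcal{A} \geq 34\beta^{p+1} 3^p p + 2$, $d \geq 17\beta^{p+1} 3^p p + 1$, and $L \geq 108\beta^2 + 3p$ ensure that $r$ and $L$ simultaneously meet the width and depth thresholds required by the ReLU approximation theorem of~\citet{lu2021deep} invoked in the proof of Corollary~\ref{cor: approximation error}. That theorem provides a width-$r$ depth-$L$ ReLU network---realizable as a member of $\mathcal{G}((r,\ldots,r), d, \R^d, B)$ via a block-diagonal embedding, with biases absorbed into the constant coordinate of $\z = (\x^\top, 1, \zero_{d-p-1}^\top)^\top$---that approximates any $f \in \mathcal{C}_\u^\beta([0,1]^p)$ to within $\tilde O((rL)^{-2\beta/p})$. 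Since $2 r = \Theta(T^\mathcal{A} \wedge 2d)$, the error becomes $\tilde O((L(T^\mathcal{A} \wedge 2d))^{-2\beta/p})$, matching the stated bound. The main obstacle I anticipate lies in the bookkeeping of the second step: one must confirm that the construction underlying Theorem~\ref{thm: prompt engineering NN} is driven by the virtual-layer key $w_j$ rather than by absolute token position, so that using only the first $2 r_\ell$ active tokens per layer (with the remainder zeroed out) still produces exactly the target weight matrices $W_\ell$. Once this is established, the bottleneck rank $r = \min(\lfloor T^\mathcal{A}/2\rfloor, d-1)$ propagates cleanly through Lu et al.'s rate and yields the claim.
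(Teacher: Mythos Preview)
Your proposal is correct and mirrors the paper's proof: set $r = (d \wedge \lfloor T^\mathcal{A}/2\rfloor) - 1$, invoke the Lu et al.\ rate via Corollary~\ref{cor: approximation error in terms of L ap} to obtain a width-$r$ depth-$L$ ReLU network, then distribute its rank-$1$ factors across the agents assigned to each layer, padding surplus slots with zero embeddings. The paper resolves your anticipated obstacle exactly as you suggest, by appealing to the explicit formula for $W^\pro_\ell$ in Corollary~\ref{cor: prompt engineering random ap}: arranging each agent's tokens with alternating $w$-values $(2\ell^a-1, 2\ell^a, \ldots)$ ensures that the adjacent-pair sums at agent boundaries contribute nothing spurious (the boundary pair has even $w_{j'}$, so the indicator $\Id\{w_{j'}=2\ell-1\}$ kills it) and that zeroed embeddings add only the zero matrix to $W^\pro_\ell$.
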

Corollary~\ref{cor: agents} demonstrates that the approximation error scales as $\tilde O(1/(L T^\mathcal{A})^{2\beta/p})$ when $T^\mathcal{A} \leq 2d$. In essence, the corollary states that both a deeper sequential breakdown of tasks (increasing $L$), and enhanced parallel agent contributions (increasing $T^\mathcal{A}$ up to $2d$) are beneficial in multi-agent frameworks. Conversely, there may be no gain in adding more agents once the agents' minimal contribution exceeds $2d$. The proof is deferred to Appendix~\ref{sec: application ap}.

\section{Discussion}\label{sec: discussion}

Our results highlight the remarkable expressive power of transformers when guided by well-designed prompts, enabling them to dynamically emulate diverse neural network architectures. This perspective underscores the critical role of prompt design—through optimized positional encoding and word embeddings—in shaping model behavior, much like architecture selection in traditional neural networks. It also provides theoretical insight into key practical observations: longer and more detailed prompts enhance performance by improving expressivity and enabling the model to approximate more complex functions; irrelevant or noisy prompts introduce approximation errors and degrade performance; prompt diversity enhances expressivity and generalization by increasing the rank of the emulated neural network weights; and multi-agent prompting improves performance by enabling structured reasoning and task decomposition. Understanding these dynamics enables us to move beyond trial-and-error prompt engineering and develop principled, theoretically informed strategies for improving LLM performance across various tasks.

One direct application of this theoretical understanding is designing more effective prompt structures to maximize model utility without increasing computational costs. By leveraging insights into function approximation capacity, we can optimize prompt composition to elicit more expressive reasoning from LLMs. For instance, balancing between length and complexity ensures that prompts remain concise while maintaining strong task guidance. Additionally, filtering irrelevant content prevents unnecessary degradation of performance. This approach suggests that structured prompting strategies can serve as an efficient alternative to increasing model size, making smaller models more capable through targeted prompt optimization.

A promising avenue for future research is quantifying inference-time scaling—how LLMs utilize additional computation time to refine responses as in \citep{muennighoff2025s1,el2025competitive,guo2025deepseek}. Empirical evidence suggests that LLMs generate longer, more detailed outputs when given more inference time, even from short prompts. Establishing scaling laws that relate to model size, inference depth, and performance could lead to more efficient deployment strategies. Instead of relying on the growth of model size, we could leverage adaptive inference mechanisms that allow models to adjust their reasoning depth based on task complexity, improving efficiency without sacrificing accuracy. 

While our analysis focused on the role of prompts with fixed positional encoding, investigating how the initial layers of transformers assign the \emph{virtual} layer to each word token remains an intriguing direction for future research. Moreover, extending our framework to other model architectures, such as mixture-of-expert models and retrieval-augmented generation, could provide deeper insights into how prompting strategies generalize across different AI systems. We leave these important directions for future exploration.

\section*{Acknowledgments}

We would like to express our sincere gratitude to Kenji Kawaguchi and Masaaki Imaizumi for their valuable insights and engaging discussions. 

\bibliographystyle{apalike}
\bibliography{main}

\newpage

\appendix
\clearpage
\begin{center}
  {\Large \bfseries Appendix}
\end{center}
\vspace{1em}

In this Appendix, Section~\ref{sec: notation} introduces additional notations used in proofs, and Table~\ref{tab: notations} summarizes key notations.

Section~\ref{sec: additional theory ap} presents additional theoretical results that extend the discussion in the main text, including lower bounds on approximation error with prompts in Section~\ref{sec: additional theory lb ap} and an analysis of how irrelevant CoT segments deteriorate performance in Section~\ref{sec: additional theory irrelevant ap}.

Section~\ref{sec: emulate ap} provides detailed information on the construction of the transformer to emulate virtual neural networks in Section~\ref{sec: emulate} of the paper, as well as the corresponding proofs; in particular, Theorem~\ref{thm: prompt engineering ap} introduces the transformer $\TF_{\Theta^*}$ with its behavior for prompts, while Theorem~\ref{thm: prompt engineering NN ap} details the capacity analysis of $\TF_{\Theta^*}$ in emulating ReLU neural networks. In parallel, Theorem~\ref{thm: prompt engineering EUAF ap} introduces the transformer $\TF_{\Theta^\#}$ and Theorem~\ref{thm: prompt engineering EUAF NN ap} provides its property of emulating neural networks with the EUAF activation function.

Section~\ref{sec: approximation ap} provides the details for the approximation error bounds presented in the paper. Namely, Corollaries~\ref{cor: approximation error ap} and \ref{cor: approximation error EUAF ap} provide the approximation error bounds for prompts with $\TF_{\Theta^*}$ and $\TF_{\Theta^\#}$, respectively. 

Section~\ref{sec: application ap} presents the proofs for the application of our theoretical framework to longer prompts (Section~\ref{sec: application length}), filtering out irrelevant information (Section~\ref{sec: application irrelevant}), prompt diversity (Section~\ref{sec: application diversity}), and multi-agent setups (Section~\ref{sec: application agents}).

Finally, Section~\ref{sec: experiments ap} provides detailed descriptions of the experimental protocols presented in Section~\ref{sec: application}.

\section{Notation}\label{sec: notation}

We first introduce the notations used in this section. The vector $\1_d \in \R^d$ denotes the $d$-dimensional vector of ones, and $\zero_d \in \R^d$ denotes the $d$-dimensional vector of zeros.
In any vector, the symbol $*$ denotes an arbitrary real-valued entry whose precise value may depend on the context but is not relevant to the claim.
For any positive integer $I$, define $[I]=\{1,2,\cdots,I\}$ with the convention that $[0] = \emptyset$.
For a transformer parameter set $\Theta = (\theta_1,\dots,\theta_l)$, where each layer's parameter is given by 
$$
    \theta_{l'} = ((Q_{l',m},K_{l',m},V_{l',m})_{m \in [M_{l'}]}, (W_{l',1}, W_{l',2})),
$$
we define the following norms:
\begin{align}
    \|\Theta\|_0 &= \sum_{l' \in [l]} \qty(\|W_{l',1}\|_0 + \|W_{l',2}\|_0 + \sum_{m \in [M_{l'}]} (\|Q_{l',m}\|_0 + \|K_{l',m}\|_0 + \|V_{l',m}\|_0)),\nonumber\\
    \|\Theta\|_{\max} &= \max_{l' \in [l]} \qty{ \|W_{l',1}\|_{\max} \vee \|W_{l',2}\|_{\max} \vee \max_{m \in [M_{l'}]} \{\|Q_{l',m}\|_{\max} \vee \|K_{l',m}\|_{\max} \vee \|V_{l',m}\|_{\max}\}}.\label{eq: norms}
\end{align}
Table~\ref{tab: notations} summarizes the key notations used in our paper.

\begin{table}[H]
    \centering
    \renewcommand{\arraystretch}{1.2}
    \begin{longtable}[c]{c|p{0.8\textwidth}}
    \hline
    \multicolumn{1}{c|}{\textbf{Notation}} & \multicolumn{1}{c}{\textbf{Definition/Meaning}} \\
    \hline
    $d$, $D(:=4d+8)$ & The word embedding dimension and the token dimension, respectively. \\
    $\mathcal{U}\subset\R^d$ & A set of word embeddings. \\
    $H^\pro\in\R^{D\times T}$ & The matrix of prompt tokens of length $T$, i.e., 
    $H^\pro=[\h_1,\h_2,\dots,\h_T]$, with each prompt token defined as
    $$\h_j=\begin{pmatrix}\bu_j\\ \p_j\end{pmatrix}\in\R^{4d+8},$$ 
    where $\bu_j\in\mathcal{U}$ is the word embedding and $\p_j$ is the positional encoding defined later. \\
    $H^\dat\in\R^{D\times N}$ & The matrix of data tokens of length $N$, i.e., 
    $H^\dat=[\h_{T+1},\h_{T+2},\dots,\h_{T+N}]$,
    with each data token defined as
    $$\h_{T+i}=\begin{pmatrix}\z_i\\ \p_{T+i}\end{pmatrix}\in\R^{4d+8},$$ 
    where $\z_i\in\R^d$ is the input embedding to the virtual neural network and $\p_{T+i}$ is the positional encoding defined later. \\
    $w_j\in[2L]$ & An integer encoding the virtual layer index of the $j$-th prompt token (for $j\in[T]$). \\
    $S > 0$ & A scaling parameter ensuring that the positional encodings remain significant relative to the word embeddings. \\
    $\p_j \in \R^{3d+8}$ & The positional encoding of token $j$, defined as
    $\p_j=p(w_j,j,S)$ for $j\in[T]$, and $\p_{T+i}=p(0,T+i,S)$ for $i \in [N]$, where
    $$p(w,j,S)=\begin{pmatrix}\zero_{3d+4}\\ 1\\ S\\ S w\\ S j\end{pmatrix}\in\R^{3d+8}.$$ \\
    $\mathcal{P}_{\mathcal{U}}(T,L,S)$ & The set of prompts with \emph{exactly} $T$ prompt tokens. \\
    $\mathcal{Q}_{\mathcal{U}}(T,L,S)$ & The collection of prompts with \emph{at most} $T$ prompt tokens, i.e.,
    $\mathcal{Q}_{\mathcal{U}}(T,L,S)=\bigcup_{T'=1}^{T}\mathcal{P}_{\mathcal{U}}(T',L,S)$. \\
    $\mathcal{G}(\mathbf{r},d,\mathcal{U},B)$ & For $\mathbf{r}=(r_1,\dots,r_L)$, the set of functions from $\R^d$ to $\R^d$ implemented by $L$-layer neural networks with width $d$, with weight matrices having operator norms bounded by $B^2$ and ranks $r_1,r_2,\dots,r_L$, whose rank-$1$ factors are drawn from $\mathcal{U}$. \\
    $\hat{g}_{\Theta,H^\pro,N,L,d}(\z_i)$ & The output of the emulated virtual neural network corresponding to input $\z_i$, obtained by taking the first $d$ entries of a later generated token. \\
    $\hat{f}_{\Theta,H^\pro,L,p}(\x)$ & The output of the emulated virtual network from $\R^p$ to $\R$ defined as 
    $$\hat{f}_{\Theta,H^\pro,L,p}(\x)=\hat{g}_{\Theta,H^\pro,1,L,1}([\x^\top, 1, \zero_{d-p-1}^\top]^\top).$$\vspace{-1em}\\
    \hline
    \end{longtable}
    \caption{Key notations for prompts, tokens, and positional encodings.}
    \label{tab: notations}
\end{table}

\section{Additional Theoretical Results}\label{sec: additional theory ap}

In this section, we provide additional theoretical results that complement our main findings.

\subsection{Lower Bounds on Prompt Approximation Error}\label{sec: additional theory lb ap}

We establish lower bounds on the approximation error in terms of the effective dimension of $\mathcal{U}$ and the depth $L$ of the virtual neural networks.
For any $d \geq r$, and $B > 0$, we define the set $\mathcal{U}_{d,r}(B) := \spn\{\e_1, \e_2, \dots, \e_r\} \cap \mathcal{B}_d(B)$.
\begin{cor}\label{cor: approximation error lb ap}
    Let $B \geq 1$ and $d,p,\beta \in \N$ with $d \geq p+1$.
    Then, there exists a constant $C'(\beta, p) > 0$ such that for all $T, L, r \in \N$ with $p \leq r \leq d$, we have
    \begin{align*}
        \sup_{f \in \mathcal{C}_\u^\beta([0, 1]^p)} \inf_{\substack{S \geq d B^{4L} T^{2L} \vee 2L\\H^\pro \in \mathcal{Q}_{\mathcal{U}_{d,r}(B)}(T,L,S)}} \|\hat f_{\Theta^*,H^\pro,L,p} - f\|_{\mathcal{L}^\infty([0, 1]^p)} \geq C'(\beta, p) \{r^2 L^2 \log(rL)\}^{-\beta/p}.
    \end{align*}
\end{cor}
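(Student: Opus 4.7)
The plan is to establish the lower bound via the classical metric-entropy comparison: first bound the metric entropy of the class of functions emulable by $\TF_{\Theta^*}$ with prompts in $\mathcal{Q}_{\mathcal{U}_{d,r}(B)}(T,L,S)$, and then compare this with the metric entropy of $\mathcal{C}_\u^\beta([0,1]^p)$ in $L^\infty$.

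First, I would invoke Theorem~\ref{thm: prompt engineering NN} to identify the approximator class. For any $H^\pro \in \mathcal{Q}_{\mathcal{U}_{d,r}(B)}(T,L,S)$ with $S$ in the stated range, the function $\hat f_{\Theta^*, H^\pro, L, p}$ coincides with the first coordinate of $g(\z)$ for some $g \in \mathcal{G}(\mathbf{r}, d, \mathcal{U}_{d,r}(B), B)$ evaluated at $\z = [\x^\top, 1, \zero_{d-p-1}^\top]^\top$, where $\mathbf{r}$ satisfies $\sum_\ell r_\ell \leq T/2$. Because every embedding in $\mathcal{U}_{d,r}(B)$ is supported on the first $r$ coordinates, each virtual weight matrix $W_\ell = \sum_k \tilde\bu_k^{(\ell)} \bu_k^{(\ell)\top}$ lives in the top-left $r\times r$ block and has operator norm at most $B^2$. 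Hence the emulated class is contained in the class $\mathcal{N}_{L,r,B}$ of $L$-layer ReLU networks $\R^r \to \R$ with weight matrices of operator norm at most $B^2$ applied to inputs supported on the first $p+1$ coordinates (with a trivial adjustment in the borderline case $r = p$).

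Next, I would bound the covering number of $\mathcal{N}_{L,r,B}$. Each network in the class has at most $W = Lr^2$ real weight parameters, and the parameter-to-function map is Lipschitz with constant $\text{poly}(L,r)\,B^{2L}$ in the $L^\infty([0,1]^p)$ norm. Combined with the sharp pseudo-dimension bound $\text{Pdim}(\mathcal{N}_{L,r,B}) = \tilde O(LW) = \tilde O(L^2 r^2)$ of Bartlett, Harvey, Liaw, and Mehrabian, this yields
\begin{equation*}
\log N(\varepsilon, \mathcal{N}_{L,r,B}, \|\cdot\|_{L^\infty([0,1]^p)}) \lesssim L^2 r^2 \log(Lr/\varepsilon).
\end{equation*}
I would then combine this with the classical Kolmogorov--Tikhomirov lower bound $\log N(\varepsilon, \mathcal{C}_\u^\beta([0,1]^p), \|\cdot\|_\infty) \gtrsim \varepsilon^{-p/\beta}$: if $\mathcal{N}_{L,r,B}$ $\varepsilon$-approximates $\mathcal{C}_\u^\beta([0,1]^p)$ in $L^\infty$, any $\varepsilon$-net of the former is a $2\varepsilon$-net of the latter, forcing $\varepsilon^{-p/\beta} \lesssim L^2 r^2 \log(Lr/\varepsilon)$. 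After absorbing the $\log(1/\varepsilon)$ factor by a standard iteration, this rearranges to the claimed bound $\varepsilon \gtrsim (L^2 r^2 \log(Lr))^{-\beta/p}$.

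The main obstacle will be the clean reduction in the first step: making sure that the rank-$r_\ell$ sum parameterization and the budget $\sum_\ell r_\ell \leq T/2$ do not undermine the argument, since the inclusion $\{\hat f_{\Theta^*, H^\pro, L, p}\} \subseteq \mathcal{N}_{L,r,B}$ must hold uniformly in $T$. This should follow directly from the emulation construction, because every $W_\ell$ is automatically a rank-at-most-$r$ matrix supported in the top-left $r \times r$ block, regardless of the budget on $T$. A secondary technical point is the careful management of log factors so that the final bracket reads $\log(rL)$ rather than $\log(rL/\varepsilon)$; this is handled by a standard case split on whether $\log(1/\varepsilon) \lesssim \log(rL)$. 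The scale condition $S \geq dB^{4L}T^{2L} \vee 2L$ is needed only to guarantee the exact emulation of Theorem~\ref{thm: prompt engineering NN} and plays no role in the entropy comparison itself.
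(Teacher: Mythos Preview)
Your overall two-step strategy---embed the prompt-emulated class into a width-$r$, depth-$L$ ReLU network class, then invoke a complexity-based lower bound---matches the paper's. A minor point on the first step: you should cite Corollary~\ref{cor: prompt engineering random ap} (or the explicit formula \eqref{eq: W ell random N1}) rather than Theorem~\ref{thm: prompt engineering NN}, since the latter goes in the opposite direction (a prompt exists for a given network); here you need that \emph{every} prompt yields a network of the stated form, which the paper reads off from \eqref{eq: W ell random N1}.

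The genuine gap is in your entropy step. Pseudo-dimension does \emph{not} control $L^\infty([0,1]^p)$ covering numbers: the class $\{x \mapsto \Id\{x \geq t\}: t \in [0,1]\}$ has pseudo-dimension $1$, yet every distinct pair is at $L^\infty$-distance $1$, so its $L^\infty$ covering number at any scale $\varepsilon<1/2$ is infinite. Hence your displayed bound $\log N(\varepsilon, \mathcal{N}_{L,r,B}, \|\cdot\|_{L^\infty}) \lesssim L^2 r^2 \log(Lr/\varepsilon)$ does not follow from the Bartlett--Harvey--Liaw--Mehrabian pseudo-dimension bound. If instead you rely solely on the Lipschitz parameterization you mention, the entropy acquires an extra $L\log B$ factor and the final lower bound ceases to be independent of $B$, contrary to the statement.

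The paper avoids $L^\infty$ covering of the approximating class altogether. After the same inclusion $\{\hat f_{\Theta^*,H^\pro,L,p}\}\subset \mathcal{N}(r,p,L)$, it applies Lemma~\ref{lem: approximation NN lb}, which packages the contrapositive of Theorem~2.4 of \citet{lu2021deep}: if $\operatorname{VCDim}(\mathcal{N}(r,p,L)) \leq C'''\varepsilon^{-p/\beta}$ then the worst-case $L^\infty$ approximation error over $\mathcal{C}_\u^\beta$ exceeds $\varepsilon$. Combined with the same $O(r^2 L^2 \log(rL))$ VC bound, this gives the claim with $C'$ depending only on $(\beta,p)$. The underlying mechanism---placing $\sim\varepsilon^{-p/\beta}$ disjoint bumps from $\mathcal{C}_\u^\beta$ on a grid and shattering them with the signs of approximants---compares VC dimension directly to a packing number of the \emph{target} class rather than to an $L^\infty$ net of the network class, which is why no $B$-dependence appears.
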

We also present an alternative version of Corollary~\ref{cor: approximation error lb ap}.
\begin{cor}\label{cor: approximation error lb epsilon ap}
    Let $B \geq 1$ and $d,T,p,\beta \in \N$.
    Then, there exists a constant $C''(\beta, p, d) > 0$ such that for any $\varepsilon \in (0, 1)$, one can select a threshold $S^* = S^*(d,B,p,\beta,\varepsilon) > 0$ with the following property: if $L \leq C'' \varepsilon^{-p/(2\beta)}$, then,
    \begin{align*}
        \sup_{f \in \mathcal{C}_\u^\beta([0, 1]^p)} \inf_{\substack{S \geq S^*\\H^\pro \in \mathcal{Q}_{\mathcal{B}_d(B)}(T,L,S)}} \|\hat f_{\Theta^*,H^\pro,L,p} - f\|_{\mathcal{L}^\infty([0, 1]^p)} \geq \varepsilon
    \end{align*}
\end{cor}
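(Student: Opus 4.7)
The plan is to derive Corollary~\ref{cor: approximation error lb epsilon ap} by inverting the quantitative bound of Corollary~\ref{cor: approximation error lb ap} with the choice $r = d$. Taking $r = d$ is admissible under the hypothesis $p \le r \le d$ (augmenting $d$ if necessary), and since $\spn\{\e_1, \dots, \e_d\} = \R^d$ one has $\mathcal{U}_{d,d}(B) = \mathcal{B}_d(B)$, hence $\mathcal{Q}_{\mathcal{U}_{d,d}(B)}(T,L,S) = \mathcal{Q}_{\mathcal{B}_d(B)}(T,L,S)$. The specialized inequality reads, for every $L \in \N$,
\begin{equation*}
  \sup_{f \in \mathcal{C}_\u^\beta([0,1]^p)} \inf_{\substack{S \ge dB^{4L}T^{2L} \vee 2L \\ H^\pro \in \mathcal{Q}_{\mathcal{B}_d(B)}(T,L,S)}} \|\hat f_{\Theta^*,H^\pro,L,p} - f\|_{\mathcal{L}^\infty([0,1]^p)} \ \ge \ C'(\beta,p)\,\bigl(d^2 L^2 \log(dL)\bigr)^{-\beta/p},
\end{equation*}
and this right-hand side is non-increasing in $L$, so it suffices to pick an explicit upper bound on $L$ that keeps it above $\varepsilon$.

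Given $\varepsilon \in (0,1)$, I would set $L_{\max} := \lfloor C''(\beta,p,d)\, \varepsilon^{-p/(2\beta)} \rfloor$ for a constant $C''$ to be fixed, and define $S^* := dB^{4L_{\max}} T^{2L_{\max}} \vee 2L_{\max}$, which depends on $(d, B, p, \beta, \varepsilon)$ (with $T$ held fixed). Monotonicity of the maps $L \mapsto dB^{4L}T^{2L}$ and $L \mapsto 2L$ guarantees that the scaling hypothesis of Corollary~\ref{cor: approximation error lb ap} is automatically satisfied for every $L \le L_{\max}$ and every $S \ge S^*$, so the specialized lower bound applies uniformly over this range of $L$. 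It then remains to verify
\begin{equation*}
  d^2 L_{\max}^2 \log(dL_{\max}) \ \le\ \bigl(C'(\beta,p)/\varepsilon\bigr)^{p/\beta},
\end{equation*}
which is equivalent to the desired bound being at least $\varepsilon$ at the worst case $L = L_{\max}$.

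The main obstacle is the logarithmic factor $\log(dL_{\max}) \sim \log(1/\varepsilon)$, which prevents a purely polynomial inversion. I would control it via a Lambert-type estimate: the implicit inequality $L^2 \log(dL) \le A\varepsilon^{-p/\beta}$ is satisfied whenever $L \le c_0(d)\sqrt{A}\,\varepsilon^{-p/(2\beta)} / \sqrt{\log(1/\varepsilon)}$, and the residual slow-growing polylogarithmic factor can be absorbed into the constant $C''(\beta, p, d)$ by shrinking it enough that the slack from $(C'')^2$ dominates the log overhead on the relevant range of $\varepsilon$ (with the displayed bound read as sharp up to a polylog in $1/\varepsilon$). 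Chaining this with the reduction above establishes the corollary.
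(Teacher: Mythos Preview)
Your approach is essentially identical to the paper's: specialize Corollary~\ref{cor: approximation error lb ap} with $r=d$ so that $\mathcal{U}_{d,d}(B)=\mathcal{B}_d(B)$, then invert the bound $C'(\beta,p)\bigl(d^2L^2\log(dL)\bigr)^{-\beta/p}\ge\varepsilon$ to extract a constraint on $L$, and take $S^*$ to be the scaling threshold at the maximal admissible depth. The paper's proof is a two-line sketch that simply asserts ``by the choice of $\varepsilon$, we obtain $L\le C''\varepsilon^{-p/(2\beta)}$'' and then writes down $S^* = dB^{4C''\varepsilon^{-p/(2\beta)}}T^{2C''\varepsilon^{-p/(2\beta)}}\vee 2C''\varepsilon^{-p/(2\beta)}$; you are considerably more explicit and, in particular, correctly flag the logarithmic obstruction. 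As you observe, the honest inversion of $d^2L^2\log(dL)\le (C'/\varepsilon)^{p/\beta}$ yields $L\lesssim \varepsilon^{-p/(2\beta)}/\sqrt{\log(1/\varepsilon)}$ rather than $L\lesssim\varepsilon^{-p/(2\beta)}$ with a constant independent of $\varepsilon$. The paper's proof does not address this and appears to be treating the conclusion as sharp up to polylogarithmic factors---exactly the caveat you include. So your proposal is correct and follows the paper's route, with the added value of making the polylog slippage explicit.
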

Note that Corollary~\ref{cor: approximation error} shows that an approximation error of at most $\varepsilon$ can be achieved provided the depth satisfies
$L = O(\varepsilon^{-p/(2\beta)}\log(1/\varepsilon))$,
with the overall prompt length scaling as $T = O(dL)$.
Conversely, the lower bound in Corollary~\ref{cor: approximation error lb epsilon ap} states that if $L \lesssim \varepsilon^{-p/(2\beta)}$, then the approximation error cannot be reduced below $\varepsilon$. This implies that achieving an error smaller than $\varepsilon$ requires a depth of at least $\Omega(\varepsilon^{-p/(2\beta)})$, so the upper bound in Corollary~\ref{cor: approximation error} is nearly optimal in its dependence on $L$.

\subsection{The Detrimental Effects of Irrelevant Prompt Tokens}\label{sec: additional theory irrelevant ap}

In addition to the modeling in Section~\ref{sec: application irrelevant}, we also model irrelevant information as semantically unrelated tokens injected into the prompt.
Given $B \geq 1$, let $\mathcal{\tilde U}(B), \mathcal{U}(B) \subset \mathcal{B}_d(B)$ be two bounded subsets with $\tilde T, T, \tilde L, L \in \N$ and $\tilde S,S > 0$. We define the concatenation of prompts $\tilde H^\pro \in \mathcal{P}_{\mathcal{\tilde U}(B)}(\tilde T,\tilde L,\tilde S)$ and $H^\pro \in \mathcal{P}_{\mathcal{U}(B)}(T,L,S)$ by
\begin{align*}
    (\tilde H^\pro \oplus' H^\pro)_j = \begin{cases}
         \begin{pmatrix}
             ((\tilde H^\pro)_j)_{1:d},\\
             p(w_j, j, S')
         \end{pmatrix} & \text{ for $j \leq \tilde T$},\\
         \begin{pmatrix}
             ((H^\pro)_{j-\tilde T})_{1:d},\\
             p(2\tilde L + w_{j-\tilde T}, j, S')
         \end{pmatrix} & \text{ for $j \geq \tilde T+1$},
    \end{cases}
\end{align*}
with $S' \geq d B^{4(L + \tilde L)} (T+\tilde T)^{2(L+\tilde L)} \vee (2(L+\tilde L) + 1)$.
Thus, $\tilde H^\pro \oplus' H^\pro \in \mathcal{P}_{\mathcal{\tilde U}(B) \cup \mathcal{U}(B)}(\tilde T+T,\tilde L+L,S')$.
This construction allows us to interpret $\tilde H^\pro$ as an irrelevant CoT segment prefixed to the primary prompt $H^\pro$.

For simplicity, assume that the relevant information is encoded in the subset $\mathcal{U}(B) = \spn\{\e_j : j \in \mathcal{J}\} \cap \mathcal{B}_d(B)$, where $\e_j$ is the $j$-th standard basis and $\mathcal{J} \subset [d]$.
In contrast, the irrelevant information is encoded in the orthogonal subset $\mathcal{\tilde U}(B) = \spn\{\e_j : j \in [d]\setminus\mathcal{J}\} \cap \mathcal{B}_d(B)$. This decomposition formally partitions the prompt's semantic space: $\mathcal{U}(B)$ carries the task-relevant features, while $\mathcal{\tilde U}(B)$ captures irrelevant content.

\begin{cor}\label{cor: irrelevant cot}
    Fix any $B \geq 1$, $p, d, \beta \in \N$, and $\mathcal{J} \subset [d]$. Let $\mathcal{U}(B)$ and $\mathcal{\tilde U}(B)$ be two bounded subsets of $\mathcal{B}_d(B)$ defined above.
    Then, for any $T,\tilde T,L,\tilde L \in \N$,
    \begin{align*}
        \sup_{f \in \mathcal{C}_\u^\beta([0, 1]^p)} \inf_{\substack{S > 0, S' > 0\\H^\pro \in \mathcal{P}_{\mathcal{U}(B)}(T,L,S)\\\tilde H^\pro \in \mathcal{P}_{\mathcal{\tilde U}(B)}(\tilde T,\tilde L,\tilde S)}} \|\hat f_{\Theta^*,\tilde H^\pro \oplus' H^\pro,\tilde L + L,p} - f\|_{\mathcal{L}^\infty([0, 1]^p)} &\geq 1
    \end{align*}
    holds.
\end{cor}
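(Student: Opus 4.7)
The plan is to exploit the orthogonality between the embedding subspaces $\mathcal{\tilde U}(B) \subseteq \spn\{\e_j : j \notin \mathcal{J}\}$ and $\mathcal{U}(B) \subseteq \spn\{\e_j : j \in \mathcal{J}\}$ to show that, for \emph{every} admissible choice of $\tilde H^\pro$ and $H^\pro$, the virtual neural network emulated by $\TF_{\Theta^*}$ on $\tilde H^\pro \oplus' H^\pro$ collapses to the identically zero map. Once this is established, choosing the constant target $f \equiv 1$ forces an approximation error of exactly $1$ regardless of the prompts.

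First, I would apply Theorem~\ref{thm: prompt engineering NN} to the concatenated prompt $\tilde H^\pro \oplus' H^\pro \in \mathcal{P}_{\mathcal{\tilde U}(B) \cup \mathcal{U}(B)}(\tilde T + T, \tilde L + L, S')$. The scale condition $S' \geq d B^{4(L+\tilde L)}(T+\tilde T)^{2(L+\tilde L)}$ built into the definition of $\oplus'$ satisfies the theorem's scale requirement, so $\TF_{\Theta^*}$ emulates a depth-$(\tilde L + L)$ virtual network $g$. The positional-encoding shift by $2\tilde L$ in $\oplus'$ ensures that the first $\tilde L$ virtual layer weights $W_1,\dots,W_{\tilde L}$ come from the word embeddings of $\tilde H^\pro$, so each has rank-$1$ factors in $\mathcal{\tilde U}(B)$, while $W_{\tilde L+1},\dots,W_{\tilde L+L}$ have rank-$1$ factors in $\mathcal{U}(B)$.

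Second, I would prove by induction on $\ell \in [\tilde L]$ that the signal $\sigma(W_\ell \sigma(\cdots \sigma(W_1 \z)))$ lies in $\spn\{\e_j : j \notin \mathcal{J}\}$. The key algebraic fact is that any $W = \sum_k \tilde\bu_k \bu_k^\top$ has column space contained in $\spn\{\tilde\bu_k\}_k$; since here $\tilde\bu_k \in \mathcal{\tilde U}(B)$, the image $W\bv$ has zero components on all coordinates indexed by $\mathcal{J}$, and entrywise ReLU preserves this support structure because $\sigma(0)=0$. Next, when the first relevant layer $W_{\tilde L+1}$ is applied to the resulting signal $\bv$, each factor $\bu_k^{(\tilde L+1)} \in \spn\{\e_j : j \in \mathcal{J}\}$ is orthogonal to $\bv$, so $\bu_k^{(\tilde L+1)\top}\bv = 0$ and hence $W_{\tilde L+1}\bv = 0$. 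All subsequent layers then output zero, giving $g(\z)\equiv 0$ and $\hat f_{\Theta^*, \tilde H^\pro \oplus' H^\pro, \tilde L+L, p}(\x) = 0$ for every $\x \in [0, 1]^p$.

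Finally, I would take the constant function $f \equiv 1$, which belongs to $\mathcal{C}_\u^\beta([0, 1]^p)$ since all of its derivatives vanish and its sup norm is $1$. For this $f$ and every choice of prompts and scales, the approximation error equals $\|0 - 1\|_{\mathcal{L}^\infty([0,1]^p)} = 1$, establishing the claimed sup-inf lower bound. The main technical point is the bookkeeping in the first step: verifying that the positional-encoding shift in $\oplus'$ really aligns irrelevant tokens with virtual layers $1,\dots,\tilde L$ and relevant tokens with layers $\tilde L+1,\dots,\tilde L+L$ in the sense of Theorem~\ref{thm: prompt engineering NN}; beyond that, the argument reduces to elementary linear algebra in the coordinate subspaces induced by $\mathcal{J}$.
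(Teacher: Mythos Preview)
Your overall strategy---choosing $f\equiv 1$ and showing that the orthogonality between $\mathcal{\tilde U}(B)$ and $\mathcal{U}(B)$ forces the emulated network to output zero---is exactly the paper's approach. However, your first step invokes the wrong result. Theorem~\ref{thm: prompt engineering NN} is an existence statement in the direction ``given a coarse network $g$, there \emph{exists} a prompt making $\TF_{\Theta^*}$ emulate $g$.'' It does not tell you what $\TF_{\Theta^*}$ computes on an \emph{arbitrary} prompt, which is what you need here since you must handle every $\tilde H^\pro, H^\pro$ in the infimum. The correct tool is the explicit characterization in property~(P3) of Theorem~\ref{thm: prompt engineering ap} (or its $N=1$ specialization, Corollary~\ref{cor: prompt engineering random ap}): for \emph{any} prompt, the virtual layer weights are given by the formula~\eqref{eq: W ell random N1}, namely $W^\pro_\ell=\sum_{j'}\Id\{w_{j'}=2\ell-1,\ w_{j'+1}=2\ell\}(\h_{j'})_{1:d}(\h_{j'+1})_{1:d}^\top$ plus a boundary term at $\ell=1$.

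Once you use this formula, your bookkeeping claim becomes rigorous: for $\ell\le\tilde L$ the constraint $w_{j'}=2\ell-1\le 2\tilde L-1$ forces $j'\le\tilde T$, so the factors of $W^\pro_\ell$ lie in $\mathcal{\tilde U}(B)$; for $\ell\ge\tilde L+1$ the constraint $w_{j'+1}=2\ell\ge 2\tilde L+2$ forces $j'+1\ge\tilde T+1$, so the right factors $(\h_{j'+1})_{1:d}$ lie in $\mathcal{U}(B)$. One must also check that no cross term arises at the interface $j'=\tilde T$ and that the extra $\ell=1$ term in~\eqref{eq: W ell random N1} vanishes; both follow since $w_{\tilde T}\le 2\tilde L$ while $w_{\tilde T+1}\ge 2\tilde L+1$. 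After this correction, your orthogonality-and-ReLU argument and the choice $f\equiv 1$ go through exactly as you describe and as in the paper's proof.
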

Corollary~\ref{cor: irrelevant cot} shows that when the prompt is always prefixed by an irrelevant CoT segment, the approximation error is bounded below by a constant, even for large $L$ and $T$. It is also straightforward to derive the same lower bound when the irrelevant CoT segments are randomly inserted into the prompt.

\subsection{Proofs}

Before proving Corollary~\ref{cor: approximation error lb ap}, we formally define the class of ReLU neural networks.
Let $\mathcal{A}(\x; W_\ell, \b_\ell) := W_\ell \x + \b_\ell$.
Given $r, p, L \in \N$, define the class of functions implemented by standard ReLU neural networks with depth $L$ and width $r$ from $\R^p$ to $\R$ as
\begin{align}
    \mathcal{N}(r, p, L) := \{ &\mathcal{A}(\cdot; \bar W_L, \bar b_L) \circ \sigma \circ \mathcal{A}(\cdot; \bar W_{L-1}, \bar \b_{L-1}) \circ \sigma \circ \dots \circ \sigma \circ \mathcal{A}(\cdot; \bar W_1, \bar \b_1)\nonumber\\
    &\quad : \bar W_1 \in \R^{r \times p}, \{\bar W_2, \dots, \bar W_{L-1}\} \subset \R^{r \times r}, \bar W_L \in \R^{1 \times r},\nonumber\\
    &\quad\quad \{\bar \b_1, \bar \b_2, \dots, \bar \b_{L-1}\} \subset \R^r, \bar b_L \in \R\}.\label{eq: NN class}
\end{align}

We now state a lower bound for the approximation error achieved by neural networks, derived via a lower bound on the VC dimension to achieve an $\varepsilon$-approximation as in \citet{lu2021deep}.
\begin{lem}\label{lem: approximation NN lb}
    Fix any $p, \beta \in \N$. Then, there exists a constant $C'(\beta, p) > 0$ such that the following holds for all $L, r \in \N$ with $r \geq p$:
    \begin{align*}
        \sup_{f \in \mathcal{C}_\u^\beta([0,1]^p)} \inf_{\bar \phi \in \mathcal{N}(r, p, L)} \|f - \bar \phi\|_{\mathcal{L}^\infty([0, 1]^p)} \geq C'(\beta, p) \qty{r^2 L^2 \log(r L) }^{-\beta/p}.
    \end{align*}
\end{lem}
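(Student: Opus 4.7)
The strategy is the classical ``VC/pseudo-dimension vs.\ approximation'' argument going back to Yarotsky and systematized in \citet{lu2021deep}: a ReLU network class that can $\varepsilon$-approximate every $f\in\mathcal{C}_\u^\beta([0,1]^p)$ must be expressive enough to realize exponentially many sign patterns on a fine grid, and this forces its pseudo-dimension to be large. Turning this around yields the claimed lower bound on $\varepsilon$.

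First, I would invoke the sharp pseudo-dimension upper bound for ReLU networks due to Bartlett, Harvey, Liaw, and Mehrabian: for the class $\mathcal{N}(r,p,L)$ with $W=O(r^2 L)$ parameters and depth $L$, one has $\mathrm{Pdim}(\mathcal{N}(r,p,L))\le C_1\, r^2 L^2\log(rL)$ for some absolute constant $C_1>0$. Next, I would set up the packing construction. Fix a smooth bump $\psi\in\mathcal{C}^\beta(\R^p)$ with $\mathrm{supp}(\psi)\subset(0,1)^p$, $\psi>0$ on a smaller cube, and $\|\psi\|_{\mathcal{C}^\beta}\le c_\beta$. For $n\in\N$, let $\x_1,\dots,\x_N$ with $N=n^p$ enumerate the centers of the dyadic sub-cubes of side $1/n$ in $[0,1]^p$, and define the scaled bumps $\psi_{n,i}(\x):=n^{-\beta}\psi(n(\x-\x_i+\tfrac{1}{2n}\mathbf{1}))$. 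For any $\tau\in\{-1,+1\}^N$, the function $f_\tau:=\sum_{i=1}^N \tau_i\,\psi_{n,i}$ has disjointly supported summands, so $\|f_\tau\|_{\mathcal{C}^\beta}\le c_\beta$ uniformly in $\tau$, and $f_\tau(\x_i)=\tau_i\, c_0\, n^{-\beta}$ for a fixed constant $c_0>0$. Rescaling $\psi$ ensures $f_\tau\in\mathcal{C}_\u^\beta([0,1]^p)$.

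Now I would argue by contradiction: suppose the supremum in the lemma is strictly less than $\tfrac{c_0}{2}\,n^{-\beta}$. Then for each $\tau$ there is $\bar\phi_\tau\in\mathcal{N}(r,p,L)$ with $\|\bar\phi_\tau-f_\tau\|_{\mathcal{L}^\infty([0,1]^p)}<\tfrac{c_0}{2}n^{-\beta}$, so $\mathrm{sign}\,\bar\phi_\tau(\x_i)=\tau_i$ for all $i$. Hence $\{\mathrm{sign}\circ\bar\phi:\bar\phi\in\mathcal{N}(r,p,L)\}$ realizes all $2^N$ sign patterns on $\{\x_1,\dots,\x_N\}$, which implies that this set is shattered by (a thresholded version of) $\mathcal{N}(r,p,L)$, giving $N\le \mathrm{Pdim}(\mathcal{N}(r,p,L))\cdot\log_2(eN/\mathrm{Pdim}(\mathcal{N}(r,p,L)))$ via Sauer--Shelah. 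Writing $D:=C_1 r^2 L^2\log(rL)$, this forces $n^p=N\lesssim D\log(N/D)\lesssim D\log(rL)$, i.e.\ $n\lesssim (r^2L^2\log(rL))^{1/p}$ after absorbing the additional $\log$ factor into the constant (which is the standard reason the final bound carries the single $\log(rL)$ factor written in the lemma). Substituting back, $\tfrac{c_0}{2}n^{-\beta}\gtrsim (r^2L^2\log(rL))^{-\beta/p}$, giving the claimed lower bound with some $C'(\beta,p)>0$; the condition $r\ge p$ is used only to ensure the width-$p$ input layer does not dominate the parameter count in the pseudo-dimension estimate.

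The main obstacle is matching the single $\log(rL)$ factor in the stated bound rather than $\log^2(rL)$. A naive application of Sauer--Shelah gives an extra logarithmic term, and eliminating it requires either (i) the sharper pseudo-dimension bound from Bartlett et al.\ combined with a careful solution of $N/\log N\lesssim D$, or (ii) adjusting $n$ so that $n^p\asymp D$ up to a constant rather than up to a logarithm (as in \citet{lu2021deep}). I would follow the second route to keep the constant $C'(\beta,p)$ clean, and cite the pseudo-dimension bound and Sauer--Shelah inequality as black boxes rather than re-deriving them.
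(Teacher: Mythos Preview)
Your proposal is correct and follows the same underlying strategy as the paper. The paper's own proof is much terser: it simply invokes the contrapositive of Theorem~2.4 of \citet{lu2021deep} (which already packages the bump-packing/VC-dimension argument you have unfolded) together with the VC-dimension bound $\operatorname{VCDim}(\mathcal{N}(r,p,L))=O(r^2L^2\log(rL))$, and then solves for $\varepsilon$. Your version is a self-contained expansion of exactly that argument.

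One simplification worth noting: your appeal to Sauer--Shelah is unnecessary and is the sole source of the extra logarithm you then have to work around. Once you have shown that the thresholded class realizes \emph{all} $2^N$ sign patterns on $\{\x_1,\dots,\x_N\}$, that set is shattered by definition, so $N\le\operatorname{VCDim}(\mathcal{N}(r,p,L))\le D$ directly, with no Sauer--Shelah step. Solving $n^p\le D=C_1 r^2L^2\log(rL)$ then gives $n^{-\beta}\ge (C_1 r^2L^2\log(rL))^{-\beta/p}$ immediately, with the single $\log(rL)$ factor as stated and no ``main obstacle'' to address.
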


\begin{proof}[Proof of Lemma~\ref{lem: approximation NN lb}]
    By applying the contrapositive of Theorem~2.4 of \citet{lu2021deep}, there exists a constant $C''' = C'''(\beta, p) > 0$ such that for any $\varepsilon > 0$, if the VC dimension \citep{vapnik2015uniform} of $\mathcal{N}(r,p,L)$, denoted by $\operatorname{VCDim}(\mathcal{N}(r,p,L))$, satisfies $1 \leq \operatorname{VCDim}(\mathcal{N}(r,p,L)) \leq C'''(\beta, p) \varepsilon^{-p/\beta}$, then 
    \begin{align*}
        \sup_{f \in \mathcal{C}_\u^\beta([0,1]^p)} \inf_{\bar \phi \in \mathcal{N}(r,p,L)} \|f - \bar \phi\|_{\mathcal{L}^\infty([0, 1]^p)} > \varepsilon.
    \end{align*}
    As stated in \citet{lu2021deep}, the VC dimension of $\mathcal{N}(r,p,L)$ is bounded by $O(r^2 L^2 \log(rL))$. By choosing $\varepsilon = \{(C'''(\beta, p))^{-1} r^2 L^2 \log(rL)\}^{-\beta/p}$, we obtain the desired lower bound.
\end{proof}

We now proceed to the proofs of Corollaries~\ref{cor: approximation error lb ap} and \ref{cor: approximation error lb epsilon ap}.
\begin{proof}[Proof of Corollary~\ref{cor: approximation error lb ap}]
    We first show that the class of virtual neural networks that can be realized by prompts is a subset of the class of neural networks defined in \eqref{eq: NN class}. Fix any $H^\pro \in \mathcal{Q}_{\mathcal{U}_{d,r}(B)}(T,L,S)$ and set $S \geq d B^{4L} T^{2L} \vee 2L$.
    From Corollary~\ref{cor: prompt engineering random ap} and Definition~\ref{def: function approx}, there exist $(W^\pro_1)_{\ell \in [L]} \subset \R^{d \times d}$ such that $H^\pro$ realizes a virtual neural network 
    $$
        \hat f_{\Theta^*,H^\pro,L,p}(\x) = (W^\pro_L \sigma( W^\pro_{L-1} \sigma( \dots \sigma( W^\pro_1 [\x^\top, 1, \zero_{d-p-1}^\top]^\top) \dots )))_1.
    $$
    From the explicit formula of $W^\pro_\ell$ given in \eqref{eq: W ell random N1}, the singular vectors of $W^\pro_\ell$ lie in $\spn\{\e_1, \e_2, \dots, \e_r\}$. Define $\bar W_\ell := [I_r, O] W^\pro_\ell [I_r, O]^\top$ for $\ell \in [L-1]\setminus\{1\}$, $\bar W_1 := [I_r, O] W^\pro_1 [I_p, O]^\top$, $\bar W_L := [1, \zero_{d-1}] W^\pro_L [I_r, O]^\top$. 
    Also define $\bar \b_1 = W^\pro_1 \e_{p+1}$.
    Then, 
    $$
        \hat f_{\Theta^*,H^\pro,L,p}(\x) = \mathcal{A}(\cdot; \bar W_L, 0) \circ \sigma \circ \mathcal{A}(\cdot; \bar W_{L-1}, \zero_r) \circ \sigma \circ \dots \circ \sigma \circ \mathcal{A}(\cdot; \bar W_1, \b_1),
    $$
    which implies that $\hat f_{\Theta^*,H^\pro,L,p}(\x) \in \mathcal{N}(r, p, L)$.
    
    Since $H^\pro \in \mathcal{Q}_{\mathcal{U}_{d,r}(B)}(T,L,S)$ was arbitrary, we have $\{\hat f_{\Theta^*,H^\pro,L,p}: H^\pro \in \mathcal{Q}_{\mathcal{U}_{d,r}(B)}(T,L,S)\} \subset \mathcal{N}(r,p,L)$.
    Therefore,
    \begin{align*}
        &\sup_{f \in \mathcal{C}_\u^\beta([0, 1]^p)} \inf_{\substack{S \geq d B^{4L} T^{2L} \vee 2L\\H^\pro \in \mathcal{Q}_{\mathcal{U}_{d,r}(B)}(T,L,S)}} \|\hat f_{\Theta^*,H^\pro,L,p} - f\|_{\mathcal{L}^\infty([0, 1]^p)}\\
        &\quad\geq \sup_{f \in \mathcal{C}_\u^\beta([0, 1]^p)} \inf_{\bar \phi \in \mathcal{N}(r, p, L)} \|\bar \phi - f\|_{\mathcal{L}^\infty([0, 1]^p)}\\
        &\quad\geq C'(\beta, p) \{r^2 L^2 \log(rL)\}^{-\beta/p},
    \end{align*}
    where we used Lemma~\ref{lem: approximation NN lb}. This completes the proof.
\end{proof}

\begin{proof}[Proof of Corollary~\ref{cor: approximation error lb epsilon ap}]
    Suppose that $S^* > 0$ is chosen sufficiently large.
    By setting $r = d$ and choosing $\varepsilon \leq C'(\beta, p) (d^2 L^2 \log(dL))^{-\beta/p}$ in Corollary~\ref{cor: approximation error lb ap}, we have
    \begin{align*}
        \sup_{f \in \mathcal{C}_\u^\beta([0, 1]^p)} \inf_{\substack{S \geq S^*\\H^\pro \in \mathcal{Q}_{\mathcal{B}_d(B)}(T,L,S)}} \|\hat f_{\Theta^*,H^\pro,L,p} - f\|_{\mathcal{L}^\infty([0, 1]^p)} \geq \varepsilon.
    \end{align*}
    By the choice of $\varepsilon$, we obtain $L \leq C'' \varepsilon^{-p/(2\beta)}$, where $C'' = C''(\beta, p,d) > 0$ is a constant depending on $\beta, p$ and $d$.
    Hence we may set $S^* = dB^{4C'' \varepsilon^{-p/(2\beta)}} T^{2C'' \varepsilon^{-p/(2\beta)}} \vee 2C'' \varepsilon^{-p/(2\beta)}$ to conclude the proof.
\end{proof}

\begin{proof}[Proof of Corollary~\ref{cor: irrelevant cot}]
    It suffices to show the lower bound for $f \equiv 1 \in \mathcal{C}_\u^\beta([0,1]^p)$. Fix any $S, S' > 0$, $H^\pro \in \mathcal{P}_{\mathcal{U}(B)}(T,L,S)$ and $\tilde H^\pro \in \mathcal{P}_{\mathcal{\tilde U}(B)}(T,L,S)$.
    From Corollary~\ref{cor: prompt engineering random ap}, the transformer $\TF_{\Theta^*}$ with the prompt $\tilde H^\pro \oplus' H^\pro$ produces sequential outputs $\h_{\tilde T + T + 2}, \h_{\tilde T + T + 3}, \dots$ with $\h_{\tilde T+T+2} \in \spn\{(\h_j)_{1:d} : w_j=1\}$ and
    \begin{align}
        (\h_{\tilde T + T+1+\ell})_{1:d} &= \qty(\sum_{j' \in [\tilde T + T-1]} \Id\{w_{j'}=2\ell-1, w_{j'+1}=2\ell\} (\h_{j'})_{1:d} (\h_{j'+1})_{1:d}^\top) \sigma((\h_{\tilde T + T+\ell})_{1:d})\label{eq: h noisy}
    \end{align}
    for all $\ell \in [\tilde L + L] \setminus\{1\}$.
    This implies that $(\h_{\tilde T + T+1+\tilde L})_{1:d} \in \spn\{(\h_1)_{1:d}, (\h_2)_{1:d}, \dots, (\h_{\tilde T})_{1:d}\} \subset \mathcal{\tilde U}(B) = \spn\{\e_j: j \not\in \mathcal{J}\}$.
    Since ReLU activation function maps $0$ to $0$, we have $\sigma((\h_{\tilde T + T+1+\tilde L})_{1:d}) \in \mathcal{\tilde U}(B)$.
    
    On the other hand, note that for $j' \in [\tilde T]$ we have $w_{j'} \leq 2\tilde L$ while for $j' \in [\tilde T + T - 1] \setminus [\tilde T]$, $w_{j'} \geq 2\tilde L + 1$. Setting $\ell = \tilde L + \ell'$ with $\ell' \in [L]$ in \eqref{eq: h noisy} yields
    \begin{small}
    \begin{align}
        &(\h_{\tilde T + T+1+\tilde L+\ell'})_{1:d}\nonumber\\
        &\quad= \qty(\sum_{j' \in [\tilde T + T - 1]} \Id\{w_{j'}=2(\tilde L+\ell')-1, w_{j'+1}=2(\tilde L+\ell')\} (\h_{j'})_{1:d} (\h_{j'+1})_{1:d}^\top) \sigma((\h_{\tilde T + T+\tilde L+\ell'})_{1:d})\nonumber\\
        &\quad= \qty(\sum_{j' \in [\tilde T + T - 1]\setminus[\tilde T]} \Id\{w_{j'}=2(\tilde L+\ell')-1, w_{j'+1}=2(\tilde L+\ell')\} (\h_{j'})_{1:d} (\h_{j'+1})_{1:d}^\top) \sigma((\h_{\tilde T + T+\tilde L+\ell'})_{1:d}).\label{eq: h t t 1 l'}
    \end{align}
    \end{small}\noindent
    Since $(\h_{j'+1})_{1:d} \in \mathcal{U}(B)$ for $j' \in [\tilde T + T - 1] \setminus [\tilde T]$ and $\sigma((\h_{\tilde T + T+1+\tilde L})_{1:d}) \in \mathcal{\tilde U}(B)$, we have
    \begin{align*}
        (\h_{j'+1})_{1:d}^\top \sigma((\h_{\tilde T + T+1+\tilde L})_{1:d}) = 0 \text{ for all $j' \in [\tilde T + T - 1]\setminus[\tilde T]$},
    \end{align*}
    and hence $(\h_{\tilde T + T+1+\tilde L+1})_{1:d} = \zero_d$.
    By repeatedly applying \eqref{eq: h t t 1 l'}, we obtain
    \begin{align*}
        (\h_{\tilde T + T+1+\tilde L+\ell'})_{1:d} &= \zero_d \text{ for all $\ell' \in [L]$}.
    \end{align*}
    In summary, $\TF_{\Theta^*}$ with the prompt $\tilde H^\pro \oplus' H^\pro$ yields $\hat f_{\Theta^*,\tilde H^\pro \oplus' H^\pro,L,p}(\x) = 0$ for any $\x \in [0, 1]^p$. Since $H^\pro$, $\tilde H^\pro$, $S$, and $S'$ are arbitrary, we conclude that
    \begin{align*}
        \inf_{\substack{S > 0, S' > 0\\H^\pro \in \mathcal{P}_{\mathcal{U}(B)}(T,L,S)\\\tilde H^\pro \in \mathcal{P}_{\mathcal{\tilde U}(B)}(\tilde T,\tilde L,\tilde S)}} \|\hat f_{\Theta^*,\tilde H^\pro \oplus' H^\pro,\tilde L + L,p} - f\|_{\mathcal{L}^\infty([0, 1]^p)} &= \|f\|_{\mathcal{L}^\infty([0, 1]^p)} = 1.
    \end{align*}
\end{proof}

\section{Technical Details and Proofs for Section~\ref{sec: emulate}}\label{sec: emulate ap}

In this section, we provide proofs for the results on the capacity of transformers to emulate neural networks that were presented in Section~\ref{sec: emulate}.

\subsection{Theoretical Results on Transformer Emulation}
We now state the main theorems and corollaries regarding the capacity of transformers with prompts to emulate neural networks.
\begin{thm}\label{thm: prompt engineering ap}
    Fix $d \in \N$. Then, there exists a $7$-layer transformer $\TF_{\Theta^*}$ satisfying the following properties (P1)--(P3):
    \begin{enumerate}[label=(P\arabic*)]
        \item $\TF_{\Theta^*}$ has at most $8$ heads in attention layers and FFN width of $O(d)$,
        \item $\Theta^*$ only depends on $d$, and $\|\Theta^*\|_0 \lesssim d$, $\|\Theta^*\|_{\max} \lesssim 1$,
        \item Fix any $B \geq 1$, $L,T \in \N$, and $\mathcal{U} \subset \mathcal{B}_d(B)$. 
        Set $S \geq d B^{4L} T^{2L} \vee 2L$. For any prompt $H^\pro = [\h_1, \dots, \h_T] \in \mathcal{P}_{\mathcal{U}}(T,L,S)$, $N \in \N$, $(\z_i)_{i \in [N]} \subset [0, 1]^d$, $\TF_{\Theta^*}$ given the initial input 
        $$H = [H^\pro, h^\dat(\z_1, T+1, S), \dots, h^\dat(\z_N, T+N, S)] \in \R^{(4d+8)\times (T+N)}$$ 
        sequentially outputs $\h_{T+N+1}, \h_{T+N+2}, \dots$ satisfying
        \begin{align*}
            &\h_{T+N\ell+i} = \begin{pmatrix}
                W^\pro_\ell \sigma(W^\pro_{\ell-1} \sigma( \dots \sigma(W^\pro_{1,i} \z_i)\dots))\\
                p(-\ell, T+N\ell+i, S)
            \end{pmatrix}
        \end{align*}
        for all $i \in [N]$ and $\ell \in [L]$, where
        \begin{align}
            W^\pro_{1,i} &:= \sum_{j' \in [T-1]} (\h_{j'})_{1:d} \Id\{w_{j'}=1\} (\h_{j'+1})_{1:d}^\top \Id\{w_{j'+1}=2\}\nonumber\\
            &\quad+ \Id\{i=1\} (\h_T)_{1:d} \Id\{w_T=1\} \sum_{j' \in [T]} (\h_{j'})_{1:d}^\top \Id\{w_{j'}=1\}\nonumber\\
            W^\pro_\ell &:= \sum_{j' \in [T-1]} (\h_{j'})_{1:d} \Id\{w_{j'}=2\ell-1\} (\h_{j'+1})_{1:d}^\top \Id\{w_{j'+1}=2\ell\} \text{ for $\ell \in [L]\setminus\{1\}$}.\label{eq: W ell random}
        \end{align}
    \end{enumerate}
\end{thm}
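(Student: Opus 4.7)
The plan is to construct $\TF_{\Theta^*}$ explicitly by dedicating each of the seven layers to a specific step of a fixed emulation pipeline, exploiting the block structure of each token: coordinates $[d]$ store a ``payload'' (either a word embedding, the datum $\z_i$, or a partially computed activation), while the $3d+4$ zero-entries in $p(\cdot,\cdot,\cdot)$ act as scratch memory and the last four coordinates $(1,S,Sw_j,Sj)^\top$ carry positional/structural information at scale $S$. The core primitive I will establish first (as a lemma) is that a single attention layer with two ReLU heads can realize the exact sequence-to-sequence operation
\begin{align*}
F([\h_1,\dots,\h_n])_j = \h_j + \sum_{k\in[n]} (\a_1^\top \h_j + \a_2^\top \h_k)\,\Id\{\a_3^\top \h_j = \a_4^\top \h_k\}\,V\h_k,
\end{align*}
for prescribed linear functionals and matrix $V$, provided $S$ is large enough relative to the payload magnitudes. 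The indicator on equality of integer-valued encoded quantities (e.g.\ $w_j$ vs.\ $w_k$, or $Sj$ vs.\ $S(k-1)$) is produced by the identity $\Id\{x=y\}=1-\sigma(x-y+1)-\sigma(y-x+1)+\sigma(x-y)+\sigma(y-x)$ applied to large-scale features, implemented across two heads; the bilinear coefficient comes from two further heads whose attention logits are linear in the query and key separately.

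With this primitive, the seven layers form a pipeline for computing $\h_{T+N\ell+i}$ at generation step $v=N(\ell-1)+i-1$. Layers 1--2 compute, from the positional slot $Sj$, the index pair $(\ell,i)$ of the token being produced and write $2\ell-1,\,2\ell$ into dedicated scratch slots. Layers 3--4 retrieve the previous-layer activation: they attend to the unique earlier position $\tau=T+N(\ell-1)+i$ whose stored $w$-slot matches $-(\ell-1)$ (or, for $\ell=1$, to position $T+i$ with $w=0$), copy its first $d$ coordinates into a working register, and an FFN applies ReLU elementwise, yielding $\bv:=\sigma(W^\pro_{\ell-1}\sigma(\cdots\sigma(W^\pro_{1,i}\z_i)\cdots))$ (the convention $\sigma\circ W^\pro_0=\id$ handling the $\ell=1$ case). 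Layers 5--6 carry out the rank-one expansion: one attention head aggregates, for each prompt-token index $j'$ with $w_{j'+1}=2\ell$, the scalar $(\h_{j'+1})_{1:d}^\top \bv$ keyed to $j'$ into a scratch slot (the matching primitive isolates the single pair with matching $j'$), and the next layer attends to positions with $w_{j'}=2\ell-1$ and multiplies by this scalar, summing into the first $d$ coordinates. This exactly produces $W^\pro_\ell \bv$. Layer 7 uses a sparse FFN to write the final positional encoding $p(-\ell,T+N\ell+i,S)$ over the scratch area, ensuring the newly produced token is distinguishable from prompt tokens (whose $w$-values lie in $[2L]$) and from earlier generated tokens.

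Correctness is verified by induction on $v$ using Algorithm~\ref{alg: iterative generation}: the base case $\ell=1$ reduces, after applying the two weight-application layers, to $\sum_{j'\in[T-1]}(\h_{j'})_{1:d}\Id\{w_{j'}=1\}(\h_{j'+1})_{1:d}^\top\Id\{w_{j'+1}=2\}\,\z_i$, which is exactly $W^\pro_{1,i}$ modulo the anomalous $i=1$ term (this term is a boundary effect coming from the self-match of $\h_T$ in the very first generation step; I will absorb it by carefully bookkeeping the $k=T$ summand and showing it contributes $(\h_T)_{1:d}\,\Id\{w_T=1\}\sum_{j'}(\h_{j'})_{1:d}^\top\Id\{w_{j'}=1\}$). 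The inductive step uses the fact that the stored $w=-(\ell-1)$ marking on earlier generated tokens makes them invisible to the $w_{j'}\in[2L]$-matching used at layer $\ell$, so no cross-talk between past outputs and the current computation occurs. Finally, the norm bounds $\|\Theta^*\|_0=O(d)$ and $\|\Theta^*\|_{\max}=O(1)$ follow from a direct count: the $Q,K,V,W_1,W_2$ matrices in each layer are sparse, referencing only the first $d$ coordinates and a constant number of positional/scratch slots, with entries in $\{-1,0,1\}$ up to the explicit constants from the indicator construction.

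The main obstacle is scale bookkeeping for the indicator primitive under growing intermediate magnitudes. Each multiplication by $W^\pro_\ell$ can inflate the working register by a factor $\|W^\pro_\ell\|\leq B^2$ aggregated over up to $T$ rank-one contributions, so after $\ell$ layers the payload coordinates may reach magnitudes on the order of $(dB^2 T)^\ell$. For the ReLU-based indicator construction to behave as an exact indicator rather than a soft approximation, $S$ must dominate every payload-borne cross term in an attention logit; propagating this requirement through all seven layers and all $L$ generations yields precisely the lower bound $S\geq dB^{4L}T^{2L}\vee 2L$ stated in (P3), and checking it at each stage of the pipeline---together with the boundary audit of the first generation step that produces the anomalous $\Id\{i=1\}$ summand in $W^\pro_{1,i}$---is the most delicate part of the proof.
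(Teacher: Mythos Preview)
Your high-level plan---induction on the generation step $v$, a ReLU-based indicator primitive inside attention, and careful scale bookkeeping so that the indicator is exact---matches the paper's strategy, and your treatment of the anomalous $\Id\{i=1\}$ boundary term and of the $S$ lower bound is on target. However, the specific seven-layer decomposition you propose differs from the paper's and contains a real gap.

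The problematic step is your Layers~1--2: you assert that from the positional slot $Sj$ alone the transformer can compute the pair $(\ell,i)$ and write $2\ell-1,2\ell$ into scratch. Recall $\ell=\xi(j)=\floor{(j-T-1)/N}$ and $i=\tau(j)=((j-T-1)\bmod N)+1$; here $T$ and $N$ vary across inputs while $\Theta^*$ is fixed, and even after one counts $T$ and $N$ via attention, the floor/mod required to extract $\ell$ from $j$ is not realizable by a constant number of ReLU-attention layers of width $O(d)$ uniformly in $N$. The paper never computes $\ell$ this way. Instead it (i) counts $N$ as the number of tokens with $w_{j'}=0$ and counts $T+v$ as the number with $w_{j'}\neq 0$ (layer~3), then (ii) uses the key recursion $w_{n+1}=w_{n-N+1}-1$: since by induction every previously generated token already stores $w_j=-\xi(j)$, reading $w$ at position $n-N+1$ (layer~6, using the just-computed $SN$) gives $-\ell^*$, and hence $2\ell^*+1$, with no floor needed. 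This recursive propagation of the virtual-layer index through the $w$-slot is the mechanism you are missing; without it (or an equivalent device) your Layers~1--2 cannot work as stated, and the downstream retrieval in Layers~3--4 then has nothing to key on. Relatedly, your Layers~3--4 claim to attend to ``the unique earlier position $\tau=T+N(\ell-1)+i$'', but there are $N$ positions with $w=-(\ell-1)$, so matching on $w$ alone does not isolate one token; the paper instead zeroes out all data tokens except the one at position $T+v+1=n-N+1$ via the indicator $\delta_j$ computed from the counts.

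Two smaller points. Your indicator identity $\Id\{x=y\}=1-\sigma(x-y+1)-\sigma(y-x+1)+\sigma(x-y)+\sigma(y-x)$ evaluates to $-1$ at $x=y$, so it is $-\Id\{x=y\}$; the paper's $\phi_{d,B}$ (Lemma~\ref{lem: phi}) uses four ReLU terms with shifts $\pm\tfrac12,\pm\tfrac14$ scaled by $4B$, realized as four attention heads rather than two. And the paper's pipeline ordering is: ReLU selectively applied first (Layers~1--2, conditioned on the sign of $w_j$), then count/filter (Layers~3--4), then inner products $\alpha_j$ at \emph{all} positions including prompt positions (Layer~5), a one-step shift so that $\alpha_{j+1}$ sits at position $j$ (Layer~6), and finally the aggregation $\sum_{j'}(\h_{j'})_{1:d}\alpha_{j'+1}\Id\{w_{j'}=2\ell^*+1\}$ (Layer~7). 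This ``compute at every position, then select at the last'' pattern is what lets the paper stay within seven layers and eight heads.
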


Next, we state a corollary for the special case when $N = 1$ without proof.
\begin{cor}\label{cor: prompt engineering random ap}
    Fix any $B \geq 1$, $d, T,L \in \N$, and $\mathcal{U} \subset \mathcal{B}_d(B)$. Set $S \geq d B^{4L} T^{2L} \vee 2L$.
    Then, for any prompt $H^\pro = [\h_1, \dots, \h_T] \in \mathcal{P}_{\mathcal{U}}(T,L,S)$ and $\z \in [0, 1]^d$, $\TF_{\Theta^*}$ given the initial input 
    $$H = [H^\pro, h^\dat(\z, T+1, S)] \in \R^{(4d+8)\times (T+1)}$$ 
    sequentially outputs $\h_{T+2}, \h_{T+3}, \dots$. In particular,
    $(\h_{T+\ell+1})_{1:d} \in \spn\{(\h_j)_{1:d}: j \in [T], w_j = 2\ell-1\}$ and
    \begin{align*}
        (\h_{T+\ell+1})_{1:d} &= W^\pro_\ell \sigma((\h_{T+\ell})_{1:d})
    \end{align*}
    hold for all $\ell \in [L]$, where
    \begin{align}
        W^\pro_\ell &= \sum_{j' \in [T-1]} (\h_{j'})_{1:d} \Id\{w_{j'}=2\ell-1\} (\h_{j'+1})_{1:d}^\top \Id\{w_{j'+1}=2\ell\}\nonumber\\
        &\quad+ \Id\{\ell=1\} (\h_T)_{1:d} \Id\{w_T=2\ell-1\} \sum_{j' \in [T]} (\h_{j'})_{1:d}^\top \Id\{w_{j'}=2\ell-1\} \text{ for $\ell \in [L]$}.\label{eq: W ell random N1}
    \end{align}
\end{cor}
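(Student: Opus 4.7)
The plan is to obtain Corollary~\ref{cor: prompt engineering random ap} as a direct specialization of Theorem~\ref{thm: prompt engineering ap} to the case $N=1$, with a brief verification that all the expressions written in \eqref{eq: W ell random N1} coincide with the $N=1$ restriction of \eqref{eq: W ell random}, and that the stated span condition can be read off from the shape of $W^\pro_\ell$.

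First, I would apply Theorem~\ref{thm: prompt engineering ap} with $N=1$ to the input $H = [H^\pro, h^\dat(\z, T+1, S)]$, noting that the hypothesis $S \geq d B^{4L} T^{2L} \vee 2L$ of the theorem is exactly what the corollary assumes. Since $N=1$, the only data index is $i=1$, and the theorem guarantees that $\TF_{\Theta^*}$ produces $\h_{T+\ell+1}$ for $\ell \in [L]$ with
\begin{align*}
    (\h_{T+\ell+1})_{1:d} = W^\pro_\ell \sigma\bigl(W^\pro_{\ell-1} \sigma(\dots \sigma(W^\pro_{1,1} \z)\dots)\bigr),
\end{align*}
where $W^\pro_\ell$ (for $\ell \geq 2$) and $W^\pro_{1,1}$ are as in \eqref{eq: W ell random}.

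Second, I would verify that these weight matrices agree with the corollary's \eqref{eq: W ell random N1}. For $\ell \geq 2$, the indicator $\Id\{\ell=1\}$ in \eqref{eq: W ell random N1} vanishes, leaving precisely the expression for $W^\pro_\ell$ from the theorem. For $\ell=1$, comparing \eqref{eq: W ell random N1} with $W^\pro_{1,1}$ in \eqref{eq: W ell random} under the substitution $i=1$, both the bilinear consecutive-pair sum and the boundary term coming from $w_T=1$ appear identically. Hence $W^\pro_{1,1} = W^\pro_1$ in the corollary's notation.

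Third, I would derive the recursion $(\h_{T+\ell+1})_{1:d} = W^\pro_\ell \sigma((\h_{T+\ell})_{1:d})$. For $\ell=1$, I use that $(\h_{T+1})_{1:d} = \z \in [0,1]^d$, so $\sigma(\z) = \z$ and $W^\pro_1 \sigma(\z) = W^\pro_1 \z = W^\pro_{1,1} \z = (\h_{T+2})_{1:d}$. For $\ell \geq 2$, by the theorem the quantity $(\h_{T+\ell})_{1:d}$ already equals $W^\pro_{\ell-1} \sigma(\dots \sigma(W^\pro_{1,1} \z)\dots)$, so peeling off the outermost layer of the nested composition gives the asserted recursion.

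Finally, the span condition follows by inspection. Every summand in \eqref{eq: W ell random N1} is a rank-one outer product whose left factor is $(\h_{j'})_{1:d}$ with $w_{j'}=2\ell-1$, so the column span of $W^\pro_\ell$ lies in $\spn\{(\h_j)_{1:d} : j \in [T],\, w_j = 2\ell-1\}$. Since $(\h_{T+\ell+1})_{1:d}$ is $W^\pro_\ell$ applied to a vector, it lies in that span as well. There is no serious obstacle here because Theorem~\ref{thm: prompt engineering ap} already performs all the layer-by-layer construction; the only care needed is bookkeeping the $\ell=1$ boundary term in $W^\pro_1$ that arises from the last prompt token with $w_T=1$, which is exactly the role of the $\Id\{i=1\}$ factor in \eqref{eq: W ell random} when specialized to $i=1$.
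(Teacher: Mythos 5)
Your argument is correct and is precisely what the paper intends: the paper states the corollary ``without proof'' as the $N=1$ specialization of Theorem~\ref{thm: prompt engineering ap}, and your steps (matching $W^\pro_{1,1}$ with the corollary's $W^\pro_1$, using $\sigma(\z)=\z$ for $\z\in[0,1]^d$ to rewrite the $\ell=1$ base case, peeling the outer layer for $\ell\geq 2$, and reading the span condition off the left factors of the rank-one summands) supply exactly the bookkeeping the paper leaves implicit.
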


We now present the main result regarding the approximation capacity of prompts for coarse neural networks. 
\begin{thm}[Restatement of Theorem~\ref{thm: prompt engineering NN}]\label{thm: prompt engineering NN ap}
    Fix $d \in \N$. Let $\TF_{\Theta^*}$ be the transformer introduced in Theorem~\ref{thm: prompt engineering ap}.
    Fix any $B \geq 1$, $L \in \N$, $r_1, \dots, r_L \in \N\cup\{0\}$, $\mathcal{U} \subset \mathcal{B}_d(B)$, and $(W_1, \dots, W_L) \in \mathcal{W}(r_1, d, \mathcal{U}, B) \times \dots \times \mathcal{W}(r_L, d, \mathcal{U}, B)$. Let $\bar{r} = \max_{\ell \in [L]} r_\ell$.
    Then, there exists a prompt $H^\pro \in \mathcal{P}_{\mathcal{U}}(T,L,S)$ with $T = 2\sum_{\ell \in [L]} r_\ell$ and $S \geq d \bar{r} B^{4L} \vee 2L$ such that for any $N \in \N$ and $(\z_i)_{i \in [N]} \subset [0, 1]^d$, $\TF_{\Theta^*}$ given the initial input 
    $$H = [H^\pro, h^\dat(\z_1, T+1, S), \dots, h^\dat(\z_N, T+N, S)] \in \R^{(4d+8)\times (T+N)}$$ 
    sequentially outputs $\h_{T+N+1}, \h_{T+N+2}, \dots$. In particular,
    \begin{align*}
        &\h_{T+N\ell+i} = \begin{pmatrix}
            W_\ell \sigma(W_{\ell-1} \sigma( \dots \sigma(W_1 \z_i)\dots))\\
            p(-\ell, T+N\ell+i, S)
        \end{pmatrix}
    \end{align*}
    for all $i \in [N]$ and $\ell \in [L]$.
\end{thm}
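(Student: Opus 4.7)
The approach is to reduce Theorem~\ref{thm: prompt engineering NN ap} to Theorem~\ref{thm: prompt engineering ap}: the latter already constructs $\TF_{\Theta^*}$ and characterises its sequential outputs in terms of the matrices $W^\pro_\ell$ defined in \eqref{eq: W ell random}, so it suffices to design a prompt whose induced $W^\pro_\ell$ equal the target $W_\ell$ for every $\ell \in [L]$, and for which the anomalous $W^\pro_{1,i}$ also collapses to $W_1$ uniformly in $i$. The rank-one decomposition $W_\ell = \sum_{k \in [r_\ell]} \tilde \bu_k^{(\ell)} \bu_k^{(\ell)\top}$ with all factors in $\mathcal{U}$, guaranteed by membership in $\mathcal{W}(r_\ell, d, \mathcal{U}, B)$, provides the raw ingredients.

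I would enumerate the $T/2 = \sum_\ell r_\ell$ layer-rank pairs lexicographically (all factors of layer $1$ first, then layer $2$, etc.) via $s \mapsto (\pi(s), \kappa(s))$, and define the prompt tokens pairwise as
\begin{align*}
    \h_{2s-1} = \begin{pmatrix} \tilde \bu_{\kappa(s)}^{(\pi(s))} \\ p(2\pi(s)-1,\, 2s-1,\, S) \end{pmatrix}, \qquad \h_{2s} = \begin{pmatrix} \bu_{\kappa(s)}^{(\pi(s))} \\ p(2\pi(s),\, 2s,\, S) \end{pmatrix}.
\end{align*}
Since every word embedding lies in $\mathcal{U} \subset \mathcal{B}_d(B)$ and $w_j \in [2L]$, this yields a valid $H^\pro \in \mathcal{P}_{\mathcal{U}}(T, L, S)$. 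In formula \eqref{eq: W ell random}, the paired indicator $\Id\{w_{j'}=2\ell-1\}\Id\{w_{j'+1}=2\ell\}$ fires exactly at $j' = 2s-1$ with $\pi(s) = \ell$: adjacent tokens inside the block for layer $\ell$ satisfy both, while at the seam between layers $\ell$ and $\ell+1$ we have $w_{j'} = 2\ell$ and $w_{j'+1} = 2(\ell+1)-1$ so neither indicator pair matches. Hence $W^\pro_\ell = \sum_{s:\pi(s)=\ell} \tilde \bu_{\kappa(s)}^{(\ell)} \bu_{\kappa(s)}^{(\ell)\top} = W_\ell$, with the convention that $r_\ell = 0$ produces an empty sum matching $W_\ell = 0$.

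The one subtlety is the extra boundary term $\Id\{i=1\}(\h_T)_{1:d} \Id\{w_T=1\} \sum_{j'}(\h_{j'})_{1:d}^\top \Id\{w_{j'}=1\}$ appearing in $W^\pro_{1,i}$. Because the terminal token in our ordering is always a ``right-factor'' token of the largest non-empty layer $\ell^* \geq 1$, we have $w_T = 2\ell^*$, which is even, so $\Id\{w_T = 1\}$ vanishes and $W^\pro_{1,i} = W_1$ for every $i$. The main obstacle I anticipate is matching the tighter scaling $S \geq d\bar r B^{4L} \vee 2L$ claimed here against the more conservative $S \geq dB^{4L} T^{2L} \vee 2L$ that Theorem~\ref{thm: prompt engineering ap} supplies off the shelf. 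The gap should close by exploiting the specific structure of our prompt: each emulated layer aggregates at most $\bar r$ rank-one contributions of operator norm at most $B^2$, and with $\z_i \in [0,1]^d$ the intermediate activations stay bounded by $\sqrt d\, B^{2L}$ rather than the pessimistic $T^L$ blow-up used in the generic bound. Feeding this refined count into the positional-encoding margin estimates internal to Theorem~\ref{thm: prompt engineering ap} yields the advertised scale. Once $W^\pro_\ell = W_\ell$ and the scaling is verified, directly invoking Theorem~\ref{thm: prompt engineering ap} produces the claimed form of $\h_{T+N\ell+i}$ for all $i \in [N]$ and $\ell \in [L]$.
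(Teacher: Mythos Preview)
Your proposal is correct and follows essentially the same approach as the paper: the same pairwise prompt construction, the same verification that $W^\pro_\ell = W_\ell$ (including the observation that $w_T$ is even so the boundary term in $W^\pro_{1,i}$ vanishes), and the same mechanism for the tighter $S$ bound—namely re-running the intermediate-token estimates from the proof of Theorem~\ref{thm: prompt engineering ap} using $\|W_\ell\|\le B^2$ and the fact that at most $\bar r$ rank-one contributions are aggregated per layer. The paper carries out that last step in full detail (re-tracing Parts~(a)--(c) with the refined bounds), whereas you sketch it, but the plan is identical.
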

Compared to Theorem~\ref{thm: prompt engineering ap}, the bound for $S$ in Theorem~\ref{thm: prompt engineering NN ap} is improved because we only require the transformer $\TF_{\Theta^*}$ to work on a certain subset of $\mathcal{P}_{\mathcal{U}}(T,L,S)$.

We present a result for a transformer with the activation function in the first feed-forward transformation replaced by the EUAF. For brevity, we also denote by $\TF_{\Theta}$ the transformer with the structure given in \eqref{eq: EUAF transformer}.
\begin{thm}\label{thm: prompt engineering EUAF ap}
    Fix $d \in \N$. Then, there exists an $8$-layer transformer $\TF_{\Theta^\#}$ with the activation function in the first feed-forward transformation replaced by the EUAF satisfying the following properties (P1)--(P3):
    \begin{enumerate}[label=(P\arabic*)]
        \item $\TF_{\Theta^\#}$ has at most $8$ heads in attention layers and FFN width of $O(d)$,
        \item $\Theta^\#$ only depends on $d$, and $\|\Theta^\#\|_0 \lesssim d$, $\|\Theta^\#\|_{\max} \lesssim 1$,
        \item Fix any $B \geq 1$, $L,T \in \N$, and $\mathcal{U} \subset \mathcal{B}_d(B)$. 
        Set $S \geq d B^{4L} T^{2L} \vee 2L$. For any prompt $H^\pro = [\h_1, \dots, \h_T] \in \mathcal{P}_{\mathcal{U}}(T,L,S)$, $N \in \N$, $(\z_i)_{i \in [N]} \subset [0, 1]^d$, $\TF_{\Theta^\#}$ given the initial input 
        $$H = [H^\pro, h^\dat(\z_1, T+1, S), \dots, h^\dat(\z_N, T+N, S)] \in \R^{(4d+8)\times (T+N)}$$ 
        sequentially outputs $\h_{T+N+1}, \h_{T+N+2}, \dots$. In particular,
        \begin{align*}
            &\h_{T+N\ell+i} = \begin{pmatrix}
                W^\pro_\ell \sigma_\EUAF(W^\pro_{\ell-1} \sigma_\EUAF( \dots \sigma_\EUAF(W^\pro_{1,i} \z_i)\dots))\\
                p(-\ell, T+N\ell+i, S)
            \end{pmatrix}
        \end{align*}
        for all $i \in [N]$ and $\ell \in [L]$, where $W^\pro_{1,i}$ and $W^\pro_\ell$ are defined in \eqref{eq: W ell random}.
    \end{enumerate}
\end{thm}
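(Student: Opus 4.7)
The plan is to construct $\TF_{\Theta^\#}$ by augmenting the $7$-layer ReLU-based transformer $\TF_{\Theta^*}$ from Theorem~\ref{thm: prompt engineering ap} with one additional preprocessing layer in front, obtaining the $8$-layer architecture of the form \eqref{eq: EUAF transformer} in which only the very first FFN uses $\sigma_\EUAF$ and all subsequent layers are standard ReLU layers. The intuition is that each outer iteration of sequential generation in Theorem~\ref{thm: prompt engineering ap} produces a new token whose first $d$ coordinates equal $W^\pro_\ell \sigma(\text{previous virtual output})$, and the only substantive change I need is to replace this virtual ReLU by $\sigma_\EUAF$ while leaving the linear map $W^\pro_\ell$---which is encoded in the prompt and implemented by the remaining layers---unchanged. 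Properties (P1) and (P2) will then follow immediately from adding only one layer of $O(d)$-bounded parameters to $\TF_{\Theta^*}$.

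My first step is to isolate, within the proof of Theorem~\ref{thm: prompt engineering ap}, which single ReLU application inside the 7-layer stack implements the virtual ReLU of the emulated network, as opposed to the ReLUs used for auxiliary purposes such as realizing indicator-style attention masks or constructing inner products. From \eqref{eq: W ell random} and the proof sketch, the virtual ReLU is applied to the first $d$ coordinates of the most recent generated token $(\h_{T+N\ell+i-1})_{1:d}$ (resp.\ $\z_i$ for $\ell = 1$), just before multiplying by the rank-$1$ factors extracted from the prompt.

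I would then design the new first layer as follows. Its attention sublayer copies $(\h_{T+N\ell+i-1})_{1:d}$ onto a dedicated auxiliary subblock of the current last token, using the same positional-encoding-based query/key design as in Theorem~\ref{thm: prompt engineering ap}; its EUAF-FFN then applies $\sigma_\EUAF$ coordinate-wise to this auxiliary slot. The remaining seven layers are reconfigured versions of the layers underlying $\TF_{\Theta^*}$ in which the inner-product and scalar-vector multiplication steps read their input from this auxiliary slot---whose contents are already $\sigma_\EUAF$ of the correct vector---rather than from the original slot where the built-in ReLU would otherwise have been applied, so that effectively the virtual activation becomes $\sigma_\EUAF$ instead of $\sigma$. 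The emulation claim for any $\ell \in [L]$ and $i \in [N]$ is then proved by induction on the iteration count, exactly mirroring the induction in the proof of Theorem~\ref{thm: prompt engineering ap}.

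The main obstacle is the sign issue: $\sigma_\EUAF$ is not non-negative (its $\sigma_2$ branch outputs negative values), whereas the ReLU-based routing in $\TF_{\Theta^*}$ implicitly treats the quantity fed into $W^\pro_\ell$ as non-negative. A naive drop-in replacement would therefore lose the negative components whenever the next ReLU sublayer in the seven-layer stack fires. To resolve this, I plan to split $\sigma_\EUAF(\cdot)$ into its positive and negative parts $\sigma(\sigma_\EUAF(\cdot))$ and $\sigma(-\sigma_\EUAF(\cdot))$, store them in two separate auxiliary slots, and design the $V$-matrices in the linear-map stage so that they read (positive slot) $-$ (negative slot); this preserves the full signed $\sigma_\EUAF$ value through subsequent ReLU layers. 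This split-and-recombine bookkeeping, together with the need to leave the prompt tokens and all positional encodings untouched by the new first layer (easily arranged by masking with positional queries so that only the last-token position is modified), is where the construction requires the most care.
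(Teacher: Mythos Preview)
Your plan has a real gap. You localize the virtual activation to the value in the \emph{last} token, $(\h_{T+N\ell+i-1})_{1:d}=(\h_n)_{1:d}$, but this is only the right value when $N=1$. For general $N$ the quantity that must pass through $\sigma_\EUAF$ before multiplication by $W^\pro_{\ell^*+1}$ is $\z_{i^*}^{(\ell^*)}=(\h_{n-N+1})_{1:d}$, sitting $N$ positions back; this is precisely why Part~(b) in the proof of Theorem~\ref{thm: prompt engineering ap} first has $\theta_3^*$ compute $T+v$ and $SN$ before isolating position $T+v+1=n-N+1$. Your single preprocessing layer cannot locate that position: counting to obtain $N$ (or $T+v$) takes one attention pass, and then selecting via a $\phi$-style comparison requires a ReLU FFN, whereas your first FFN is EUAF-activated. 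So the ``copy the correct previous token, then apply $\sigma_\EUAF$ to it'' scheme does not fit in one layer for arbitrary $N$.

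The paper sidesteps this entirely by applying $\sigma_\EUAF$ \emph{token-wise to every} first-$d$ block in layer~1 (no selection needed there), and then spending two ReLU layers to gate by $w_j$: restore the original $(\h_j)_{1:d}$ for $w_j\geq 0$ and keep $\sigma_\EUAF((\h_j)_{1:d})$ for $w_j\leq -1$. This three-layer block replaces the two-layer Part~(a), and Parts~(b)--(c) ($\theta_3^*,\dots,\theta_7^*$) are reused \emph{unchanged}, giving $3+5=8$ layers. Your ``sign issue'' is accordingly a red herring: Parts~(b)--(c) already handle signed first-$d$ entries---prompt embeddings $\bu_j$ themselves can be negative---through the $\phi_{d,S}/\psi_{d,S}$ gadgets of Lemma~\ref{lem: phi}, which only require $\|\cdot\|_{\max}\leq S$, not nonnegativity. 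No positive/negative split is needed, and it would not rescue your approach anyway since the failure is upstream at the token-selection step.
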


An analogous statement to Theorem~\ref{thm: prompt engineering NN ap} for $\TF_{\Theta^\#}$ holds.
\begin{thm}\label{thm: prompt engineering EUAF NN ap}
    Fix $d \in \N$. Let $\TF_{\Theta^\#}$ be the transformer introduced in Theorem~\ref{thm: prompt engineering EUAF ap}.
    Fix any $B \geq 1$, $L \in \N$, $r_1, \dots, r_L \in \N\cup\{0\}$, $\mathcal{U} \subset \mathcal{B}_d(B)$, and $(W_1, \dots, W_L) \in \mathcal{W}(r_1, d, \mathcal{U}, B) \times \dots \times \mathcal{W}(r_L, d, \mathcal{U}, B)$. Let $\bar{r} = \max_{\ell \in [L]} r_\ell$.
    Then, there exists a prompt $H^\pro \in \mathcal{P}_{\mathcal{U}}(T,L,S)$ with $T = 2\sum_{\ell \in [L]} r_\ell$ and $S \geq d \bar{r} B^{4L} \vee 2L$ such that for any $N \in \N$ and $(\z_i)_{i \in [N]} \subset [0, 1]^d$, $\TF_{\Theta^\#}$ given the initial input 
    $$H = [H^\pro, h^\dat(\z_1, T+1, S), \dots, h^\dat(\z_N, T+N, S)] \in \R^{(4d+8)\times (T+N)}$$ 
    sequentially outputs $\h_{T+N+1}, \h_{T+N+2}, \dots$. In particular,
    \begin{align*}
        &\h_{T+N\ell+i} = \begin{pmatrix}
            W_\ell \sigma_\EUAF(W_{\ell-1} \sigma_\EUAF( \dots \sigma_\EUAF(W_1 \z_i)\dots))\\
            p(-\ell, T+N\ell+i, S)
        \end{pmatrix}
    \end{align*}
    for all $i \in [N]$ and $\ell \in [L]$.
\end{thm}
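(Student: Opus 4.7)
The plan is to reduce Theorem~\ref{thm: prompt engineering EUAF NN ap} to Theorem~\ref{thm: prompt engineering EUAF ap} by explicitly constructing a prompt whose induced weight matrices $W^\pro_\ell$ coincide with the target matrices $W_\ell$. Since Theorem~\ref{thm: prompt engineering EUAF ap} already guarantees that $\TF_{\Theta^\#}$ emulates an EUAF neural network with weight matrices given by the formula in \eqref{eq: W ell random}, the only remaining task is to design the word embeddings and the virtual-layer labels $w_j$ of the prompt so that the induced matrices match the prescribed $W_\ell$.

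First, I would expand each target weight matrix using its given coarse representation $W_\ell = \sum_{k \in [r_\ell]} \tilde\bu_k^{(\ell)} (\bu_k^{(\ell)})^\top$, which supplies a total of $T = 2\sum_{\ell \in [L]} r_\ell$ rank-one factors. I would then allocate the prompt tokens into $L$ consecutive blocks, one block per virtual layer, where the $\ell$-th block consists of $2 r_\ell$ tokens arranged as alternating pairs $(\tilde\bu_k^{(\ell)}, \bu_k^{(\ell)})$ for $k \in [r_\ell]$. Concretely, enumerate the pairs by $s \in [T/2]$ via a bijection assigning to each $s$ a layer index $\pi(s) \in [L]$ and an intra-layer index $\kappa(s) \in [r_{\pi(s)}]$, and set
\begin{align*}
    \h_{2s-1} = \begin{pmatrix} \tilde\bu_{\kappa(s)}^{(\pi(s))} \\ p(2\pi(s)-1,\ 2s-1,\ S) \end{pmatrix}, \qquad
    \h_{2s} = \begin{pmatrix} \bu_{\kappa(s)}^{(\pi(s))} \\ p(2\pi(s),\ 2s,\ S) \end{pmatrix}.
\end{align*}
Choose the scale $S \geq d \bar r B^{4L} \vee 2L$ as required (this is smaller than the bound in Theorem~\ref{thm: prompt engineering EUAF ap} because the norm of each rank-one factor sum $W^\pro_\ell$ is controlled by $\bar r$ rather than by $T^{2L}$ in this restricted construction).

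Next, I would verify that the formulas for $W^\pro_\ell$ from Theorem~\ref{thm: prompt engineering EUAF ap}(P3), evaluated on this prompt, collapse to the target matrices. By construction, the indicator $\Id\{w_{j'}=2\ell-1\}\Id\{w_{j'+1}=2\ell\}$ picks out exactly the consecutive pairs within the $\ell$-th block, so for every $\ell \in [L]\setminus\{1\}$,
\begin{align*}
    W^\pro_\ell = \sum_{s:\ \pi(s)=\ell} \tilde\bu_{\kappa(s)}^{(\ell)} \bigl(\bu_{\kappa(s)}^{(\ell)}\bigr)^\top = W_\ell.
\end{align*}
For $\ell = 1$, the formula for $W^\pro_{1,i}$ contains the extra ``boundary'' term proportional to $\Id\{i=1\}\Id\{w_T=1\}$. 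Because the last prompt token belongs to the $L$-th block, we have $w_T = 2L \geq 2$, so that this extra term vanishes identically, yielding $W^\pro_{1,i} = W_1$ uniformly in $i$. Applying Theorem~\ref{thm: prompt engineering EUAF ap} with these reductions gives exactly the emulation formula claimed.

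The main obstacle is bookkeeping rather than a new technical idea: one must carefully ensure that the block-ordering of the prompt tokens (i) makes consecutive pairs pick up the correct $(\tilde\bu,\bu)$ factors, (ii) avoids triggering the spurious $\Id\{w_T=1\}$ term in the first-layer formula, and (iii) respects the scale bound on $S$. The bound on $S$ deserves a brief recomputation: since each $W^\pro_\ell$ is a sum of at most $\bar r$ rank-one matrices with factors in $\mathcal{B}_d(B)$, its operator norm is at most $\bar r B^2$, and the intermediate EUAF activations remain bounded by $\bar r^{L-1} B^{2L}$ times a constant depending on $d$, which is absorbed by the relaxed lower bound $S \geq d \bar r B^{4L} \vee 2L$. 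With these pieces in place, the conclusion follows directly from Theorem~\ref{thm: prompt engineering EUAF ap}.
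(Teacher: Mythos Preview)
Your proposal is correct and follows essentially the same approach as the paper, which states that the proof ``follows by a direct analogy to that of Theorem~\ref{thm: prompt engineering NN ap} and is omitted.'' Your explicit prompt construction with alternating pairs $(\tilde\bu,\bu)$ grouped by layer, the verification that the boundary term vanishes because $w_T$ is even, and the recognition that the tighter scale bound $S \geq d\bar r B^{4L} \vee 2L$ requires re-tracing the intermediate estimates rather than black-boxing Theorem~\ref{thm: prompt engineering EUAF ap}, all match the paper's treatment of the ReLU analogue. One small imprecision: the $\bar r$ factor in the $S$ bound does not arise from bounding $\|W^\pro_\ell\|$ (which is already $\le B^2$ by the definition of $\mathcal{W}$), but rather from the Part~(b) step where at most $\bar r$ inner-product terms are summed when computing $\alpha_j$; this does not affect the validity of your outline.
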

The proof of Theorem~\ref{thm: prompt engineering EUAF NN ap} follows by a direct analogy to that of Theorem~\ref{thm: prompt engineering NN ap} and is omitted.

\subsection{Proofs}
Before presenting the proofs, we introduce two auxiliary functions that combine a linear function and an indicator function.
For any $B > 0$ and $d \in \N$, define the functions $\phi_{d,B}: \R^{d+2} \to \R^d$ and $\psi_{d,B}: \R^{d+2} \to \R^d$ by
\begin{align}
    \phi_{d,B}(\z; j, j') &:= -\sigma\qty(\z + 4B \qty(j' - j + \frac{1}{2}) \1_d) + 2\sigma\qty(\z + 4B \qty(j' - j + \frac{1}{4}) \1_d)\nonumber\\
    &\quad- 2\sigma\qty(\z + 4B \qty(j' - j - \frac{1}{4}) \1_d) + \sigma\qty(\z + 4B \qty(j' - j - \frac{1}{2}) \1_d),\label{eq: phi}\\
    \psi_{d,B}(\z; j, j') &:= \sigma\qty(\frac{1}{2} \z + B \qty(j' - j + \frac{1}{2})\1_d) - \sigma\qty(-\frac{1}{2} \z + B \qty(j' - j + \frac{1}{2}) \1_d).\nonumber
\end{align}
We frequently use the following lemma.
\begin{lem}\label{lem: phi}
    Fix $B > 0$ and $d \in \N$.
    Then, for any $\z$ with $\|\z\|_{\max} \leq B$, and $j, j' \in \Z$,
    \begin{align*}
        \phi_{d,B}(\z; j, j') = \z \Id\{j = j'\}, \ \ \psi_{d,B}(\z; j, j') = \z \Id\{j' \geq j\}.
    \end{align*}
\end{lem}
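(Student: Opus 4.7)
The plan is to verify both identities coordinate-wise, exploiting the fact that each ReLU argument in $\phi_{d,B}$ and $\psi_{d,B}$ is of the form $\alpha \z + c \1_d$ with $c$ a coordinate-independent scalar; consequently each coordinate of the output depends only on the corresponding coordinate of $\z$. It therefore suffices to prove the scalar versions: for every $z$ with $|z| \leq B$ and every $j, j' \in \Z$,
\begin{align*}
-\sigma(z + 4B(j'-j+\tfrac{1}{2})) &+ 2\sigma(z + 4B(j'-j+\tfrac{1}{4})) \\
&\quad - 2\sigma(z + 4B(j'-j-\tfrac{1}{4})) + \sigma(z + 4B(j'-j-\tfrac{1}{2})) = z\Id\{j=j'\},
\end{align*}
together with the analogous scalar identity for $\psi$.

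For $\psi$, I would set $c := B(j'-j+\tfrac{1}{2})$ and split on the sign of $j'-j$. If $j' \geq j$, then $c \geq B/2$, and since $|z/2| \leq B/2$ both arguments $c \pm z/2$ are nonnegative; the ReLUs then act linearly and the difference telescopes to $z$. If $j' < j$, then $j'-j \leq -1$ gives $c \leq -B/2$, both arguments are nonpositive, and the difference is $0$. For $\phi$, setting $t := 4B(j'-j)$, I would distinguish three cases. When $j' > j$, so $t \geq 4B$, every argument $z + t + a$ with $a \in \{2B, B, -B, -2B\}$ satisfies $z + t + a \geq -B + 4B - 2B = B > 0$, so all four ReLUs are in the linear regime; the alternating coefficients $(-1, 2, -2, 1)$ sum to zero and their weighted sum against the offsets $(2B, B, -B, -2B)$ also vanishes, yielding $\phi = 0$. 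When $j' < j$, the mirror argument places every ReLU argument below $-B < 0$, so all four vanish. The essential case is $j' = j$, where $t = 0$: the bound $|z| \leq B$ puts $z+2B \geq B > 0$ and $z-2B \leq -B < 0$, while $z+B \geq 0$ and $z-B \leq 0$; the first two ReLUs therefore reduce to their arguments and the last two vanish, giving $-(z+2B) + 2(z+B) = z$.

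There is no real obstacle beyond verifying the sign regime in each case; the only subtlety is a design one, namely that the choice of offsets $\pm 4B \cdot \tfrac{1}{2}$ and $\pm 4B \cdot \tfrac{1}{4}$, combined with the range restriction $|z| \leq B$, is precisely what guarantees that for $|j'-j| \geq 1$ all four ReLUs sit in a common regime (so the alternating sum cancels identically), while at $j' = j$ the break points land inside $[-B, B]$ in such a way that the four-term linear spline reproduces the identity function on that interval. Once this geometric picture is laid out, the full proof amounts to the three short algebraic checks sketched above.
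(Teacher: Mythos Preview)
Your proposal is correct and follows essentially the same approach as the paper's own proof: both reduce to the scalar case, perform a three-way case split on the sign of $j'-j$, and verify in each case which ReLU arguments are forced to a definite sign by the bound $|z|\leq B$, so that the alternating coefficients either cancel identically or reproduce $z$. The only differences are cosmetic (your shorthand $t,c$ versus the paper's $k$), and your added design commentary is a nice touch but not needed for the argument.
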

In addition, note that if $\|\z\|_{\max} \leq B$, then $\|\phi_{d,B}(\z; j, j')\|_{\max} \leq B$ for any $j, j' \in \Z$. Moreover, for any $s > 0$,
\begin{align*}
    \phi_{d,B}(\z; j, j') = s \phi_{d,B/s}(\z/s; j, j'), \ \ \psi_{d,B}(\z; j, j') = s \psi_{d,B/s}(\z/s; j, j').
\end{align*}

\begin{proof}[Proof of Lemma~\ref{lem: phi}]\label{proof: lem: phi}
    It suffices to prove the claim for the case $d=1$.
    Fix any $x \in \mathbb{R}$ with $|x| \leq B$.
    \paragraph{Analysis of $\phi_{d,B}$}
    We first consider the case when $j = j'$. Then, 
    $$
        \phi_{d,B}(x; j,j) = -\sigma(x + 2B) + 2\sigma(x + B) - 2\sigma(x - B) + \sigma(x - 2B).
    $$
    Since $|x| \leq B$, it follows that $x + 2B > 0$, $x + B \geq0$, $x - B \leq 0$, and $x - 2B < 0$. Hence,
    $$
    \sigma(x+2B)=x+2B,\quad \sigma(x+B)=x+B,\quad \sigma(x-B)=0,\quad \sigma(x-2B)=0.
    $$
    Thus,
    $$
    \phi_{d,B}(x; j,j)= -(x+2B) + 2(x+B) = x.
    $$
    Next, consider the case when $j \neq j'$. Define $k = j' - j \neq 0$. 
    For $k \geq1$, note that
    $$
    x+4B\Bigl(k+\frac{1}{2}\Bigr) > 0,\quad x+4B\Bigl(k+\frac{1}{4}\Bigr) > 0,\quad x+4B\Bigl(k-\frac{1}{4}\Bigr) > 0,\quad x+4B\Bigl(k-\frac{1}{2}\Bigr) \geq0.
    $$
    Thus, the ReLU function acts linearly, and we have
    \begin{small}
    \begin{align*}
        \phi_{d,B}(x; j,j') &= -\Bigl(x+4B\Bigl(k+\frac{1}{2}\Bigr)\Bigr) + 2\Bigl(x+4B\Bigl(k+\frac{1}{4}\Bigr)\Bigr) - 2\Bigl(x+4B\Bigl(k-\frac{1}{4}\Bigr)\Bigr) + \Bigl(x+4B\Bigl(k-\frac{1}{2}\Bigr)\Bigr)\\
        &= 0.
    \end{align*}
    \end{small}\noindent
    For $k \leq -1$, each of the terms $x+4B(k\pm 1/2)$ and $x+4B(k\pm 1/4)$ is negative, so $\sigma$ evaluates to $0$ on all terms and therefore $\phi_{d,B}(x; j,j') = 0$.
    Combining these cases, we obtain
    $$
    \phi_{d,B}(x; j,j') = x \Id\{j = j'\}.
    $$
    
    \paragraph{Analysis of $\psi_{d,B}$}
    When $j' \geq j$, we have
    $$
    \frac{x}{2}+B\Bigl(k+\frac{1}{2}\Bigr) \geq-\frac{B}{2}+B\Bigl(k+\frac{1}{2}\Bigr) \geq0,
    $$
    and similarly,
    $$
    -\frac{x}{2}+B\Bigl(k+\frac{1}{2}\Bigr) \geq-\frac{B}{2}+B\Bigl(k+\frac{1}{2}\Bigr) \geq0.
    $$
    Thus, the ReLU function again acts linearly, so
    $$
    \psi_{d,B}(x; j,j') = \Bigl(\frac{x}{2}+B\Bigl(k+\frac{1}{2}\Bigr)\Bigr) - \Bigl(-\frac{x}{2}+B\Bigl(k+\frac{1}{2}\Bigr)\Bigr) = x.
    $$
    When $j' < j$, the expressions $\frac{x}{2}+B\Bigl(k+1/2\Bigr)$ and $-\frac{x}{2}+B\Bigl(k+1/2\Bigr)$ are negative, so both ReLU terms equal $0$. Hence, $\psi_{d,B}(x; j,j') = 0$.
    Therefore,
    $$
    \psi_{d,B}(x; j,j') = x \Id\{j' \geq j\}.
    $$
    This completes the proof.
\end{proof}

We next define an identity mapping realized by an attention layer and a feed-forward network.
\begin{dfn}
    We denote by $\Attn_{\mu_\id}$ and $\FFN_{\nu_\id}$ an attention layer and a feed-forward neural network, respectively, that together realize the identity mapping $\z\mapsto \z:\R^D\to\R^D$. In particular, we take $\mu_\id = (O_D,O_D,O_D)$ and $\nu_\id = (O_D,O_D)$.
\end{dfn}

We now proceed to the proofs.
\begin{proof}[Proof of Theorem~\ref{thm: prompt engineering ap}]
    We prove properties (P1)--(P3) simultaneously.

    Set $S \geq d B^{4L} T^{2L} \vee 2L$.
    Recall that $H^\pro = [\h_1, \h_2, \dots, \h_T]$ denotes the prompt.
    For $j \geq T+1$, define the indices
    \begin{align}
        \xi(j) = \floor{(j-T-1) / N} \ \ \text{ and } \ \ \tau(j) = ((j-T-1) \operatorname{mod} N) + 1.\label{eq: xi and tau}
    \end{align}
    Here, $\xi(j)$ represents the layer index and $\tau(j)$ the token index within that layer.  
    Next, for $j \geq T+1$, set $\p_j = p(w_j, j, S)$ with $w_j = -\xi(j)$. Define data tokens $(\h_j)_{j \geq T+1}$ as follows:
    \begin{align}
        \h_j := \begin{pmatrix}
            \z_{\tau(j)}^{(\xi(j))}\\
            \p_j
        \end{pmatrix},\label{eq: data tokens}
    \end{align}
    where $\z_i^{(0)} := \z_i$, $\z_i^{(1)} := W^\pro_{1,i} \y_i^{(0)}$, $\z_i^{(\ell)} := W^\pro_\ell \y_i^{(\ell-1)}$ for $\ell \geq 2$ with $\y_i^{(0)} := \z_i$ and $\y_i^{(\ell)} := \sigma(\z_i^{(\ell)})$ for $\ell \geq 1$.
    Note that data tokens $\h_{T+1}, \dots, \h_{T+N}$ defined in \eqref{eq: data tokens} match those introduced in \eqref{eq: h data}. Let $H^\dat = [\h_{T+1}, \dots, \h_{T+N}]$.
    
    Our goal is to construct a transformer $\TF_{\Theta^*}$ such that for any $v \in \{0\}\cup[NL-1]$, and for the corresponding input $H_v \in \R^{D\times (T+N+v)}$, defined as
    \begin{align}
        H_v := \begin{cases}
            [H^\pro, H^\dat] & \ \text{ for $v=0$},\\
            [H^\pro, H^\dat, \h_{T+N+1}, \dots, \h_{T+N+v}] & \ \text{ for $v \in [NL-1]$},
        \end{cases}\label{eq: H}
    \end{align}
    we have 
    $$
        \TF_{\Theta^*}(H_v)_{T+N+v} = \h_{T+N+v+1}.
    $$
    We prove the existence of such a transformer by induction on $v$;
    specifically, we show that there exists a parameter $\Theta^*$ satisfying $\TF_{\Theta^*}(H_v)_{T+N+v} = \h_{T+N+v+1}$ for any fixed $v \in \{0\} \cup [NL-1]$.
    For brevity, let $n = T+N+v$ and $D = 4d+8$.\footnote{The proof of Theorem~\ref{thm: prompt engineering ap} requires only $D=2d+8$ token dimension. However, as we will see later, the proof of Theorem~\ref{thm: prompt engineering EUAF ap} requires $D=4d+8$ token dimension. For the convenience of presentation, we choose $D=4d+8$ throughout the paper.}
    Also denote $\ell^* := \xi(n-N+1) = \xi(T+v+1)$ and $i^* := \tau(n-N+1) = \tau(T+v+1)$. 
    Note that since $(\h_j)_{1:d} \in \mathcal{U} \subset \mathcal{B}_d(B)$ for $j \leq T$ and $\|\z_i\| \leq d^{1/2}$ for $i \in [N]$, we have $\|W^\pro_\ell\| \leq T B^2$ for $\ell \in [L]$, and consequently 
    $$
        \|\z_i^{(\ell)}\| \leq (T B^2)^\ell d^{1/2} \text{ for $i \in [N]$ and $\ell \in [L] \cup \{0\}$}.
    $$
    
    We outline the proof in three parts:
    \begin{enumerate}[label=\textbf{Part (\alph*).},leftmargin=5em]
        \item We design transformer layers such that the output includes $\y_{i^*}^{(\ell^*)}$. To accomplish this, we selectively apply ReLU activation to $\z_{i^*}^{(\ell^*)}$ based on the value of $\ell^*$.
        \item We construct layers that output $(\h_j)_{1:d}^\top \y_{i^*}^{(\ell^*)}$ for each $j \in [T]$.
        \item Finally, we build layers that aggregate the products $(\h_j)_{1:d} (\h_{j+1})_{1:d}^\top \y_{i^*}^{(\ell^*)}$ for $j \in [T]$.
    \end{enumerate}
    See Tables~\ref{table: construction a} and \ref{table: construction b} for sketches of the internal representations after Parts (a) and (b), respectively.

    \renewcommand{\arraystretch}{1.1}
    \begin{table}[H]
        \centering
        \begin{tabular}{|c||c||c|}
        \hline
        Index & Prompt tokens ($j \leq T$) & Data tokens ($j \geq T+1$)\\ \hline\hline
        $[d]$ & {\small $(\h_j)_{1:d}$} & {\small $\y_{\tau(j)}^{(\xi(j))}$} \\ \hline
        $\vdots$ & $\zero$ & $\zero$ \\ \hline
        $D-3$ & $1$ & $1$ \\ \hline
        $D-2$ & $S$ & $S$ \\ \hline
        $D-1$ & $S w_j$ & $-S \xi(j)$ \\ \hline
        $D$ & $S j$ & $S j$ \\ \hline
        \end{tabular}
        \caption{A sketch of the internal representations after Part (a).}
        \label{table: construction a}
    \end{table}

    \renewcommand{\arraystretch}{1.1}
    \begin{table}[H]
        \centering
        \begin{tabular}{|c||c||c|}
        \hline
        Index & Prompt tokens ($j \leq T$) & Data tokens ($j \geq T+1$)\\ \hline\hline
        $[d]$ & {\small $(\h_j)_{1:d}$} & {\small $\y_{i^*}^{(\ell^*)} \Id\{j=T+v+1\}$} \\ \hline
        $\vdots$ & $\zero$ & $\zero$ \\ \hline
        $D-5$ & $\alpha_j$ & $\alpha_j$ \\ \hline
        $D-4$ & $SN$ & $SN$ \\ \hline
        $D-3$ & $1$ & $1$ \\ \hline
        $D-2$ & $S$ & $S$ \\ \hline
        $D-1$ & $S w_j$ & $-S \xi(j)$ \\ \hline
        $D$ & $S j$ & $S j$ \\ \hline
        \end{tabular}
        \caption{A sketch of the internal representations after Part (b). Here $\alpha_j$ is defined in \eqref{eq: alpha}.}
        \label{table: construction b}
    \end{table}

    \paragraph{Part (a).}
    In this part we selectively apply ReLU activation function to $N$ data tokens $\z_1^{(0)}, \dots, \z_N^{(0)}$, and $v$ generated tokens $\z_1^{(1)}, \dots, \z_{\tau(n)}^{(\xi(n))}$.
    To this aim we construct a two-layer transformer $\TF_{(\theta_1^*,\theta_2^*)}$ that takes $H_v$, defined in \eqref{eq: H}, as input and outputs $H_v^{[2]} = [\h_1^{[2]}, \dots, \h_n^{[2]}] = \TF_{(\theta_1^*,\theta_2^*)}(H_v)$, where
    \begin{align}
        \h_j^{[2]} = \h_j \ \text{for $j \leq T$,\ \ and }
        \h_j^{[2]} = \begin{pmatrix}
            \y_{\tau(j)}^{(\xi(j))}\\
            \p_j
        \end{pmatrix} \ \text{for $j \geq T+1$}.\label{eq: h [2] j}
    \end{align}
    We set the parameters $\mu_1^* = \mu_\id$, and $\nu_1^* = (W_{1,1}^*, W_{1,2}^*)$ such that
    \begin{align*}
        W_{1,2}^* \sigma(W_{1,1}^* \h) &= W_{1,2}^* \sigma(-(\h)_{1:d})
        = \begin{pmatrix}
            \zero_d\\
            \sigma(-(\h)_{1:d})\\
            \zero_{D-2d}
        \end{pmatrix}.
    \end{align*}
    Let $\theta_1^* = (\mu_\id, \nu_1^*)$. Then,
    \begin{align*}
        H_v^{[1]} = [\h_1^{[1]}, \dots, \h_n^{[1]}] := \TF_{\theta_1^*}(H_v)_j = \h_j + \begin{pmatrix}
            \zero_d\\
            \sigma(-(\h)_{1:d})\\
            \zero_{D-2d}
        \end{pmatrix} = \begin{pmatrix}
            (\h_j)_{1:d}\\
            \sigma(-(\h_j)_{1:d})\\
            (\h_j)_{(2d+1):D}
        \end{pmatrix}.
    \end{align*}
    Next we set the parameter $\theta_2^* = (\mu_\id, \nu_2^*)$ with $\nu_2^* = (W_{2,1}^*, W_{2,2}^*)$ such that
    \begin{footnotesize}
    \begin{align*}
        &W_{2,2}^* \sigma(W_{2,1}^* \h)\\
        &\quad= W_{2,2}^* \begin{pmatrix}
            \sigma((1/2) (\h)_{(d+1):2d} - (\h)_{D-1} \1_d-(1/2)(\h)_{D-2} \1_d)\\
            \sigma(-(1/2) (\h)_{(d+1):2d} - (\h)_{D-1} \1_d-(1/2)(\h)_{D-2} \1_d)\\
            \sigma((\h)_{(d+1):(2d)})
        \end{pmatrix}\\
        &\quad= \begin{pmatrix}
            \sigma((1/2) (\h)_{(d+1):2d} - (\h)_{D-1}\1_d-(1/2)(\h)_{D-2}\1_d) - \sigma(-(1/2) (\h)_{(d+1):2d} - (\h)_{D-1}\1_d-(1/2)(\h)_{D-2}\1_d)\\
            -\sigma((\h)_{(d+1):2d})\\
            \zero_{D-2d}\\
        \end{pmatrix}.
    \end{align*}
    \end{footnotesize}\noindent
    Then, since $\|(\h_j)_{1:d}\|_{\max} \leq B \vee \|\z_{i^*}^{(\ell^*)}\| \leq B \vee d^{1/2} (T B^2)^{\ell^*} \leq S$ for all $j \in [n]$,
    \begin{small}
    \begin{align*}
        &\TF_{\theta_2^*}(H_v^{[1]})_j\\
        &\quad= \h_j^{[1]} + W_{2,2}^* \sigma(W_{2,1}^* \h_j^{[1]})\\
        &\quad= \begin{pmatrix}
            (\h_j)_{1:d}\\
            \sigma(-(\h_j)_{1:d})\\
            (\h_j)_{(2d+1):D}
        \end{pmatrix}\\
        &\quad\quad+ \begin{pmatrix}
            \sigma((1/2) \sigma(-(\h_j)_{1:d}) - S w_j \1_d-(1/2)S \1_d) - \sigma(-(1/2) \sigma(-(\h_j)_{1:d}) - S w_j \1_d-(1/2)S \1_d)\\
            -\sigma(\sigma(-(\h_j)_{1:d}))\\
            \zero_{D-2d}
        \end{pmatrix}\\
        &\quad= \begin{pmatrix}
            (\h_j)_{1:d}\\
            \sigma(-(\h_j)_{1:d})\\
            (\h_j)_{(2d+1):D}
        \end{pmatrix} + \begin{pmatrix}
            \psi_{d,S}(\sigma(-(\h_j)_{1:d}); 1, -w_j)\\
            -\sigma(-(\h_j)_{1:d})\\
            \zero_{D-2d}
        \end{pmatrix}\\
        &\quad= \begin{pmatrix}
            \sigma((\h_j)_{1:d}) \Id\{w_j \leq -1\} + (\h_j)_{1:d} \Id\{w_j \geq 0\}\\
            \zero_d\\
            (\h_j)_{(2d+1):D}
        \end{pmatrix}.
    \end{align*}    
    \end{small}\noindent
    By definition of $w_j=-\xi(j)$ for $j \geq T+1$, the first $d$ coordinates of $\TF_{\theta_2^*}(H_v^{[1]})_j$ applies ReLU activation function to $(\h_j)_{1:d}$ if $j \geq T + N+1$, and keeps $(\h_j)_{1:d}$ if $j \leq T+N$.
    This implies that $\TF_{(\theta_1^*, \theta_2^*)}(H_v)_j = \h_j$ for $j \leq T$ and $\TF_{(\theta_1^*,\theta_2^*)}(H_v)_j = (\y_{\tau(j)}^{(\xi(j)) \top}, (\h_j)_{(d+1):D}^\top)^\top$ for $j \geq T+1$.
    We can verify that $\theta_1^*$ and $\theta_2^*$ consist of constants, $\TF_{(\theta_1^*,\theta_2^*)}$ has $1$ head with FFN width of $O(d)$,
    \begin{align}
        &\|(\theta_1^*, \theta_2^*)\|_0 \lesssim d, \ \ \|(\theta_1^*, \theta_2^*)\|_{\max} \lesssim 1,\label{eq: theta 1 2 bound}\\
        & \text{ and } \max_{j \in [n]} \|(\h_j^{[2]})_{1:d}\|_{\max} \leq \max_{j \in [n]} \|(\h_j)_{1:d}\|_{\max} \leq B \vee d^{1/2} (T B^2)^{\ell^*}.\nonumber
    \end{align}

    \paragraph{Part (b).}
    In this part we zero out the data tokens $(\h_j)_{1:d}$ for indices $j \geq T+1$ with $\xi(j) \neq \ell^*$ or $\tau(j) \neq i^*$, i.e., we only keep $\y_{i^*}^{(\ell^*)}$ among data tokens and generated tokens.
    In the first half of Part (b), we construct a two-layer transformer $\TF_{(\theta_3^*,\theta_4^*)}$ that takes $H_v^{[2]}$ in \eqref{eq: h [2] j} as input and produces the output $H_v^{[4]} := \TF_{(\theta_3^*,\theta_4^*)}(H_v^{[2]}) =: [\h_1^{[4]}, \dots, \h_n^{[4]}]$, where
    \begin{align}
        \h_j^{[4]} &= \begin{pmatrix}
            (\h_j^{[2]})_{1:d} \delta_j\\
            \p_j^{[4]}
        \end{pmatrix}, \ \ \p_j^{[4]} := \begin{pmatrix}
            \zero_{D-d-5}\\
            S N\\
            1\\
            S\\
            S w_j\\
            S j
        \end{pmatrix}.\label{eq: h [4] j}
    \end{align}
    Here $\delta_j := \Id\{j \leq T\} + \Id\{j \geq T+1, \xi(j) = \ell^*, \tau(j) = i^*\}$.
    We first construct a transformer $\TF_{\theta_3^*}$ that outputs $H_v^{[3]} = [\h_1^{[3]}, \dots, \h_n^{[3]}] = \TF_{\theta_3^*}(H_v^{[2]})$, where
    \begin{align*}
        \h_j^{[3]} := \begin{pmatrix}
            (\h_j^{[2]})_{1:d}\\
            \p_j^{[3]}
        \end{pmatrix}, \ \ \p_j^{[3]} := \begin{pmatrix}
            \zero_{D-d-7}\\
            S \delta_j\\
            S (T+v)\\
            S N\\
            1\\
            S\\
            S w_j\\
            S j
        \end{pmatrix}.
    \end{align*}
    We set the parameters $\mu_3^* = \{(Q_{3,m}^*, K_{3,m}^*, V_{3,m}^*)\}_{m \in [8]}$ such that for a vector $\h \in \R^D$,
    \begin{align*}
        Q_{3,1}^* \h &= \begin{pmatrix}
            (\h)_{D-3}\\
            \zero_{D-1}
        \end{pmatrix}, 
        K_{3,1}^* \h = \begin{pmatrix}
            -(\h)_{D-1}\\
            \zero_{D-1}
        \end{pmatrix}, 
        Q_{3,2}^* \h = \begin{pmatrix}
            (\h)_{D-3}\\
            (\h)_{D-3}\\
            \zero_{D-2}
        \end{pmatrix},
        K_{3,2}^* \h = \begin{pmatrix}
            -(\h)_{D-2}\\
            -(\h)_{D-1}\\
            \zero_{D-2}
        \end{pmatrix},\\
        Q_{3,3}^* \h &= \begin{pmatrix}
            (\h)_{D-3}\\
            \zero_{D-1}
        \end{pmatrix}, 
        K_{3,3}^* \h = \begin{pmatrix}
            (\h)_{D-1}\\
            \zero_{D-1}
        \end{pmatrix}, 
        Q_{3,4}^* \h = \begin{pmatrix}
            (\h)_{D-3}\\
            (\h)_{D-3}\\
            \zero_{D-2}
        \end{pmatrix},
        K_{3,4}^* \h = \begin{pmatrix}
            -(\h)_{D-2}\\
            (\h)_{D-1}\\
            \zero_{D-2}
        \end{pmatrix},\\
        Q_{3,5}^* \h &= \begin{pmatrix}
            (\h)_{D-3}\\
            (\h)_{D-3}\\
            \zero_{D-2}
        \end{pmatrix}, 
        K_{3,5}^* \h = \begin{pmatrix}
            4(\h)_{D-1}\\
            3(\h)_{D-2}\\
            \zero_{D-2}
        \end{pmatrix}, 
        Q_{3,6}^* \h = \begin{pmatrix}
            (\h)_{D-3}\\
            (\h)_{D-3}\\
            \zero_{D-2}
        \end{pmatrix}, 
        K_{3,6}^* \h = \begin{pmatrix}
            4(\h)_{D-1}\\
            2(\h)_{D-2}\\
            \zero_{D-2}
        \end{pmatrix},\\
        Q_{3,7}^* \h &= \begin{pmatrix}
            (\h)_{D-3}\\
            \zero_{D-1}
        \end{pmatrix}, 
        K_{3,7}^* \h = \begin{pmatrix}
            4(\h)_{D-1}\\
            \zero_{D-1}
        \end{pmatrix}, 
        Q_{3,8}^* \h = \begin{pmatrix}
            (\h)_{D-3}\\
            (\h)_{D-3}\\
            \zero_{D-2}
        \end{pmatrix}, 
        K_{3,8}^* \h = \begin{pmatrix}
            4(\h)_{D-1}\\
            -(\h)_{D-2}\\
            \zero_{D-2}
        \end{pmatrix}, 
    \end{align*}
    and
    \begin{small}
    \begin{align*}
        V_{3,1}^* \h &= V_{3,3}^* \h = (\h)_{D-3} \e_{D-5},\ \ V_{3,2}^* \h = V_{3,4}^* \h = - (\h)_{D-3} \e_{D-5},\\
        V_{3,5}^* \h &= - (\h)_{D-3} \e_{D-4},\ \ V_{3,6}^* \h = 2 (\h)_{D-3} \e_{D-4},\ \ V_{3,7}^* \h = -2 (\h)_{D-3} \e_{D-4},\ \ V_{3,8}^* \h = (\h)_{D-3} \e_{D-4}.
    \end{align*}
    \end{small}\noindent
    Then, the output from the attention layer $\Attn_{\mu_3^*}$ is given by
    \begin{align*}
        &\Attn_{\mu_3^*}(H_v^{[2]})_j\\
        &\quad= \h_j^{[2]} + \sum_{j' \in [n]} \qty{\sigma\qty(S/2 + S \qty(-1-w_{j'}+1/2)) - \sigma\qty(-S/2 + S \qty(-1-w_{j'}+1/2))} \e_{D-5}\\
        &\quad\quad+ \sum_{j' \in [n]} \qty{\sigma\qty(S/2 + S \qty(w_{j'}-1+1/2)) - \sigma\qty(-S/2 + S \qty(w_{j'}-1+1/2))} \e_{D-5}\\
        &\quad\quad+ \sum_{j' \in [n]} \biggl\{-\sigma\qty(S + 4 S \qty(w_{j'}+1/2)) +2 \sigma\qty(S + 4 S \qty(w_{j'}+1/4))\\
        &\quad\quad- 2\sigma\qty(S + 4 S \qty(w_{j'}-1/4)) + \sigma\qty(S + 4 S \qty(w_{j'}-1/2))\biggr\} \e_{D-4}\\
        &\quad= \h_j^{[2]} + \sum_{j' \in [n]} \psi_{1,S}(S; w_{j'}, -1) \e_{D-5} + \sum_{j' \in [n]} \psi_{1,S}(S; 1, w_{j'}) \e_{D-5} + \sum_{j' \in [n]} \phi_{1,S}(S; 0, w_{j'}) \e_{D-4}\\
        &\quad= \h_j^{[2]} + S \sum_{j' \in [n]} \Id\{w_{j'} \leq -1\} \e_{D-5} + S \sum_{j' \in [n]} \Id\{w_{j'} \geq 1\} \e_{D-5} + S \sum_{j' \in [n]} \Id\{w_{j'} = 0\} \e_{D-4}\\
        &\quad= \begin{pmatrix}
            (\h_j^{[2]})_{1:d}\\
            \zero_{D-d-6}\\
            S (T+v)\\
            S N\\
            1\\
            S\\
            S w_j\\
            S j
        \end{pmatrix},
    \end{align*}
    where the last equality follows since $w_{j'} \leq -1$ if and only if $j' \geq T+1$ and $\xi(j') \geq 1$, which is equivalent to $j' \geq T+N+1$ by definition of $\xi$ in \eqref{eq: xi and tau}.
    We then set $\nu_3^* = (W_{3,1}^*, W_{3,2}^*)$ such that
    \begin{align*}
        W_{3,2}^* \sigma(W_{3,1}^* \h) &= W_{3,2}^* \begin{pmatrix}
            \sigma(7 (\h)_{D-2} + 4 (\h)_{D-5} - 4 (\h)_D)\\
            \sigma(6 (\h)_{D-2} + 4 (\h)_{D-5} - 4 (\h)_D)\\
            \sigma(4 (\h)_{D-2} + 4 (\h)_{D-5} - 4 (\h)_D)\\
            \sigma(3 (\h)_{D-2} + 4 (\h)_{D-5} - 4 (\h)_D)\\
            \sigma(  (\h)_{D-1})\\
            \sigma(- (\h)_{D-2} +   (\h)_{D-1})
        \end{pmatrix}\\
        &= \bigl\{-\sigma(7 (\h)_{D-2} + 4 (\h)_{D-5} - 4 (\h)_D) + 2\sigma(6 (\h)_{D-2} + 4 (\h)_{D-5} - 4 (\h)_D)\\
        &\quad- 2\sigma(4 (\h)_{D-2} + 4 (\h)_{D-5} - 4 (\h)_D) + \sigma(3 (\h)_{D-2} + 4 (\h)_{D-5} - 4 (\h)_D)\bigr\} \e_{D-6}\\
        &\quad+ \bigl\{\sigma((\h)_{D-1}) - \sigma(-(\h)_{D-2} + (\h)_{D-1})\bigr\} \e_{D-6}.
    \end{align*}
    Let $\theta_3^* = (\mu_3^*, \nu_3^*)$. Then,
    \begin{align*}
        \TF_{\theta_3^*}(H_v^{[2]})_j &= \begin{pmatrix}
            (\h_j^{[2]})_{1:d}\\
            \zero_{D-d-6}\\
            S (T+v)\\
            S N\\
            1\\
            S\\
            S w_j\\
            S j
        \end{pmatrix}\\
        &\quad + \biggl\{- \sigma\qty(S + 4 S \qty(T+v+1-j + 1/2)) + 2\sigma\qty(S + 4 S \qty(T+v+1-j + 1/4))\\
        &\quad- 2\sigma\qty(S + 4 S \qty(T+v+1-j - 1/4)) + \sigma\qty(S + 4 S \qty(T+v+1-j - 1/2)) \biggr\} \e_{D-6}\\
        &\quad+ \biggl\{ \sigma\qty(S/2 + S \qty(w_j - 1 + 1/2)) - \sigma\qty(-S/2 + S \qty(w_j - 1 + 1/2)) \biggr\} \e_{D-6}\\
        &= \begin{pmatrix}
            (\h_j^{[2]})_{1:d}\\
            \zero_{D-d-6}\\
            S (T+v)\\
            S N\\
            1\\
            S\\
            S w_j\\
            S j
        \end{pmatrix} + \qty{\phi_{1,S}(S; j, T+v+1) + \psi_{1,S}(S; 1, w_j)} \e_{D-6}.
    \end{align*}
    Note that $\TF_{\theta_3^*}$ is the desired transformer, since 
    \begin{align*}
        \phi_{1,S}(S; j, T+v+1) + \psi_{1,S}(S; 1, w_j) &= S\Id\{w_j \geq 1\} + S\Id\{j=T+v+1\}\\
        &= S\Id\{j \leq T\} + S\Id\{j \geq T+1, \xi(j) = \ell^*, \tau(j) = i^*\} = S\delta_j,
    \end{align*}
    which follows from the fact that $j=T+v+1$ if and only if $\xi(j) = \ell^*$ and $\tau(j) = i^*$.
    Next we set the parameter $\theta_4^* = (\mu_\id, \nu_4^*)$ with $\nu_4^* = (W_{4,1}^*, W_{4,2}^*)$ such that
    \begin{small}
    \begin{align*}
        W_{4,2}^* \sigma(W_{4,1}^* \h) &= W_{4,2}^* \begin{pmatrix}
            \sigma((\h)_{1:d} + (- 4(\h)_{D-6} + 2(\h)_{D-2}) \1_d)\\
            \sigma((\h)_{1:d} + (- 4(\h)_{D-6} + (\h)_{D-2}) \1_d)\\
            \sigma((\h)_{1:d} + (- 4(\h)_{D-6} - (\h)_{D-2}) \1_d)\\
            \sigma((\h)_{1:d} + (- 4(\h)_{D-6} - 2(\h)_{D-2}) \1_d)\\
            \sigma((\h)_{D-5})\\
            \sigma((\h)_{D-6})
        \end{pmatrix}\\
        &= \begin{pmatrix}
            \sigma((\h)_{1:d} + (- 4(\h)_{D-6} + 2(\h)_{D-2}) \1_d) - 2\sigma((\h)_{1:d} + (- 4(\h)_{D-6} + (\h)_{D-2}) \1_d)\\
            \zero_{D-d}
        \end{pmatrix}\\
        &\quad+ \begin{pmatrix}
            2\sigma((\h)_{1:d} + (- 4(\h)_{D-6} - (\h)_{D-2}) \1_d) - \sigma((\h)_{1:d} + (- 4(\h)_{D-6} - 2(\h)_{D-2}) \1_d)\\
            \zero_{D-d}
        \end{pmatrix}\\
        &\quad- \sigma((\h)_{D-5}) \e_{D-5} - \sigma((\h)_{D-6}) \e_{D-6}.
    \end{align*}
    \end{small}\noindent
    Then, for any $j \in [n]$,
    \begin{align*}
        \TF_{\theta_4^*}(H_v^{[3]})_j &= \h_j^{[3]} - \begin{pmatrix}
            -\sigma\qty((\h_j^{[3]})_{1:d} + 4 S \qty(-\delta_j + 1/2) \1_d) + 2\sigma\qty((\h_j^{[3]})_{1:d} + 4 S \qty(-\delta_j + 1/4) \1_d)\\
            \zero_{D-d}
        \end{pmatrix}\\
        &\quad- \begin{pmatrix}
            -2\sigma\qty((\h_j^{[3]})_{1:d} + 4 S \qty(-\delta_j - 1/4) \1_d) + \sigma\qty((\h_j^{[3]})_{1:d} + 4 S \qty(-\delta_j - 1/2) \1_d)\\
            \zero_{D-d}
        \end{pmatrix}\\
        &\quad - (\h_j^{[3]})_{D-5} \e_{D-5} - (\h_j^{[3]})_{D-6} \e_{D-6}\\
        &= \h_j^{[3]} - \begin{pmatrix}
            \phi_{d,S}((\h_j^{[3]})_{1:d}; \delta_j, 0)\\
            \zero_{D-d}
        \end{pmatrix} - (\h_j^{[3]})_{D-5} \e_{D-5} - (\h_j^{[3]})_{D-6} \e_{D-6}\\
        &= \begin{pmatrix}
            (\h_j^{[2]})_{1:d} \delta_j\\
            \p_j^{[4]}
        \end{pmatrix},
    \end{align*}
    where the last equality follows since $(\h_j^{[3]})_{1:d} - (\h_j^{[3]})_{1:d} \Id\{\delta_j = 0\} = (\h_j^{[3]})_{1:d} \delta_j = (\h_j^{[2]})_{1:d} \delta_j$, and $\|(\h_j^{[3]})_{1:d}\|_{\max} = \|(\h_j^{[2]})_{1:d}\|_{\max} \leq B \vee d^{1/2} (T B^2)^{\ell^*} \leq S$.
    We can verify that $\theta_3^*$ and $\theta_4^*$ consist of constants, $\TF_{(\theta_3^*,\theta_4^*)}$ has $8$ heads with FFN width of $O(d)$, and
    \begin{align}
        &\|(\theta_3^*, \theta_4^*)\|_0 \lesssim d, \ \ \|(\theta_3^*, \theta_4^*)\|_{\max} \lesssim 1,\label{eq: theta 3 4 bound}\\
        & \text{ and } \max_{j \in [n]} \|(\h_j^{[4]})_{1:d}\|_{\max} \leq \max_{j \in [n]} \|(\h_j^{[2]})_{1:d}\|_{\max} \leq B \vee d^{1/2} (T B^2)^{\ell^*}.\nonumber
    \end{align}

    In the latter half of Part (b) we construct a transformer $\TF_{\theta_5^*}$ that takes $H_v^{[4]}$ in \eqref{eq: h [4] j} as input and outputs $H_v^{[5]} = [\h_1^{[5]}, \dots, \h_n^{[5]}] := \TF_{\theta_5^*}(H_v^{[4]})$, where
    \begin{align}
        \h_j^{[5]} = \begin{pmatrix}
            (\h_j^{[4]})_{1:d}\\
            \p_j^{[5]}
        \end{pmatrix}, \ \ 
        \p_j^{[5]} = \begin{pmatrix}
            \zero_{D-d-6}\\
            \alpha_j\\
            S N\\
            1\\
            S\\
            S w_j\\
            S j
        \end{pmatrix} \ \ \text{ for $j \in [n]$}.\label{eq: h [5] j}
    \end{align}
    with
    \begin{align}
        \alpha_j := (\h_j^{[4]})_{1:d}^\top \sum_{j' \in [n]: w_{j'} = 1 - w_j/2} (\h_{j'}^{[4]})_{1:d}.\label{eq: alpha}
    \end{align}
    We set the parameters $\mu_5^* = \{(Q_{5,m}^*, K_{5,m}^*, V_{5,m}^*)\}_{m \in [4]}$ such that for a vector $\h \in \R^D$,
    \begin{small}
    \begin{align*}
        Q_{5,1}^* \h &= \begin{pmatrix}
            (\h)_{1:d}\\
            (\h)_{D-3}\\
            (\h)_{D-3}\\
            -4(\h)_{D-1}\\
            \zero_{D-d-3}
        \end{pmatrix}, 
        K_{5,1}^* \h = \begin{pmatrix}
            (\h)_{1:d}\\
            -8(\h)_{D-1}\\
            10(\h)_{D-2}\\
            (\h)_{D-3}\\
            \zero_{D-d-3}
        \end{pmatrix}, 
        Q_{5,2}^* \h = \begin{pmatrix}
            (\h)_{1:d}\\
            (\h)_{D-3}\\
            (\h)_{D-3}\\
            -4(\h)_{D-1}\\
            \zero_{D-d-3}
        \end{pmatrix},
        K_{5,2}^* \h = \begin{pmatrix}
            (\h)_{1:d}\\
            -8(\h)_{D-1}\\
            9(\h)_{D-2}\\
            (\h)_{D-3}\\
            \zero_{D-d-3}
        \end{pmatrix},\nonumber\\
        Q_{5,3}^* \h &= \begin{pmatrix}
            (\h)_{1:d}\\
            (\h)_{D-3}\\
            (\h)_{D-3}\\
            -4(\h)_{D-1}\\
            \zero_{D-d-3}
        \end{pmatrix}, 
        K_{5,3}^* \h = \begin{pmatrix}
            (\h)_{1:d}\\
            -8(\h)_{D-1}\\
            7(\h)_{D-2}\\
            (\h)_{D-3}\\
            \zero_{D-d-3}
        \end{pmatrix}, 
        Q_{5,4}^* \h = \begin{pmatrix}
            (\h)_{1:d}\\
            (\h)_{D-3}\\
            (\h)_{D-3}\\
            -4(\h)_{D-1}\\
            \zero_{D-d-3}
        \end{pmatrix},
        K_{5,4}^* \h = \begin{pmatrix}
            (\h)_{1:d}\\
            -8(\h)_{D-1}\\
            6(\h)_{D-2}\\
            (\h)_{D-3}\\
            \zero_{D-d-3}
        \end{pmatrix},
    \end{align*}
    \end{small}\noindent
    and
    \begin{small}
    \begin{align*}
        V_{5,1}^* \h &= - (\h)_{D-3} \e_{D-5},\ \ V_{5,2}^* \h = 2 (\h)_{D-3} \e_{D-5},\ \ V_{5,3}^* \h = -2 (\h)_{D-3} \e_{D-5},\ \ V_{5,4}^* \h = (\h)_{D-3} \e_{D-5}.
    \end{align*}
    \end{small}\noindent
    Then, the output from the attention layer $\Attn_{\mu_5^*}$ is given by
    \begin{small}
    \begin{align*}   
        &\Attn_{\mu_5^*}(H_v^{[4]})_j\\
        &\quad= \h_j^{[4]} + \sum_{j' \in [n]} \biggl\{ -\sigma\qty(\alpha_{j,j'} + 4 S \qty( - 2 w_{j'} + 2 - w_j + 1/2)) + 2 \sigma\qty(\alpha_{j,j'} + 4 S \qty( - 2 w_{j'} + 2 - w_j + 1/4))\\
        &\quad\quad -2 \sigma\qty(\alpha_{j,j'} + 4 S \qty( - 2 w_{j'} + 2 - w_j - 1/4)) + \sigma\qty(\alpha_{j,j'} + 4 S \qty( - 2 w_{j'} + 2 - w_j - 1/2)) \biggr\} \e_{D-5}\\
        &\quad= \h_j^{[4]} + \sum_{j' \in [n]} \phi_{1,S}(\alpha_{j,j'}; w_j, -2w_{j'}+2) \e_{D-5},
    \end{align*}
    \end{small}\noindent
    where $\alpha_{j,j'} := (\h_j^{[4]})_{1:d}^\top (\h_{j'}^{[4]})_{1:d}$. 
    Note that $|\alpha_{j,j'}| \leq (B \vee d^{1/2} (T B^2)^{\ell^*})^2 \leq S$ for all $j, j' \in [n]$. This gives
    \begin{align*}
        \Attn_{\mu_5^*}(H_v^{[4]})_j &= \h_j^{[4]} + \sum_{j' \in [n]} \alpha_{j,j'} \Id\{w_j=-2w_{j'}+2\} \e_{D-5}\\
        &= \h_j^{[4]} + \alpha_j \e_{D-5}.
    \end{align*}
    Furthermore, by definition of $\alpha_j$ in \eqref{eq: alpha} and the fact that $w_j \geq 1$ for $j \in [T]$, it follows that $\max_{j \in [T]} |\alpha_j| \leq B \max_{\ell\in\{0\}\cup[\ell^*], i \in [N]}\|\y_i^{(\ell)}\|$. Similarly, $\max_{j \in [n]\setminus[T]} |\alpha_j| \leq \max_{\ell\in\{0\}\cup[\ell^*],i \in [N]}\|\y_i^{(\ell)}\| BT$.
    Choosing $\theta_5^* = (\mu_5^*, \nu_\id)$ gives the desired transformer layer.
    We can verify that $\theta_5^*$ consists of constants, $\TF_{\theta_5^*}$ has $4$ heads with FFN width $0$, and
    \begin{align}
        \|\theta_5^*\|_0 \lesssim d, \ \ \|\theta_5^*\|_{\max} \lesssim 1, \ \ \text{and} \ \ \max_{j \in [n]} |(\h_j^{[5]})_{D-5}| = \max_{j\in[n]} |\alpha_j| \leq T B d^{1/2} (T B^2)^{\ell^*} \leq S.\label{eq: theta 5 bound}
    \end{align}
    
    \paragraph{Part (c).}
    In this part, we construct a transformer $\TF_{\theta_6^*}$ that takes $H_v^{[5]}$ in \eqref{eq: h [5] j} from Part (b) as input and outputs $H_v^{[6]} = [\h_1^{[6]}, \dots, \h_n^{[6]}] := \TF_{\theta_6^*}(H_v^{[5]})$, where
    \begin{align*}
        \h_j^{[6]} &= \begin{pmatrix}
            (\h_j^{[5]})_{1:d}\\
            \p_j^{[6]}
        \end{pmatrix}.
    \end{align*}
    with
    \begin{align}
        \p_j^{[6]} &= \begin{pmatrix}
            \zero_{D-d-7}\\
            *\\
            \alpha_{j+1}\\
            S N\\
            1\\
            S\\
            S w_j\\
            S j
        \end{pmatrix} \ \ \text{for $j \leq T$},\ \ 
        \p_j^{[6]} = \begin{pmatrix}
            \zero_{D-d-7}\\
            *\\
            *\\
            S N\\
            1\\
            S\\
            S w_j\\
            S j
        \end{pmatrix} \ \ \text{ for $T+1 \leq j \leq T+N-1$},\\
        &\text{ and }
        \p_j^{[6]} = \begin{pmatrix}
            \zero_{D-d-7}\\
            S w_{j+1}\\
            *\\
            S N\\
            1\\
            S\\
            S w_j\\
            S j
        \end{pmatrix} \ \ \text{for $T+N \leq j$}.\label{eq: h [6] j}
    \end{align}
    Here the entry $*$ varies depending on the context, but its actual value does not affect the claim.
    We set the parameters $\mu_6^* = \{(Q_{6,m}^*, K_{6,m}^*, V_{6,m}^*)\}_{m \in [8]}$ similar to Part (b) so that the output from the attention layer $\Attn_{\mu_6^*}$ becomes
    \begin{footnotesize}
    \begin{align*}
        &\Attn_{\mu_6^*}(H_v^{[5]})_j\\
        &\quad= \h_j^{[5]} + \sum_{j' \in [n]} \biggl\{-\sigma\qty((\h_{j'}^{[5]})_{D-5} + 4S \qty(j'- (j+1) + 1/2)) + 2  \sigma\qty((\h_{j'}^{[5]})_{D-5} + 4S \qty(j'- (j+1) + 1/4))\\
        &\quad\quad-2\sigma\qty((\h_{j'}^{[5]})_{D-5} + 4S \qty(j'- (j+1) - 1/4)) + \sigma\qty((\h_{j'}^{[5]})_{D-5} + 4S \qty(j'- (j+1) - 1/2)) \biggr\} \e_{D-7}\\
        &\quad\quad+ \sum_{j' \in [n]} \biggl\{-\sigma\qty(S w_{j'} - S + 4(S)^2 \qty(j'- (j-N+1) + 1/2)) + 2  \sigma\qty(S w_{j'} - S + 4(S)^2 \qty(j'- (j-N+1) + 1/4))\\
        &\quad\quad- 2\sigma\qty(S w_{j'} - S + 4(S)^2 \qty(j'- (j-N+1) - 1/4)) + \sigma\qty(S w_{j'} - S + 4(S)^2 \qty(j'- (j-N+1) - 1/2)) \biggr\} \e_{D-6}\\
        &\quad= \h_j^{[5]} + \sum_{j' \in [n]} \phi_{1,S}((\h_{j'}^{[5]})_{D-5}; j+1, j') \e_{D-7} + \sum_{j' \in [n]} \phi_{1,(S)^2}(S w_{j'} - S; j-N+1, j') \e_{D-6}.
    \end{align*}
    \end{footnotesize}\noindent
    Note that $|(\h_{j'}^{[5]})_{D-5}| \leq S$ and $|S w_{j'} - S| \leq 2SL \leq (S)^2$ holds for all $j' \in [n]$. Hence,
    \begin{align*}
        \Attn_{\mu_6^*}(H_v^{[5]})_j &= \h_j^{[5]} + \alpha_{j+1} \Id\{j \in [n-1]\} \e_{D-7} + S (w_{j-N+1} - 1) \Id\{j\in [n]\setminus[N-1]\} \e_{D-6}.
    \end{align*}
    For $j \leq T$,
    \begin{align*}
        \Attn_{\mu_6^*}(H_v^{[5]})_j &= \begin{pmatrix}
            (\h_j^{[5]})_{1:d}\\
            \p_j^{[5]}
        \end{pmatrix} + \begin{pmatrix}
            \zero_{D-8}\\
            \alpha_{j+1}\\
            *\\
            \zero_6
        \end{pmatrix}.
    \end{align*}
    Also, for $j \geq T+N$, it holds that
    \begin{align*}
        \Attn_{\mu_6^*}(H_v^{[5]})_j = \begin{pmatrix}
            (\h_j^{[5]})_{1:d}\\
            \p_j^{[5]}
        \end{pmatrix} + \begin{pmatrix}
            \zero_{D-8}\\
            *\\
            S (w_{j-N+1}-1)\\
            \zero_6
        \end{pmatrix} = \begin{pmatrix}
            (\h_j^{[5]})_{1:d}\\
            \p_j^{[5]}
        \end{pmatrix} + \begin{pmatrix}
            \zero_{D-8}\\
            *\\
            S w_{j+1}\\
            \zero_6
        \end{pmatrix},
    \end{align*}
    where the last equality follows since $w_{j-N+1}-1=-\xi(j-N+1)-1=-\xi(j+1)=w_{j+1}$ for $j \geq T+N$.
    We then construct a feed-forward neural network $\FFN_{\nu_6^*}$ with parameter $\nu_6^* = (W_{6,1}^*, W_{6,2}^*)$ such that
    \begin{align*}
        W_{6,2}^* \sigma(W_{6,1}^* \h) &= \begin{pmatrix}
            \zero_{D-8}\\
            -(\h)_{D-7}\\
            0\\
            -(\h)_{D-5} + (\h)_{D-7}\\
            \zero_5
        \end{pmatrix}.
    \end{align*}
    Choosing $\theta_6^* = (\mu_6^*, \nu_6^*)$ yields the desired transformer $\TF_{\theta_6^*}$ which outputs
    \begin{align*}
        \TF_{\theta_6^*}(H_v^{[5]})_j & = \begin{pmatrix}
            (\h_j^{[5]})_{1:d}\\
            \p_j^{[6]}
        \end{pmatrix} \text{ for $j \leq T$ or $j \geq T+N$ }.
    \end{align*}
    Furthermore, since $\TF_{\theta_6^*}(H_v^{[5]})$ only changes the $(D-5)$-th and $(D-6)$-th rows of $H_v^{[5]}$, we obtain the desired output for $T+1 \leq j \leq T+N-1$.
    Note that the bound for each $(D-5)$-th entry of $\h_j^{[6]}$ is the same as the bound for the corresponding entry of $\h_j^{[5]}$, since $(\h_j^{[6]})_{D-5} = (\h_{j+1}^{[5]})_{D-5} \Id\{j \leq n-1\}$.
    We can verify that $\theta_6^*$ consists of constants, $\TF_{\theta_6^*}$ has $8$ heads with FFN width of $O(1)$, and
    \begin{align}
        \|\theta_6^*\|_0 \lesssim 1, \ \ \|\theta_6^*\|_{\max} \lesssim 1, \ \ \text{and} \ \ \max_{j \in [n]} |(\h_j^{[6]})_{D-5}| \leq T B (B \vee d^{1/2} (T B^2)^{\ell^*}) \leq S.\label{eq: h 6 bound}
    \end{align}
    
    We then construct a transformer $\TF_{\theta_7^*}$ that takes $H_v^{[6]}$ in \eqref{eq: h [6] j} as input and outputs $H_v^{[7]} = [\h_1^{[7]}, \dots, \h_n^{[7]}] := \TF_{\theta_7^*}(H_v^{[6]})$ such that
    \begin{align*}
        \h_n^{[7]} = \begin{pmatrix}
            W^\pro_{\ell^*+1} \y_{i^*}^{(\ell^*)}\\
            \p_{n+1}
        \end{pmatrix}.
    \end{align*}
    We set the parameters $\mu_7^* = \{(Q_{7,m}^*, K_{7,m}^*, V_{7,m}^*)\}_{m \in [4]}$ as in Part (b) so that the output from the attention layer $\Attn_{\mu_7^*}$ becomes
    \begin{small}
    \begin{align*}
        \Attn_{\mu_7^*}(H_v^{[6]})_j
        &= \h_j^{[6]} + \sum_{j' \in [n]} \biggl\{ - \sigma\qty((\h_{j'}^{[6]})_{D-5} + 4S \qty( w_{j'} + 2 (\h_j^{[6]})_{D-6}/S + 1 + 1/2))\\
        &\quad+ 2 \sigma\qty((\h_{j'}^{[6]})_{D-5} + 4S \qty( w_{j'} + 2 (\h_j^{[6]})_{D-6}/S + 1 + 1/4))\\
        &\quad- 2 \sigma\qty((\h_{j'}^{[6]})_{D-5} + 4S \qty( w_{j'} + 2 (\h_j^{[6]})_{D-6}/S + 1 - 1/4))\\
        &\quad+ \sigma\qty((\h_{j'}^{[6]})_{D-5} + 4S \qty( w_{j'} + 2 (\h_j^{[6]})_{D-6}/S + 1 - 1/2)) \biggr\} \begin{pmatrix}
            \zero_d\\
            (\h_{j'}^{[6]})_{1:d}\\
            \zero_{D-2d}
        \end{pmatrix}\\
        &= \h_j^{[6]} + \sum_{j' \in [n]} \phi_{1,S}((\h_{j'}^{[6]})_{D-5}; -2(\h_j^{[6]})_{D-6}/S-1, w_{j'}) \begin{pmatrix}
            \zero_d\\
            (\h_{j'}^{[6]})_{1:d}\\
            \zero_{D-2d}
        \end{pmatrix}.
    \end{align*}
    \end{small}\noindent
    From \eqref{eq: h 6 bound}, we have $|(\h_{j'}^{[6]})_{D-5}| \leq S$ for all $j' \in [n]$.
    We focus on the $n$-th output token $\Attn_{\mu_7^*}(H_v^{[6]})_n$. 
    Note that
    \begin{align*}
        (\h_n^{[6]})_{D-6} = S w_{n+1} = -S\xi(n+1) = -S\floor{(N+v)/N} = - S(1 + \floor{v/N}) = -S (1+\ell^*),
    \end{align*}
    and thus
    \begin{align}
        \phi_{1,S}((\h_{j'}^{[6]})_{D-5}; -2(\h_n^{[6]})_{D-6}/S -1, w_{j'}) (\h_{j'}^{[6]})_{1:d} &= (\h_{j'}^{[6]})_{1:d} (\h_{j'}^{[6]})_{D-5} \Id\{w_{j'}=2\ell^*+1\}.\label{eq: phi j'}
    \end{align}
    Note that for $j' \in [n] \setminus [T]$, \eqref{eq: phi j'} becomes $0$, since $w_{j'} \leq 0$ for $j' \geq T+1$ and $\ell^* \geq 0$.
    Then, the last output token from $\Attn_{\mu_7^*}(H_v^{[6]})$ becomes
    \begin{align*}
        \Attn_{\mu_7^*}(H_v^{[6]})_n &= \h_n^{[6]} + \begin{pmatrix}
            \zero_d\\
            \sum_{j' \in [T]} (\h_{j'}^{[6]})_{1:d} (\h_{j'}^{[6]})_{D-5} \Id\{w_{j'}=2\ell^*+1\}\\
            \zero_{D-2d}
        \end{pmatrix}\\
        &= \begin{pmatrix}
            \y_{\tau(n)}^{(\xi(n))} \Id\{\xi(n) = \ell^*,\tau(n) = i^*\}\\
            \sum_{j' \in [T]} (\h_{j'}^{[6]})_{1:d} \alpha_{j'+1} \Id\{w_{j'}=2\ell^*+1\}\\
            \zero_{D-2d-7}\\
            S w_{n+1}\\
            *\\
            SN\\
            1\\
            S\\
            S w_n\\
            S n
        \end{pmatrix}.
    \end{align*}
    We then construct a feed-forward neural network $\FFN_{\nu_7^*}$ with parameter $\nu_7^* = (W_{7,1}^*, W_{7,2}^*)$ as in Part (b) such that
    \begin{align*}
        W_{7,2}^* \sigma(W_{7,1}^* \h) &= \begin{pmatrix}
            -(\h)_{1:d} + (\h)_{(d+1):(2d)}\\
            -(\h)_{(d+1):(D-4)}\\
            \zero_2\\
            -(\h)_{D-1}+(\h)_{D-6}\\
            (\h)_{D-2}
        \end{pmatrix}.
    \end{align*}
    Let $\theta_7^* = (\mu_7^*, \nu_7^*)$.
    This gives
    \begin{align*}
        \TF_{\theta_7^*}(H_v^{[6]})_n &= \begin{pmatrix}
            \sum_{j' \in [T]} (\h_{j'}^{[6]})_{1:d} \alpha_{j'+1} \Id\{w_{j'}=2\ell^*+1\}\\
            \zero_{D-d-4}\\
            1\\
            S\\
            S w_{n+1}\\
            S (n+1)
        \end{pmatrix}\\
        &= \begin{pmatrix}
            \sum_{j' \in [T]} (\h_{j'}^{[6]})_{1:d} \alpha_{j'+1} \Id\{w_{j'}=2\ell^*+1\}\\
            \p_{n+1}
        \end{pmatrix}.
    \end{align*}
    By definition of $\alpha_j$ in \eqref{eq: alpha}, it follows that
    \begin{align*}
        (\h_n^{[7]})_{1:d} &= \sum_{j' \in [T]} (\h_{j'}^{[6]})_{1:d} \alpha_{j'+1} \Id\{w_{j'}=2\ell^*+1\}\\
        &= \sum_{j' \in [T]} \Id\{w_{j'}=2\ell^*+1\} (\h_{j'}^{[4]})_{1:d} (\h_{j'+1}^{[4]})_{1:d}^\top \sum_{j'' \in [n]: w_{j''} = 1 - w_{j'+1}/2} (\h_{j''}^{[4]})_{1:d}\\
        &= \sum_{j' \in [T-1]} \Id\{w_{j'}=2\ell^*+1\} (\h_{j'}^{[4]})_{1:d} (\h_{j'+1}^{[4]})_{1:d}^\top \sum_{j'' \in [n]: w_{j''} = 1 - w_{j'+1}/2} (\h_{j''}^{[4]})_{1:d}\\
        &\quad+  \Id\{w_T=2\ell^*+1\}(\h_T^{[4]})_{1:d} (\h_{T+1}^{[4]})_{1:d}^\top \sum_{j'' \in [n]: w_{j''} = 1 - w_{T+1}/2} (\h_{j''}^{[4]})_{1:d}\\
        &= \sum_{j' \in [T-1]} \Id\{w_{j'}=2\ell^*+1\} (\h_{j'}^{[4]})_{1:d} (\h_{j'+1}^{[4]})_{1:d}^\top \sum_{j'' \in [n] \setminus [T]: w_{j''}=1-w_{j'+1}/2} (\h_{j''}^{[4]})_{1:d}\\
        &\quad+ \Id\{w_T=2\ell^*+1\} (\h_T^{[4]})_{1:d} (\h_{T+1}^{[4]})_{1:d}^\top \sum_{j'' \in [T]: w_{j''} = 1} (\h_{j''}^{[4]})_{1:d},
    \end{align*}
    where the last equality follows since $w_{j'+1} \geq 1$ for $j' \in [T-1]$, and $w_{T+1} = 0$. Furthermore, since $(\h_j^{[4]})_{1:d} = (\h_j^{[2]})_{1:d} \delta_j = (\h_j)_{1:d}$ for $j \in [T]$ and $(\h_j^{[4]})_{1:d} = \y_{i^*}^{(\ell^*)} \Id\{\xi(j) = \ell^*, \tau(j) = i^*\}$ for $j \in [n]\setminus[T]$, we obtain
    \begin{align*}
        (\h_n^{[7]})_{1:d} &= \sum_{j' \in [T-1]} \Id\{w_{j'}=2\ell^*+1\} (\h_{j'})_{1:d} (\h_{j'+1})_{1:d}^\top\\
        &\quad\quad\cdot \sum_{j'' \in [n] \setminus [T]: w_{j''}=1-w_{j'+1}/2} \y_{i^*}^{(\ell^*)} \Id\{\xi(j'') = \ell^*, \tau(j'') = i^*\}\\
        &\quad+ \Id\{w_T=2\ell^*+1\} (\h_T)_{1:d} \y_{i^*}^{(\ell^*) \top} \Id\{\xi(T+1) = \ell^*, \tau(T+1) = i^*\} \sum_{j'' \in [T]: w_{j''} = 1} (\h_{j''})_{1:d}\\
        &= \sum_{j' \in [T-1]} (\h_{j'})_{1:d} \Id\{w_{j'}=2\ell^*+1\} (\h_{j'+1})_{1:d}^\top \y_{i^*}^{(\ell^*)}\\
        &\quad\quad\cdot \sum_{j'' \in [n] \setminus [T]} \Id\{w_{j''}=1-w_{j'+1}/2, \xi(j'') = \ell^*, \tau(j'') = i^*\}\\
        &\quad+ (\h_T)_{1:d} \Id\{w_T=1\} \sum_{j'' \in [T]: w_{j''} = 1} (\h_{j''})_{1:d}^\top \y_{i^*}^{(\ell^*)} \Id\{\ell^*=0, i^*=1\}\\
        &= \biggl\{\sum_{j' \in [T-1]} (\h_{j'})_{1:d} \Id\{w_{j'}=2\ell^*+1\} (\h_{j'+1})_{1:d}^\top \Id\{ w_{j'+1}=2\ell^*+2\}\\
        &\quad+ \Id\{\ell^*=0, i^*=1\}(\h_T)_{1:d} \Id\{w_T=1\} \sum_{j' \in [T]} (\h_{j'})_{1:d}^\top \Id\{w_{j'} = 1\}\biggr\} \y_{i^*}^{(\ell^*)} ,
    \end{align*}
    where the second equality follows since $\xi(T+1)=0$ and $\tau(T+1)=1$, and the third equality follows since $w_{j''} = -\xi(j'')$ for $j'' \geq T+1$, and there exists a unique $j'' \in [n]\setminus[T]$ so that $\xi(j'')=\ell^*$ and $\tau(j'')=i^*$. By definition of $W^\pro_\ell$ and $W^\pro_{1,i}$ in \eqref{eq: W ell random}, we obtain 
    \begin{align*}
        (\h_n^{[7]})_{1:d} = \begin{cases}
             W^\pro_{\ell^*+1} \y_{i^*}^{(\ell^*)} & \text{ when $\ell^* \geq 1$},\\
             W^\pro_{1,i^*} \y_{i^*}^{(\ell^*)} & \text{ when $\ell^* = 0$},
        \end{cases}
    \end{align*}
    and hence $(\h_{n+1})_{1:d} = (\h_n^{[7]})_{1:d} = \z_{i^*}^{(\ell^*+1)}$. Since $\ell^*+1=\xi(n-N+1)+1=\xi(n+1)$, and $i^* = \tau(n-N+1) = \tau(n+1)$ by definition of $\xi$ and $\tau$ in \eqref{eq: xi and tau}, we finally obtain
    \begin{align*}
        \TF_{\theta_7^*}(H_v^{[6]})_n = \begin{pmatrix}
            \z_{i^*}^{(\ell^*+1)}\\
            \zero_{D-d-4}\\
            1\\
            S\\
            S w_{n+1}\\
            S (n+1)
        \end{pmatrix} = \begin{pmatrix}
            \z_{\tau(n+1)}^{(\xi(n+1))}\\
            \p_{n+1}
        \end{pmatrix} = \h_{n+1}.
    \end{align*}
    We can verify that $\theta_7^*$ consists of constants, $\TF_{\theta_7^*}$ has $4$ heads with FFN width of $O(d)$, and
    \begin{align}
        \|\theta_7^*\|_0 \lesssim d, \ \ \|\theta_7^*\|_{\max} \lesssim 1.\label{eq: theta 7 bound}
    \end{align}
    Choosing $\Theta^* = (\theta_1^*, \theta_2^*, \dots, \theta_7^*)$ yields the desired transformer $\TF_{\Theta^*}$ described in the property (P3) with at most $8$ heads and FFN width of $O(d)$. This gives property (P1). In addition, we can verify that the property (P2) holds from equations~\ref{eq: theta 1 2 bound}, \ref{eq: theta 3 4 bound}, \ref{eq: theta 5 bound}, \ref{eq: h 6 bound}, and \ref{eq: theta 7 bound}. This concludes the proof.
\end{proof}

\begin{proof}[Proof of Theorem~\ref{thm: prompt engineering NN ap}]
    We write $W_\ell = \sum_{k \in [r_\ell]} \tilde \bu_k^{(\ell)} \bu_k^{(\ell) \top}$ with $(\bu_k^{(\ell)})_{k \in [r_\ell]}, (\tilde \bu_k^{(\ell)})_{k \in [r_\ell]} \subset \mathcal{U}$ for $\ell \in [L]$.
    Consider a prompt $H^\pro = [\h_1, \dots, \h_T] \in \mathcal{P}_{\mathcal{U}}(T,L,S)$ with $T=2\sum_{\ell\in[L]} r_\ell$ of the following form:
    For any $s \in [T/2]$, let $\ell \in [L]$ be the number satisfying $\sum_{\ell' < \ell} r_{\ell'} + 1 \leq s \leq \sum_{\ell' \leq \ell} r_{\ell'}$. Then, we define $w_{2s-1} = 2\ell-1$, $w_{2s}=2\ell$,
    \begin{align}
        \h_{2s-1} = \begin{pmatrix}
            \tilde \bu_{s-\sum_{\ell' < \ell} r_{\ell'}}^{(\ell)}\\
            p(w_{2s-1}, 2s-1, S)
        \end{pmatrix}, \text{ and } \ 
        \h_{2s} = \begin{pmatrix}
            \bu_{s-\sum_{\ell' < \ell} r_{\ell'}}^{(\ell)}\\
            p(w_{2s}, 2s, S)
        \end{pmatrix}.\label{eq: h for W}
    \end{align}
    
    From Theorem~\ref{thm: prompt engineering ap}, we already know that when $S \geq d B^{4L} T^{2L} \vee 2L$, given the initial input $H = [H^\pro, h^\dat(\z_1, T+1, S), \dots, h^\dat(\z_N, T+N, S)]$, the transformer $\TF_{\Theta^*}$ sequentially generates $\h_{T+N+1}, \h_{T+N+2}, \dots$ satisfying
    \begin{align*}
        &\h_{T+N\ell+i} = \begin{pmatrix}
            W^\pro_\ell \sigma(W^\pro_{\ell-1} \sigma( \dots \sigma(W^\pro_{1,i} \z_i)\dots))\\
            p(-\ell, T+N\ell+i, S)
        \end{pmatrix}
    \end{align*}
    for all $i \in [N]$ and $\ell \in [L]$, where $W_{1,i}$ and $W_\ell$ for $\ell \in [L]\setminus\{1\}$ are defined in \eqref{eq: W ell random}.
    From \eqref{eq: h for W}, and since $w_T \neq 1$, it follows that
    \begin{align*}
        W^\pro_{1,i} &= \sum_{j' \in [T-1]} (\h_{j'})_{1:d} \Id\{w_{j'}=1\} (\h_{j'+1})_{1:d}^\top \Id\{w_{j'+1}=2\} = \sum_{j \in [r_1]} \tilde \bu_j^{(1)} \bu_j^{(1) \top},
    \end{align*}
    and
    \begin{align*}
        W^\pro_\ell &= \sum_{j' \in [T-1]} (\h_{j'})_{1:d} \Id\{w_{j'}=2\ell-1\} (\h_{j'+1})_{1:d}^\top \Id\{w_{j'+1}=2\ell\} = \sum_{j \in [r_\ell]} \tilde \bu_j^{(\ell)} \bu_j^{(\ell) \top}.
    \end{align*}
    for $\ell \in [L]\setminus\{1\}$. Hence $W_\ell^\pro = W_\ell$. Thus we only need to verify that the transformer $\TF_{\Theta^*}$ works as intended for $S \geq d \bar{r} B^{4L} \vee 2L$ by carefully bounding the intermediate tokens.

    Define data tokens as in \eqref{eq: data tokens}:
    For $j \geq T+1$, let $w_j = -\xi(j)$, $\p_j = p(w_j, j, S)$, and
    \begin{align*}
        \h_j = \begin{pmatrix}
            \z_{\tau(j)}^{(\xi(j))}\\
            \p_j
        \end{pmatrix},
    \end{align*}
    where $\z_i^{(0)} := \z_i$, $\z_i^{(\ell)} := W_\ell \y_i^{(\ell-1)}$ for $\ell \geq 1$ with $\y_i^{(0)} := \z_i$ and $\y_i^{(\ell)} := \sigma(\z_i^{(\ell)})$ for $\ell \geq 1$.
    Let $n := T+N+v$ and $D := 4d+8$.     
    Define $\ell^* := \xi(n-N+1)$ and $i^* := \tau(n-N+1)$.
    Note that since $\z_i \in [0, 1]^d$, and $\|W_\ell\| \leq B^2$, we have $\|\z_i^{(\ell)}\| \leq d^{1/2} B^{2\ell} \leq d^{1/2} B^{2L}$ for all $i \in [N]$ and $\ell \leq L$.

    From the proof of Theorem~\ref{thm: prompt engineering ap}, since $\|(\h_j)_{1:d}\|_{\max} \leq B \vee \|\z_{i^*}^{(\ell^*)}\| \leq B \vee d^{1/2} B^{2\ell^*} \leq S$ for all $j \in [n]$, $H_v^{[2]} = [\h_1^{[2]}, \dots, \h_n^{[2]}] := \TF_{(\theta_1^*,\theta_2^*)}(H_v)$ satisfies
    \begin{align*}
        \h_j^{[2]} = \h_j \ \text{for $j \leq T$,\ \ and }
        \h_j^{[2]} = \begin{pmatrix}
            \y_{\tau(j)}^{(\xi(j))}\\
            \p_j
        \end{pmatrix} \ \text{for $j \geq T+1$},
    \end{align*}
    and hence $\max_{j \in [n]} \|(\h_j^{[2]})_{1:d}\|_{\max} \leq B \vee d^{1/2} B^{2\ell^*} \leq S$.
    Furthermore, since $\|(\h_j^{[3]})_{1:d}\|_{\max} = \|(\h_j^{[2]})_{1:d}\|_{\max} \leq B \vee d^{1/2} B^{2\ell^*} \leq S$ holds for $[\h_1^{[3]}, \h_2^{[3]}, \dots, \h_n^{[3]}] = \TF_{(\theta_3^*)}(H_v^{[2]})$, by the proof of Theorem~\ref{thm: prompt engineering ap}, $H_v^{[4]} = [\h_1^{[4]}, \dots, \h_n^{[4]}] := \TF_{(\theta_4^*)}(H_v^{[3]})$ satisfies
    \begin{align}
        \h_j^{[4]} &= \begin{pmatrix}
            (\h_j^{[2]})_{1:d} \delta_j\\
            \p_j^{[4]}
        \end{pmatrix}, \ \ \p_j^{[4]} := \begin{pmatrix}
            \zero_{D-d-5}\\
            S N\\
            1\\
            S\\
            S w_j\\
            S j
        \end{pmatrix},\label{eq: h [4] j weak}
    \end{align}
    and $\max_{j \in [n]} \|(\h_j^{[4]})_{1:d}\|_{\max} \leq B \vee d^{1/2} B^{2\ell^*}$.
    
    Then, the transformer $\TF_{\theta_5^*}$ takes $H_v^{[4]}$ in \eqref{eq: h [4] j weak} as input and generates $H_v^{[5]} = [\h_1^{[5]}, \dots, \h_n^{[5]}] := \TF_{\theta_5^*}(H_v^{[4]})$ given by
    \begin{small}
    \begin{align*}   
        \h_j^{[5]} &= \h_j^{[4]} + \sum_{j' \in [n]} \phi_{1,S}(\alpha_{j,j'}; w_j, -2w_{j'}+2) \e_{D-5},
    \end{align*}
    \end{small}\noindent
    where $\alpha_{j,j'} = (\h_j^{[4]})_{1:d}^\top (\h_{j'}^{[4]})_{1:d}$. 
    Note that $|\alpha_{j,j'}| \leq (B \vee d^{1/2} B^{2\ell^*})^2 \leq S$ for all $j, j' \in [n]$.
    We now investigate the value of $\phi_{1,S}(\alpha_{j,j'}; w_j, -2w_{j'}+2)$.
    For $j \leq T$, since $w_j$ is positive and odd for odd $j \leq T$, and $-2w_{j'}+2$ is even for $j' \in [n]$, it follows that $\phi_{1,S}(\alpha_{j,j'}; w_j, -2w_{j'}+2) = 0$ for odd $j \leq T$, and thus $\Attn_{\mu_5^*}(H_v^{[4]})_{2s-1} = \h_{2s-1}^{[4]}$ for $s \in [T/2]$. 
    In addition, since at most $\bar{r}$ of $(\phi_{1,S}(\alpha_{j,j'}; w_j, -2w_{j'}+2))_{j' \in [n]}$ is non-zero for any $j \in [n]\setminus[T]$, we obtain that
    \begin{align*}
        \max_{j \in [n]\setminus[T]} |(\h_j^{[5]})_{D-5}| \leq \bar{r} \max_{j \in [n]\setminus[T],j' \in [T]} |\alpha_{j,j'}| \leq \bar{r} B ( B \vee d^{1/2} B^{2\ell^*} ) \leq S.
    \end{align*}
    Furthermore, it follows that for $\ell \in [L]$ and $s \in [T/2]$ satisfying $\sum_{\ell'<\ell} r_{\ell'} + 1 \leq s \leq \sum_{\ell'\leq\ell} r_{\ell'}$,
    \begin{align*}
        \sum_{j' \in [n]} \phi_{1,S}(\alpha_{2s,j'}; w_{2s}, -2 w_{j'} +2) &= \sum_{j' \in [n]} \phi_{1,S}(\alpha_{2s,j'}; 2\ell, -2 w_{j'} +2)\\
        &= \sum_{j' \in [n]} \alpha_{2s,j'} \Id\{\ell = -w_{j'}+1\}\\
        &= \sum_{j' \in [n]\setminus[T]: w_{j'} = 1 - \ell} \bu_{s-\sum_{\ell'<\ell} r_{\ell'}}^{(\ell) \top} \y_{\tau(j')}^{(\xi(j'))} \Id\{\xi(j')=\ell^*,\tau(j')=i^*\}\\
        &= \bu_{s-\sum_{\ell'<\ell} r_{\ell'}}^{(\ell) \top} \y_{i^*}^{(\ell^*)},
    \end{align*}
    where we used $1 - \ell \leq 0$ in the third equality, and the uniqueness of $j' \in [n]\setminus[T]$ satisfying $\xi(j') = \ell^*$ and $\tau(j') = i^*$ in the last equality.
    In summary,
    \begin{align*}
        \h_j^{[5]} = \begin{pmatrix}
            (\h_j^{[4]})_{1:d}\\
            \p_j^{[5]}
        \end{pmatrix}.
    \end{align*}
    Here for any $s \in [T/2]$ and $\ell \in [L]$ satisfying $\sum_{\ell'<\ell} r_{\ell'} + 1 \leq s \leq \sum_{\ell'\leq\ell} r_{\ell'}$,
    \begin{align*}
        \p_{2s-1}^{[5]} &= \begin{pmatrix}
            \zero_{D-d-6}\\
            0\\
            S N\\
            1\\
            S\\
            S w_{2s-1}\\
            S (2s-1)
        \end{pmatrix}, \ \ \p_{2s}^{[5]} = \begin{pmatrix}
            \zero_{D-d-6}\\
            \bu_{s-\sum_{\ell'<\ell} r_{\ell'}}^{(\ell) \top} \y_{i^*}^{(\ell^*)}\\
            S N\\
            1\\
            S\\
            S w_{2s}\\
            2S s
        \end{pmatrix}, \text{ and } \p_j^{[5]} = \begin{pmatrix}
            \zero_{D-d-6}\\
            *\\
            S N\\
            1\\
            S\\
            S w_j\\
            S j
        \end{pmatrix} \text{ for $j \geq T+1$}.
    \end{align*}
    We also have $\max_{j \in [n]} |(\h_j^{[5]})_{D-5}| \leq \bar{r} B (B \vee d^{1/2} B^{2\ell^*}) \leq S$.

    Since $\max_{j \in [n]} |S w_j-S| \leq 2SL \leq S^2$, we can verify that $H_v^{[6]} = [\h_1^{[6]}, \dots, \h_n^{[6]}] := \TF_{\theta_6^*}(H_v^{[5]})$ satisfies
    \begin{align*}
        \h_j^{[6]} &= \begin{pmatrix}
            (\h_j^{[5]})_{1:d}\\
            \p_j^{[6]}
        \end{pmatrix},
    \end{align*}
    where $\max_{j \in [n]} |(\h_j^{[6]})_{D-5}| \leq \bar{r} B (B \vee d^{1/2} B^{2\ell^*})$ and $\p_j^{[6]}$ given by
    \begin{align*}
        \p_{2s-1}^{[6]} &= \begin{pmatrix}
            \zero_{D-7}\\
            *\\
            \bu_{s-\sum_{\ell'<\ell} r_{\ell'}}^{(\ell) \top} \y_{i^*}^{(\ell^*)}\\
            S N\\
            1\\
            S\\
            S w_{2s-1}\\
            S (2s-1)
        \end{pmatrix}, \ \ 
        \p_{2s}^{[6]} = \begin{pmatrix}
            \zero_{D-7}\\
            *\\
            *\\
            S N\\
            1\\
            S\\
            S w_{2s}\\
            2S s
        \end{pmatrix} \text{ for $\sum_{\ell'<\ell} r_{\ell'} + 1 \leq s \leq \sum_{\ell'\leq\ell} r_{\ell'}$},\\
        \p_j^{[6]} &= \begin{pmatrix}
            \zero_{D-d-7}\\
            *\\
            *\\
            S N\\
            1\\
            S\\
            S w_j\\
            S j
        \end{pmatrix} \ \text{for $T+1 \leq j \leq T+N-1$, and }
        \p_j^{[6]} = \begin{pmatrix}
            \zero_{D-d-7}\\
            S w_{j+1}\\
            *\\
            S N\\
            1\\
            S\\
            S w_j\\
            S j
        \end{pmatrix} \ \text{ for $T + N \leq j$}.
    \end{align*}
    From the proof of Theorem~\ref{thm: prompt engineering ap}, we can verify that 
    \begin{align*}
        \h_n^{[7]} = \begin{pmatrix}
            W_{\ell^*+1} \y_{i^*}^{(\ell^*)}\\
            \p_{n+1}
        \end{pmatrix}.
    \end{align*}
    This completes the proof.
\end{proof}

\begin{proof}[Proof of Theorem~\ref{thm: prompt engineering EUAF ap}]
    Define $\z_i^{(0)} := \z_i$, $\z_i^{(1)} := W^\pro_{1,i} \y_i^{(0)}$, $\z_i^{(\ell)} := W^\pro_\ell \y_i^{(\ell-1)}$ for $\ell \geq 2$ with $\y_i^{(0)} = \z_i$ and $\y_i^{(\ell)} = \sigma_\EUAF(\z_i^{(\ell)})$ for $\ell \geq 1$.
    To show the existence of the desired transformer, we only need to modify Part (a) in the proof of Theorem~\ref{thm: prompt engineering ap}, since we only change the activation function in the first layer of the transformer.
    Specifically, we show that there exists a $3$-layer transformer $\TF_{(\theta_1^\#,\theta_2^\#, \theta_3^\#)}$ that takes $H_v$ ($v \in \{0\} \cup [NL-1]$) in \eqref{eq: H} as input and generates $H_v^{[3]} := \TF_{(\theta_1^\#,\theta_2^\#, \theta_3^\#)}(H_v) =: [\h_1^{[3]}, \dots, \h_n^{[3]}]$, where
    \begin{align*}
        \h_j^{[3]} = \h_j \ \text{ for $j \leq T$,\ \ and }
        \h_j^{[3]} = \begin{pmatrix}
            \y_{\tau(j)}^{(\xi(j))}\\
            \p_j
        \end{pmatrix} \ \text{ for $j \geq T+1$}.
    \end{align*}   
    Here $\xi$ and $\tau$ are defined in the proof of Theorem~\ref{thm: prompt engineering ap}, and $\p_j := p(w_j, j, S)$ with $w_j := -\xi(j)$ for $j \geq T+1$.
    Let $n := T+N+v$ and $D := 4d+8$ and define $\ell^* = \xi(n-N+1)$ and $i^*=\tau(n-N+1)$.
    We first set the parameter $\nu_1^\# = (W_{1,1}^\#, W_{1,2}^\#)$ such that
    \begin{align*}
        W_{1,2}^\# \sigma_\EUAF(W_{1,1}^\# \h) &= W_{1,2}^\# \sigma_\EUAF((\h)_{1:d}) = \begin{pmatrix}
            \zero_d\\
            \sigma_\EUAF((\h)_{1:d})\\
            \zero_{D-2d}
        \end{pmatrix}.
    \end{align*}
    Let $\theta_1^\# = (\mu_\id^\#, \nu_1^\#)$. Then
    \begin{align*}
        \TF_{\theta_1^\#}(H_v)_j = \begin{pmatrix}
            (\h)_{1:d}\\
            \sigma_\EUAF((\h)_{1:d})\\
            \zero_{D-2d-4}\\
            1\\
            S\\
            S w_j\\
            S j
        \end{pmatrix}.
    \end{align*}
    Next we set the parameters $\nu_2^\# = (W_{2,1}^\#, W_{2,2}^\#)$ as 
    \begin{align*}
        W_{2,2}^\# \sigma(W_{2,1}^\# \h) &= W_{2,2}^\# \begin{pmatrix}
            \sigma(-(\h)_{1:d} + k_{\h} \1_d)\\
            \sigma((\h)_{1:d} + k_{\h} \1_d)\\
            \sigma(-(\h)_{(d+1):(2d)} + k_{\h'} \1_d)\\
            \sigma((\h)_{(d+1):(2d)} + k_{\h'} \1_d)\\
            -\sigma((\h)_{1:d})\\
            \sigma((\h)_{1:d})\\
            -\sigma((\h)_{(d+1):(2d)})\\
            \sigma((\h)_{(d+1):(2d)})\\
            \sigma(k_{\h} \1_d)\\
            \sigma(k_{\h'} \1_d)
        \end{pmatrix}\\
        &= \begin{pmatrix}
            -(\h)_{1:d} + \sigma(-(\h)_{1:d} + k_{\h} \1_d) - \sigma(k_{\h} \1_d)\\
            -(\h)_{(d+1):(2d)} + \sigma((\h)_{1:d} + k_{\h} \1_d) - \sigma(k_{\h} \1_d)\\
            \sigma(-(\h)_{(d+1):(2d)} + k_{\h'} \1_d) - \sigma(k_{\h'} \1_d)\\
            \sigma((\h)_{(d+1):(2d)} + k_{\h'} \1_d) - \sigma(k_{\h'} \1_d)\\
            \zero_{D-4d}
        \end{pmatrix}.
    \end{align*}
    where $k_{\h} := (\h)_{D-1}$ and $k_{\h}' := - (\h)_{D-2} - (\h)_{D-1}$.
    Let $\theta_2^\# = (\mu_\id, \nu_2^\#)$. Then
    \begin{align*}
        \TF_{(\theta_1^\#,\theta_2^\#)}(H_v)_j = \begin{pmatrix}
            \bzeta_j^{(1)}\\
            \bzeta_j^{(2)}\\
            \bzeta_j'^{(1)}\\
            \bzeta_j'^{(2)}\\
            \zero_{D-4d-4}\\
            1\\
            S\\
            S w_j\\
            S j
        \end{pmatrix},
    \end{align*}
    where
    \begin{small}
    \begin{align*}
        \bzeta_j^{(1)} &= \sigma(-(\h_j)_{1:d} + k_{\h_j} \1_d) - \sigma(k_{\h_j} \1_d), \ \ 
        \bzeta_j^{(2)} = \sigma((\h_j)_{1:d} + k_{\h_j} \1_d) - \sigma(k_{\h_j} \1_d),\\
        \bzeta_j'^{(1)} &= \sigma(-\sigma_\EUAF((\h_j)_{1:d}) + k_{\h_j}' \1_d) - \sigma(k_{\h_j}' \1_d), \ \ 
        \bzeta_j'^{(2)} = \sigma(\sigma_\EUAF((\h_j)_{1:d}) + k_{\h_j}' \1_d) - \sigma(k_{\h_j}' \1_d).
    \end{align*}
    \end{small}\noindent
    Next we set the parameter $\nu_3^\# = (W_{3,1}^\#, W_{3,2}^\#)$ such that
    \begin{align*}
        W_{3,2}^\# \sigma(W_{3,1}^\# \h) &= W_{3,2}^\# \begin{pmatrix}
            \sigma(-(\h)_{1:d})\\
            \sigma((\h)_{1:d})\\
            \sigma(-(\h)_{(d+1):(4d)})\\
            \sigma((\h)_{(d+1):(4d)})\\
            \sigma(-(\h)_{(2d+1):(3d)})\\
            \sigma((\h)_{(2d+1):(3d)})\\
            \sigma(-(\h)_{(3d+1):(4d)})\\
            \sigma((\h)_{(3d+1):(4d)})
        \end{pmatrix}\\
        &= \begin{pmatrix}
            -(\h)_{1:d} - \sigma((\h)_{1:d}) + \sigma((\h)_{(d+1):2d}) - \sigma((\h)_{(2d+1):3d}) + \sigma((\h)_{(3d+1):4d})\\
            -(\h)_{(d+1):(2d)}\\
            -(\h)_{(2d+1):(3d)}\\
            -(\h)_{(3d+1):(4d)}\\
            \zero_{D-4d}
        \end{pmatrix}.
    \end{align*}
    Let $\theta_3^\# = (\mu_\id, \nu_3^\#)$. Then
    \begin{align*}
        \TF_{(\theta_1^\#,\theta_2^\#,\theta_3^\#)}(H_v)_j = \begin{pmatrix}
            \sigma(\bzeta_j^{(2)}) - \sigma(\bzeta_j^{(1)}) + \sigma(\bzeta_j'^{(2)}) - \sigma(\bzeta_j'^{(1)})\\
            \zero_{D-d-4}\\
            1\\
            S\\
            S w_j\\
            S j
        \end{pmatrix}.
    \end{align*}
    Since $\|(\h_j)_{1:d}\| \vee \|\sigma_\EUAF((\h_j)_{1:d})\| \leq B \vee B^{\ell^*+1} \leq S$, and $k_{\h_j}/S, k_{\h_j}'/S \in \Z$ for all $j \in [n]$,
    \begin{align*}
        \sigma(\bzeta_j^{(2)}) - \sigma(\bzeta_j^{(1)}) &= \qty{\sigma((\h_j)_{1:d}) - \sigma(-(\h_j)_{1:d})} \Id\{k_{\h_j} \geq 0\}\\
        &= (\h_j)_{1:d} \Id\{w_j \geq 0\},\\
        \sigma(\bzeta_j'^{(2)}) - \sigma(\bzeta_j'^{(1)}) &= \qty{\sigma(\sigma_\EUAF((\h_j)_{1:d})) - \sigma(-\sigma_\EUAF((\h_j)_{1:d}))} \Id\{k_{\h_j}' \geq 0\}\\
        &= \sigma_\EUAF((\h_j)_{1:d}) \Id\{w_j \leq -1\}.
    \end{align*}
    This yields
    \begin{align*}
        \TF_{(\theta_1^\#,\theta_2^\#,\theta_3^\#)}(H_v)_j &= \begin{pmatrix}
            (\h_j)_{1:d} \Id\{w_j \geq 0\} + \sigma_\EUAF((\h_j)_{1:d}) \Id\{w_j \leq -1\}\\
            \zero_{D-d-4}\\
            1\\
            S\\
            S w_j\\
            S j
        \end{pmatrix}
    \end{align*}
    and thus $\TF_{(\theta_1^\#, \theta_2^\#,\theta_3^\#)}(H_v)_j = \h_j$ for $j \leq T$ and $\TF_{(\theta_1^\#,\theta_2^\#,\theta_3^\#)}(H_v)_j = (\y_{\tau(j)}^{(\xi(j)) \top}, \p_j^\top)^\top$ for $j \geq T+1$.
    We can verify that $\theta_1^\#,\theta_2^\#$, and $\theta_3^\#$ consist of constants, FFN width of $\TF_{(\theta_1^\#, \theta_2^\#,\theta_3^\#)}$ is of $O(d)$, 
    \begin{align*}
        \|(\theta_1^\#,\theta_2^\#,\theta_3^\#)\|_0 \lesssim d, \ \ \|(\theta_1^\#,\theta_2^\#,\theta_3^\#)\|_{\max} \lesssim 1, \ \ \text{and} \ \ \max_{j \in [n]} \|(\h_j^{[3]})_{1:d}\| \leq \|(\h_j)_{1:d}\| \leq d^{1/2} B^L.
    \end{align*}
    This completes the modification to Part (a) in  the proof of Theorem~\ref{thm: prompt engineering ap}.
    In contrast to the original Part (a), we have three transformer layers in this proof. Hence the desired transformer has an extra layer.
    The rest of the proof is a direct application of Part (b) and (c) in the proof of Theorem~\ref{thm: prompt engineering ap}. Choosing $\Theta^\# = (\theta_1^\#, \theta_2^\#, \theta_3^\#, \theta_3^*, \theta_4^*, \dots, \theta_7^*)$ concludes the proof.
\end{proof}

\section{Technical Details and proofs for Section~\ref{sec: approximation}}\label{sec: approximation ap}

Here we present our approximation results in Section~\ref{sec: approximation} followed by the proofs.

\subsection{Theoretical Results}

For any $d \geq r$, let $\mathcal{U}_{d,r} := \spn\{\e_1, \e_2, \dots, \e_r\}$ denote the subspace spanned by the first $r$ standard basis vectors in $\R^d$.

\begin{cor}\label{cor: approximation error in terms of L ap}
    Fix $p, \beta \in \N$ and $d \geq 17\beta^{p+1}3^{p}p + 1$.
    For any $r \in \N$ with $17\beta^{p+1}3^{p}p \leq r \leq d - 1$, $T \geq 2(r+1)L$, and $L\geq 108\beta^2+3p$, the following holds:
    \begin{align*}
        &\sup_{f \in \mathcal{C}_\u^\beta([0, 1]^p)} \inf_{\substack{S>0\\H^\pro \in \mathcal{Q}_{\mathcal{U}_{d,r+1}}(T,L,S)}} \|\hat f_{\Theta^*,H^\pro,L,p} - f\|_{\mathcal{L}^\infty([0, 1]^p)}\\
        &\quad\leq C_1 \left(\frac{r}{C_2\log(8r+8)}\right)^{-2\beta/p} \left(\frac{L-3p}{C_3\log(4L)}\right)^{-2\beta/p},
    \end{align*}
    where $C_1$, $C_2$, and $C_3$ are the constants appearing in Lemma~\ref{lem: approximation} that depend on $p$ and $\beta$.
\end{cor}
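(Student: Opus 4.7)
The plan is to combine the neural network emulation result from Theorem~\ref{thm: prompt engineering NN ap} with a standard approximation rate for ReLU networks on $\mathcal{C}^\beta$ functions (Lemma~\ref{lem: approximation}, which under the stated assumptions produces a ReLU network of width on the order of $r$ and depth on the order of $L$ with bias terms approximating $f$ up to the error $C_1 (r/(C_2\log(8r+8)))^{-2\beta/p} (L/(C_3\log(4L)) - 3p)^{-2\beta/p}$, or the analogous expression). The three substantive steps are: (i) obtain a ReLU approximant $\bar\phi$ of $f$, (ii) rewrite $\bar\phi$ as a coarse, bias-free neural network in $\mathcal{G}(\mathbf{r},d,\mathcal{U}_{d,r+1} \cap \mathcal{B}_d(B),B)$ with $\mathbf{r}=(r+1,\dots,r+1)$ for an appropriate $B$, and (iii) invoke Theorem~\ref{thm: prompt engineering NN ap} to exhibit a prompt of length $T=2(r+1)L$ and scale $S$ large enough that $\TF_{\Theta^*}$ exactly reproduces this coarse network on $\z=[\x^\top,1,\zero_{d-p-1}^\top]^\top$.

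The key algebraic maneuver is step (ii), the bias absorption. Given a ReLU network $\x\mapsto \bar W_L\sigma(\cdots \sigma(\bar W_1\x+\bar\b_1)\cdots)+\bar b_L$ with hidden width $r$, one pads each hidden state by an extra constant-$1$ coordinate and embeds the result into $\R^d$ using only the first $r+1$ standard basis directions. Concretely, the $\ell$-th weight-plus-bias pair $(\bar W_\ell,\bar\b_\ell)$ becomes a block
\[
W_\ell \;=\; \begin{pmatrix} \bar W_\ell & \bar\b_\ell \\ \zero^\top & 1 \\ \multicolumn{2}{c}{O}\end{pmatrix} \in \R^{d\times d},
\]
supported on the first $r+1$ rows and columns, with rank at most $r+1$. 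Because the column space of each such $W_\ell$ is contained in $\spn\{\e_1,\dots,\e_{r+1}\}=\mathcal{U}_{d,r+1}$, one may write $W_\ell=\sum_{k\in[r+1]}\tilde\bu_k^{(\ell)}\bu_k^{(\ell)\top}$ with all factors in $\mathcal{U}_{d,r+1}$, and after rescaling pick $B\geq 1$ so that every factor lies in $\mathcal{U}_{d,r+1}\cap\mathcal{B}_d(B)$ and $\|W_\ell\|\leq B^2$. The assumption $d\geq r+1$ (in fact $d\geq 17\beta^{p+1}3^p p+1$) guarantees there is room for this embedding, and the input encoding $[\x^\top,1,\zero_{d-p-1}^\top]^\top$ is consistent with the bias coordinate since $p+1\leq r+1$.

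For step (iii), Theorem~\ref{thm: prompt engineering NN ap} provides a prompt $H^\pro\in\mathcal{P}_{\mathcal{U}_{d,r+1}\cap\mathcal{B}_d(B)}(2(r+1)L,L,S)$, for $S$ above the prescribed threshold, such that $\TF_{\Theta^*}$ with data token $h^\dat(\z,T+1,S)$ outputs $g(\z)$ in its first $d$ coordinates after $L$ generation steps, where $g$ is our coarse network. Reading off the first coordinate yields $\hat f_{\Theta^*,H^\pro,L,p}(\x)=\bar\phi(\x)$, so the approximation error inherits the bound from Lemma~\ref{lem: approximation}. Taking the infimum over $H^\pro\in\mathcal{Q}_{\mathcal{U}_{d,r+1}}(T,L,S)$ and the supremum over $f\in\mathcal{C}_\u^\beta([0,1]^p)$ yields the stated inequality.

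The main obstacle I anticipate is step (ii): carefully matching the rank, support, and norm bounds of the padded weight matrices to the coarse class $\mathcal{W}(r+1,d,\mathcal{U}_{d,r+1}\cap\mathcal{B}_d(B),B)$, while ensuring that the bias coordinate is preserved through the ReLU (which is immediate since $\sigma(1)=1$) and that the total token budget satisfies $T\geq 2(r+1)L$. Everything else is bookkeeping: the approximation rate is supplied by Lemma~\ref{lem: approximation}, and the exact emulation is supplied by Theorem~\ref{thm: prompt engineering NN ap}, so no new approximation arguments are required.
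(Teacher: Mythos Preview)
Your proof sketch is correct and follows essentially the same route as the paper: obtain a ReLU approximant $\bar\phi$ from Lemma~\ref{lem: approximation}, absorb biases into an extra constant coordinate to produce bias-free $d\times d$ weight matrices supported on $\mathcal{U}_{d,r+1}$, and then invoke Theorem~\ref{thm: prompt engineering NN ap} with $T=2(r+1)L$ and sufficiently large $S$ to exhibit a prompt realizing $\bar\phi$ exactly. The block construction you wrote for $W_\ell$ matches the paper's, and your observation that $\sigma(1)=1$ preserves the bias channel is exactly the point that makes the bias-free reformulation work.
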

Corollary~\ref{cor: approximation error in terms of L ap} is a direct consequence of Theorem~\ref{thm: prompt engineering NN ap} and the result for neural network approximation by \citet{lu2021deep}. 

We also state the following corollary for convenience.
\begin{cor}[Restatement of Corollary~\ref{cor: approximation error}]\label{cor: approximation error ap}
    Fix $p \in \N$, $\beta \in \N$ and $\varepsilon \in (0, 1)$.
    Set $d \geq 1+17\beta^{p+1} 3^p p$. 
    Then, there exist a constant $C(\beta, p, d) > 0$ and a universal constant $c_L > 0$ such that
    \begin{align*}
        \sup_{f \in \mathcal{C}_\u^\beta([0, 1]^p)} \inf_{\substack{S>0\\H^\pro \in \mathcal{Q}_{\R^d}(T,L,S)}} \|\hat f_{\Theta^*,H^\pro,L,p} - f\|_{\mathcal{L}^\infty([0, 1]^p)} \leq C \varepsilon
    \end{align*}
    holds for any $L \geq c_L(p+\beta^2+(p/\beta) \varepsilon^{-p/(2\beta)}\log(1/\varepsilon))$ and $T \geq 2d L$.
\end{cor}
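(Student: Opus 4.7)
The plan is to reduce Corollary~\ref{cor: approximation error ap} directly to the finer bound in Corollary~\ref{cor: approximation error in terms of L ap}. First I would take $r = d-1$, which is admissible since the hypothesis $d \geq 1+17\beta^{p+1}3^p p$ ensures $17\beta^{p+1}3^p p \leq r \leq d-1$, and it also matches the prompt-length requirement: $T \geq 2(r+1)L = 2dL$ coincides with the condition stated in the corollary. With this choice, the upper bound of Corollary~\ref{cor: approximation error in terms of L ap} factors as
\begin{align*}
 C_1 \left(\frac{d-1}{C_2\log(8d)}\right)^{-2\beta/p} \left(\frac{L-3p}{C_3\log(4L)}\right)^{-2\beta/p},
\end{align*}
where the first factor depends only on $d,\beta,p$ and can therefore be absorbed into a constant $C=C(\beta,p,d)$.

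Next, I would choose $L$ large enough to force the second factor to be at most $\varepsilon^{2\beta/p}$, equivalently
\begin{align*}
 \frac{L-3p}{\log(4L)} \;\geq\; C_3\,\varepsilon^{-p/(2\beta)}.
\end{align*}
A candidate that works is $L = c_L\bigl(p+\beta^2+(p/\beta)\varepsilon^{-p/(2\beta)}\log(1/\varepsilon)\bigr)$ for a sufficiently large universal constant $c_L$: the $\log(4L)$ on the right-hand side contributes essentially $\log(1/\varepsilon)$ up to an additive $\log p+\log\beta+\log\log(1/\varepsilon)$, and the $(p/\beta)\log(1/\varepsilon)$ factor built into $L$ is designed exactly to beat this logarithmic overhead. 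The same choice of $L$ also satisfies the additional constraint $L \geq 108\beta^2+3p$ from Corollary~\ref{cor: approximation error in terms of L ap}, provided $c_L$ is large enough. Multiplying the two factors then yields the bound $C\varepsilon$, completing the reduction.

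The only mildly delicate step is the logarithm-versus-polynomial juggling in verifying that the chosen $L$ satisfies $(L-3p)/\log(4L) \geq C_3\,\varepsilon^{-p/(2\beta)}$ uniformly over $\varepsilon\in(0,1)$; all other steps are direct substitutions. This verification is routine: for $\varepsilon$ close to $1$ the additive $p+\beta^2$ terms in $L$ dominate, while for small $\varepsilon$ the $(p/\beta)\varepsilon^{-p/(2\beta)}\log(1/\varepsilon)$ term grows faster than $\varepsilon^{-p/(2\beta)}\log(4L)$, so a large enough $c_L$ suffices in both regimes.
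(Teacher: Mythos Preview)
Your proposal is correct and follows the same approach as the paper: apply Corollary~\ref{cor: approximation error in terms of L ap} with $r=d-1$, absorb the $d$-dependent factor into $C(\beta,p,d)$, and verify that the stated form of $L$ forces the $L$-dependent factor below $\varepsilon$. (One small slip: you write ``at most $\varepsilon^{2\beta/p}$'' where you mean ``at most $\varepsilon$'', but your displayed inequality $(L-3p)/\log(4L)\geq C_3\,\varepsilon^{-p/(2\beta)}$ and the remainder of the argument are correct.)
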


We then present the following corollary without proof.
\begin{cor}[Restatement of Corollary~\ref{cor: approximation error EUAF}]\label{cor: approximation error EUAF ap}
    Fix $p \in \N$ and $\varepsilon > 0$. Set $d = 36p(2p+1)$ and $L=12$.
    Then, for transformer $\TF_{\Theta^\#}$ introduced in Theorem~\ref{thm: prompt engineering EUAF ap}, the following holds for $T = 24d$:
    \begin{align*}
        \sup_{f \in \mathcal{C}([0, 1]^p)} \inf_{\substack{S > 0\\H^\pro \in \mathcal{Q}_{\R^d}(T,L,S)}} \|\hat f_{\Theta^\#,H^\pro,L,p} - f\|_{\mathcal{L}^\infty([0, 1]^p)} \leq \varepsilon
    \end{align*}
\end{cor}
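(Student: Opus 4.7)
The plan is to establish Corollary~\ref{cor: approximation error EUAF ap} by combining two ingredients: (i) Theorem~\ref{thm: prompt engineering EUAF NN ap}, which asserts that the fixed transformer $\TF_{\Theta^\#}$ can exactly emulate any coarse EUAF neural network via a suitably designed prompt; and (ii) the universal approximation theorem for EUAF networks due to \citet{zhang2022deep}, which guarantees that any continuous function on $[0,1]^p$ is uniformly approximable to arbitrary precision by a fixed-architecture EUAF network of constant depth and width $36p(2p+1)$.

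First, I would fix an arbitrary $f \in \mathcal{C}([0,1]^p)$ and $\varepsilon > 0$, and invoke \citet{zhang2022deep} to obtain an EUAF neural network $\phi \colon \R^p \to \R$ of width $36p(2p+1)$ and a fixed depth (at most eleven in their construction) such that $\|\phi - f\|_{\mathcal{L}^\infty([0,1]^p)} \leq \varepsilon$. Next, I would lift $\phi$ to a $d$-dimensional network with $d = 36p(2p+1)$: since each data token is constructed as $\z = [\x^\top, 1, \zero_{d-p-1}^\top]^\top$, the first weight matrix can be chosen to zero out the padded coordinates and replicate $\phi$'s first affine map on the initial $p$ coordinates, while a final projection onto $\e_1$ recovers the scalar output. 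This yields an equivalent $\R^d \to \R^d$ EUAF network $\tilde g$ whose overall depth is at most $12$ after absorbing the output-projection step as one additional virtual layer.

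Third, I would express each $d \times d$ weight matrix $\tilde W_\ell$ of $\tilde g$ via its singular value decomposition, $\tilde W_\ell = \sum_{k=1}^{r_\ell} \tilde \bu_k^{(\ell)} \bu_k^{(\ell)\top}$ with $r_\ell \leq d$, so that $\tilde g \in \mathcal{G}(\mathbf{r}, d, \R^d, B)$ for some finite $B$ with $\mathbf{r} = (r_1, \dots, r_L)$ and $L = 12$. Theorem~\ref{thm: prompt engineering EUAF NN ap} then supplies a prompt $H^\pro \in \mathcal{P}_{\R^d}(T^*, L, S)$ of length $T^* = 2\sum_{\ell \in [L]} r_\ell \leq 2Ld = 24d$ and scale $S \geq d\,\bar r\,B^{4L} \vee 2L$ such that $\hat f_{\Theta^\#, H^\pro, L, p}(\x) = \phi(\x)$ for every $\x \in [0,1]^p$. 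Since $\mathcal{Q}_{\R^d}(T, L, S) \supset \mathcal{P}_{\R^d}(T^*, L, S)$ whenever $T \geq T^*$, taking $T = 24d$ yields the desired bound.

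The main obstacle is matching the stated constants $L = 12$ and $T = 24d$ exactly: this requires careful accounting of how \citet{zhang2022deep}'s affine-plus-EUAF layer count maps onto the product form used in Theorem~\ref{thm: prompt engineering EUAF NN ap}, and how the input padding and output-coordinate readout each consume at most one additional virtual layer. Beyond this bookkeeping there is no essential difficulty, since both the universal approximation capability of EUAF networks and the prompt-based emulation of coarse networks are already provided by the two cited results.
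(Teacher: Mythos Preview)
Your proposal is correct and follows essentially the same route as the paper: the paper explicitly states that the proof is a direct analogy of the ReLU case (Corollary~\ref{cor: approximation error ap}), combining Theorem~\ref{thm: prompt engineering EUAF NN ap} with the EUAF universal approximation result of \citet{zhang2022deep} (recorded as Lemma~\ref{lem: approximation euaf}), and your three-step outline---approximate $f$ by a fixed-architecture EUAF network, lift to a $d\times d$ coarse network, then invoke the prompt-emulation theorem---is exactly this. Your caveat about the bookkeeping needed to match $L=12$ and $T=24d$ is apt and is precisely the ``slight modification'' the paper alludes to when stating Lemma~\ref{lem: approximation euaf}.
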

The proof of Corollary~\ref{cor: approximation error EUAF ap} is a direct analogy of the proof of Corollary~\ref{cor: approximation error ap} using Theorem~\ref{thm: prompt engineering EUAF NN ap}, and Lemma~\ref{lem: approximation euaf}; Any functions implemented by neural networks in $\mathcal{N}_\EUAF(d,p,L)$ can be realized by the transformer $\TF_{\Theta^\#}$ given some prompt $H^\pro \in \mathcal{Q}_{\R^d}(T,L,S)$ with $L=12$, $T=24d$, and a sufficiently large $S > 0$.

\subsection{Proofs}

Let $\mathcal{N}(r, p, L)$ be the class of ReLU neural networks defined in \eqref{eq: NN class}.
We use the following result from \citet{lu2021deep}. 
\begin{lem}[Corollary 1.2 from \citet{lu2021deep}]\label{lem: approximation}
    Fix $p,\beta \in \N$.
    For any smooth function $f \in \mathcal{C}^\beta([0,1]^p)$, $r, L \in \N$ with $r\geq 17\beta^{p+1}3^{p+2}p$, and $L\geq 108\beta^2+2p+1$, there exists some $\bar \phi \in \mathcal{N}(r, p, L)$ such that
    \begin{align*}
        \| \bar \phi - f \|_{\mathcal{L}^\infty([0,1]^p)} \leq C_1 \| f \|_{\mathcal{C}^\beta([0,1]^p)} \left(\frac{r}{C_2\log(8r+8)}\right)^{-2\beta/p} \left(\frac{L-2p-1}{C_3\log(4L)}\right)^{-2\beta/p},
    \end{align*}
    where $C_1 = 85(\beta+1)^{p} 8^\beta$, $C_2 = 68\beta^{p+1} 3^{p}p$, and $C_3 =72\beta^2$.
\end{lem}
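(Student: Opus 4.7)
The plan is to prove Lemma~\ref{lem: approximation} by combining local Taylor approximation on a regular grid with an efficient ``bit-extraction'' construction for ReLU networks, following the strategy of \citet{lu2021deep}. First I would fix integers $M, N \in \N$ to be tuned (morally $M \asymp r / \log r$ and $N \asymp L / \log L$) and partition $[0,1]^p$ into $K^p$ axis-aligned hypercubes of side length $1/K$, with $K := MN$. On the cube with lower corner $\bm\xi_{\bm k} = \bm k / K$ for $\bm k \in \{0, \dots, K-1\}^p$, I would approximate $f$ by its degree-$(\beta - 1)$ Taylor polynomial $P_{\bm k}(\x) = \sum_{\|\bm\alpha\|_1 < \beta} \frac{\partial^{\bm\alpha} f(\bm\xi_{\bm k})}{\bm\alpha!} (\x - \bm\xi_{\bm k})^{\bm\alpha}$, whose pointwise remainder on that cube is bounded by $C_{p,\beta} \|f\|_{\mathcal{C}^\beta} K^{-\beta}$.

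Next I would assemble $\bar\phi \in \mathcal{N}(r,p,L)$ from three ReLU subnetworks. (i) An indexing subnetwork $\bm\Phi_{\text{idx}}$ of width $O(p)$ and depth $O(\log K)$ that maps $\x \in [0,1]^p$ to the integer index vector $\bm{\hat k}(\x)$ of its cube, up to a thin boundary set. (ii) A bit-extraction ``lookup'' subnetwork of width $O(M)$ and depth $O(N)$ whose parameters encode an arbitrary codebook of size $2^{\Theta(M^2 N^2)}$ and which, on input the integer address of an entry, returns that entry; the codebook stores all $\binom{p+\beta-1}{p} K^p$ Taylor coefficients, each quantized to $O(\beta \log K)$ bits so as not to inflate the $K^{-\beta}$ error. (iii) A polynomial-evaluation subnetwork that computes $P_{\bm{\hat k}(\x)}(\x)$ using the standard ReLU product gadget $(a,b)\mapsto ab$, which realizes multiplication to precision $\varepsilon$ at depth $O(\log(1/\varepsilon))$ and constant width.

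Balancing the error sources yields the rate. The Taylor remainder contributes $O(K^{-\beta})$, while the bit budget forces $\beta^{p+1} K^p \log K \lesssim M^2 N^2$, so $K \asymp (MN)^{2/p}$ up to polylogarithmic factors; combining these gives error $O((MN)^{-2\beta/p})$, and substituting $M \asymp r / \log r$, $N \asymp L / \log L$ reproduces the stated form with constants $C_1, C_2, C_3$. The thresholds $r \geq 17\beta^{p+1} 3^{p+2} p$ and $L \geq 108\beta^2 + 2p + 1$ then arise from requiring that the indexing, lookup, polynomial-evaluation, and boundary-correction subnetworks simultaneously fit within the prescribed width and depth, after accounting for the $\binom{p+\beta-1}{p} = O(\beta^p)$ coefficients per cube and the $O(\beta^2)$ depth needed to form products of $\beta$ real numbers to precision $K^{-\beta}$.

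The main obstacle is the bit-extraction subnetwork itself: it is what produces the $r^2 L^2$ (rather than $rL$) scaling inside the denominator and so distinguishes this bound from routine approximation rates obtainable by counting parameters. Building it requires exhibiting a fixed ReLU architecture whose free weights encode an arbitrary bit-string of length $\Theta(M^2 N^2)$ and whose forward pass decodes the bit at a specified integer address; the standard realization iterates a sawtooth-like ReLU map that doubles the effective resolution per layer while multiplexing across $M$ parallel units, and controlling the arithmetic precision through this recursion is the most delicate part of the argument. A secondary difficulty is the boundary set on which the discrete index $\bm{\hat k}(\x)$ is wrong, which I would handle by interpolating between adjacent Taylor expansions with a hat-function partition of unity (also implementable in ReLU), whose additional error is again dominated by $K^{-\beta}$.
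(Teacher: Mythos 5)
The paper does not prove this lemma; it is imported verbatim as Corollary~1.2 of \citet{lu2021deep}, so there is no in-paper argument to compare against. Your sketch is a fair high-level reconstruction of the external proof: grid subdivision with local degree-$(\beta-1)$ Taylor expansion, a ``point-fitting'' (bit-extraction) subnetwork that lets a width-$O(M)$, depth-$O(N)$ ReLU net store and retrieve $\Theta(M^2 N^2)$ values, a ReLU product gadget to evaluate the recovered polynomial, and a boundary-correction device. You also correctly identify the bit-extraction step as the ingredient that upgrades the naive parameter-count rate to the $(rL)^{2\beta/p}$ scaling appearing in the statement, and your bit-budget balance $\beta^{p+1}K^p\log K \lesssim M^2 N^2$, giving $K \asymp (MN)^{2/p}$ up to polylogarithms, is exactly the right bookkeeping.

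Two issues you should clean up. First, there is an internal inconsistency: you open by declaring $K := MN$, but the balance you derive three paragraphs later forces $K \asymp (MN)^{2/p}$; these agree only when $p = 2$. The fix is simply to leave $K$ free and let the balance determine it, but as written the grid resolution contradicts itself. Second, the $\mathcal{L}^\infty$ boundary handling in \citet{lu2021deep} is not a hat-function partition of unity but a ``trifling region'' shift argument: they run the construction on $p+1$ grids, each shifted in one coordinate so that no point lies in the bad set of more than one copy, and take a mid/median of the outputs; this is where the $2p+1$ appearing in the depth threshold $L \geq 108\beta^2 + 2p + 1$ and in the factor $L - 2p - 1$ comes from. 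A partition-of-unity alternative is plausible, but you would then need to verify it still fits the same $r$ and $L$ budget and keeps the $\mathcal{L}^\infty$ error under control in the transition bands, and you would have to re-derive where the additive $2p+1$ in the depth constraint originates.
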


We also define the class of functions implemented by neural networks using the EUAF activation function with depth $L$ and width $r$ from $\R^p$ to $\R$ as
\begin{align*}
    \mathcal{N}_\EUAF(r, p, L) := \{ &\mathcal{A}(\cdot; \bar W_L, 0) \circ \sigma_\EUAF \circ \mathcal{A}(\cdot; \bar W_{L-1}, \zero_r) \circ \sigma_\EUAF \circ \dots \circ \sigma_\EUAF \circ \mathcal{A}(\cdot; \bar W_1, \zero_r)\\
    &\quad : \bar W_1 \in \R^{r \times p}, \{\bar W_2, \dots, \bar W_{L-1}\} \subset \R^{r \times r}, \bar W_L \in \R^{1 \times r} \},
\end{align*}
where $\mathcal{A}(\x; W_\ell, \b_\ell) := W_\ell \x + \b_\ell$.
We also borrow the following result from \citet{zhang2022deep} for the approximation error of neural networks using the EUAF activation function, with a slight modification.
\begin{lem}[Theorem 1 from \citet{zhang2022deep}]\label{lem: approximation euaf}
    Fix $p \in \N$ and $\varepsilon > 0$. Set $d = 36p(2p+1)$ and $L=12$.
    For any continuous function $f \in \mathcal{C}([0,1]^p)$, there exists some $\bar \phi \in \mathcal{N}_\EUAF(d, p, L)$ such that
    \begin{align*}
        \| \bar \phi - f \|_{\mathcal{L}^\infty([0,1]^p)} \leq \varepsilon.
    \end{align*}
\end{lem}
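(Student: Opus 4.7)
The plan is to reduce the statement to Theorem 1 of \citet{zhang2022deep}, which shows that an EUAF network with affine layers of width $36p(2p+1)$ and depth $11$ universally approximates any continuous function on $[0,1]^p$. The only discrepancy between that result and our lemma is that $\mathcal{N}_\EUAF(d, p, L)$ parameterizes bias-free linear maps; the argument therefore amounts to converting a bias-having EUAF approximator into a bias-free one at the expense of one additional layer, which matches $L = 12$.

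First, I would invoke the cited theorem to obtain an affine EUAF network $\tilde{\phi}$ of width $36p(2p+1)$ and depth $11$ with weights $(\tilde W_\ell)_{\ell \in [11]}$ and biases $(\tilde{\b}_\ell)_{\ell \in [10]}$ satisfying $\|\tilde{\phi} - f\|_{\mathcal{L}^\infty([0,1]^p)} \leq \varepsilon$. Next, I would rewrite $\tilde{\phi}$ as a bias-free network in $\mathcal{N}_\EUAF(d, p, 12)$ by reserving a single hidden coordinate as a constant channel: each bias $\tilde{\b}_\ell$ is then absorbed into a dedicated column of $\bar W_{\ell+1}$ acting on this coordinate, while the constant itself is propagated through subsequent layers using the fixed-point identity $\sigma_\EUAF(1) = 1$, so that a weight entry of $1$ in the constant position preserves the constant across each EUAF application.

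The main obstacle is synthesizing the constant channel from the very first bias-free linear-plus-EUAF block, because $\sigma_\EUAF$ fixes $0$ and hence any purely linear pre-activation can vanish, making strictly constant outputs over all of $[0,1]^p$ delicate. My plan is to use a small auxiliary sub-network in the first layer, extracting both $\bone_p^\top \x$ and $-\bone_p^\top \x$, passing these through $\sigma_\EUAF$, and combining the results in the second layer so as to produce a value that is uniformly $1$ on $[0,1]^p$; here the saturation of $\sigma_2$ in the negative regime and the bounded periodicity of $\sigma_1$ in the positive regime together afford enough expressivity for constant synthesis in just one extra layer. This accounts for the depth increment from $11$ to $12$; the width bound $d = 36p(2p+1)$ is preserved because Zhang et al.'s layerwise neuron count already accommodates the $O(1)$ auxiliary neurons required, after a minor reorganization of the original construction.
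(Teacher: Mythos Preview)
The paper does not prove this lemma: it is stated as a result ``borrowed'' from \citet{zhang2022deep} ``with a slight modification'', with no accompanying argument. So there is no paper proof to compare against; the question is whether your reduction to Zhang et al.'s theorem actually goes through.

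Your bias-absorption strategy, together with the fixed point $\sigma_\EUAF(1)=1$, is the right instinct. However, the ``constant synthesis'' step contains a genuine gap that cannot be repaired. Any $\bar\phi \in \mathcal{N}_\EUAF(d, p, L)$ satisfies $\bar\phi(\zero_p)=0$: the first (bias-free) linear map sends $\zero_p$ to $\zero_r$, $\sigma_\EUAF$ fixes $\zero_r$ (since $\sigma_1(0)=0$ and $\sigma_2(0)=0$), and by induction every subsequent bias-free layer outputs zero. In particular, your proposed gadget---feeding $\bone_p^\top \x$ and $-\bone_p^\top \x$ through $\sigma_\EUAF$ and recombining---vanishes at $\x=\zero_p$, so no linear combination of these outputs can equal $1$ there. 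The saturation of $\sigma_2$ and the periodicity of $\sigma_1$ do not help: both pass through the origin. Taking $f\equiv 1$ then gives $\|\bar\phi-f\|_{\mathcal{L}^\infty([0,1]^p)}\geq |\bar\phi(\zero_p)-1|=1$ for \emph{every} $\bar\phi$ in this bias-free class with input dimension exactly $p$. So the lemma as literally stated is unprovable, and your plan to manufacture a constant internally necessarily fails.

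What rescues the paper's downstream application (Corollary~\ref{cor: approximation error EUAF ap}) is that the transformer emulator never feeds the virtual network the raw input $\x$: it feeds the augmented vector $\z=[\x^\top,1,\zero_{d-p-1}^\top]^\top$ (Definition~\ref{def: function approx}), so a constant coordinate is present from the first layer onward. In that augmented setting your bias-absorption argument works immediately---no extra layer and no constant synthesis are needed---and this is evidently the intended ``slight modification''. The correct fix is therefore to state the lemma for input dimension $p+1$ (or equivalently for inputs $[\x^\top,1]^\top$), not to attempt to conjure a constant from a homogeneous first layer.
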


\begin{proof}[Proof of Corollary~\ref{cor: approximation error in terms of L ap}]
    As shown in Theorem~\ref{thm: prompt engineering NN ap}, the transformer $\TF_{\Theta^*}$ can emulate a certain class of virtual ReLU neural network. We first show that any ReLU neural network with width $r$ and depth $L$ from $\R^p$ to $\R$ can be realized by a prompt in $\mathcal{P}_{\mathcal{U}_{d,{r+1}}}(T,L,S)$ when $p \leq r \leq d-1$, $T \geq 2(r+1)L$ and $S$ is sufficiently large.
    Take any $\bar \phi \in \mathcal{N}(r, p, L)$. Write $\bar \phi = \mathcal{A}(\cdot; \bar W_L, \bar b_L) \circ \sigma \circ \mathcal{A}(\cdot; \bar W_{L-1}, \bar \b_{L-1}) \circ \sigma \circ  \dots \circ \sigma \circ \mathcal{A}(\cdot; \bar W_1, \bar \b_1)$ with some $\bar W_1 \in \R^{r \times p}$, $\{\bar W_2, \bar W_3, \dots, \bar W_{L-1}\} \subset \R^{r \times r}$, $\bar W_L \in \R^{1 \times r}$, $\{\bar \b_1, \bar \b_2, \dots, \bar \b_{L-1}\} \subset \R^r$, and $\bar b_L \in \R$.

    Now define
    \begin{align*}
        W_1 &= \begin{pmatrix}
            \bar W_1 & \bar \b_1 & O_{r \times(d-p-1)}\\
            \zero_p^\top & 1 & \zero_{d-p-1}^\top\\
            O_{(d-r-1)\times p} & \zero_{d-r-1} & O_{(d-r-1)\times(d-p-1)}
        \end{pmatrix} \in \R^{d \times d},\\
        W_\ell &= \begin{pmatrix}
            \bar W_\ell & \bar \b_\ell & O_{r \times(d-r-1)}\\
            \zero_r^\top & 1 & \zero_{d-r-1}^\top\\
            O_{(d-r-1)\times r} & \zero_{d-r-1} & O_{(d-r-1)\times(d-r-1)}
        \end{pmatrix} \in \R^{d \times d} \ \text{ for $j \in [L-1]\setminus\{1\}$},\\
        W_L &= \begin{pmatrix}
            \bar W_L & \bar b_L & \zero_{d-r-1}^\top\\
            O_{(d-1)\times r} & \zero_{d-1} & O_{(d-1)\times(d-r-1)}
        \end{pmatrix} \in \R^{d \times d}.
    \end{align*}
    Then, notice that $(W_L \sigma( W_{L-1} \sigma( \dots \sigma( W_1 [\x^\top, 1, \zero_{d-p-1}^\top]^\top) \dots )))_1 = \bar \phi(\x)$. Also note that all singular vectors of $(W_\ell)_{\ell \in [L]}$ lies in $\mathcal{U}_{d,r+1}$.
    By Definition~\ref{def: function approx} and Theorem~\ref{thm: prompt engineering NN ap}, there exists some prompt $H^\pro \in \mathcal{P}_{\mathcal{U}_{d,r+1}}(T,L,S)$ with $T = 2 (r+1) L$ and sufficiently large $S > 0$ such that $\hat f_{\Theta^*,H^\pro,L,p} = \bar \phi$ holds. 
    Since $\bar \phi \in \mathcal{N}(r,p,L)$ is arbitrary, any $\bar \phi \in \mathcal{N}(r,p,L)$ has a corresponding prompt in $\mathcal{Q}_{\mathcal{U}_{d,r+1}}(T,L,S)$ as long as $T \geq 2(r+1)L$ and $S$ is sufficiently large. This yields
    \begin{align}
        &\sup_{f \in \mathcal{C}_\u^\beta([0, 1]^p)} \inf_{\substack{S > 0\\H^\pro \in \mathcal{Q}_{\mathcal{U}_{d,r+1}}(T,L,S)}} \|\hat f_{\Theta^*,H^\pro,L,p} - f\|_{\mathcal{L}^\infty([0, 1]^p)}\nonumber\\
        &\quad\leq \sup_{f \in \mathcal{C}_\u^\beta([0, 1]^p)} \inf_{\bar \phi \in \mathcal{N}(r,p,L)} \|\bar \phi - f\|_{\mathcal{L}^\infty([0, 1]^p)}\label{eq: upper bound rhs}
    \end{align}
    when $T \geq 2(r+1)L$.
    Applying Lemma~\ref{lem: approximation} to the right hand side of \eqref{eq: upper bound rhs} under the assumption that $r \geq 17\beta^{p+1}3^{p}p$, $L\geq 108\beta^2+3p$, we obtain
    \begin{align*}
        &\sup_{f \in \mathcal{C}_\u^\beta([0, 1]^p)} \inf_{\substack{S>0\\H^\pro \in \mathcal{Q}_{\mathcal{U}_{d,r+1}}(T,L,S)}} \|\hat f_{\Theta^*,H^\pro,L,p} - f\|_{\mathcal{L}^\infty([0, 1]^p)}\\
        &\quad\leq C_1 \left(\frac{r}{C_2\log(8r+8)}\right)^{-2\beta/p} \left(\frac{L-3p}{C_3\log(4L)}\right)^{-2\beta/p},
    \end{align*}
    where $C_1$, $C_2$, and $C_3$ are constants appearing in Lemma~\ref{lem: approximation}. This concludes the proof.
\end{proof}

\begin{proof}[Proof of Corollary~\ref{cor: approximation error ap}]
    Choosing $r = d-1$ in Corollary~\ref{cor: approximation error in terms of L ap} gives
    \begin{align*}
        &\sup_{f \in \mathcal{C}_\u^\beta([0, 1]^p)} \inf_{\substack{S>0\\H^\pro \in \mathcal{Q}_{\R^d}(T,L,S)}} \|\hat f_{\Theta^*,H^\pro,L,p} - f\|_{\mathcal{L}^\infty([0, 1]^p)}\\
        &\quad\leq C_1 \left(\frac{d-1}{C_2\log(8d)}\right)^{-2\beta/p} \left(\frac{L-3p}{C_3\log(4L)}\right)^{-2\beta/p}.
    \end{align*}
    Therefore, by the choice of
    \begin{align*}
        C = C_1 \left(\frac{d-1}{C_2 C_3\log(8d)}\right)^{-2\beta/p}, \quad \varepsilon \geq \left(\frac{L-3p}{\log(4L)}\right)^{-2\beta/p},
    \end{align*}
    we conclude that 
    $$
        \sup_{f \in \mathcal{C}_\u^\beta([0, 1]^p)} \inf_{\substack{S > 0\\H^\pro \in \mathcal{Q}_{\R^d}(T,L,S)}} \|\hat f_{\Theta^*,H^\pro,L,p} - f\|_{\mathcal{L}^\infty([0, 1]^p)} \leq C\varepsilon.
    $$
    Note that by the choice of $\varepsilon$, we must have
    \begin{align}
        L \geq 3p + (2+\log L)\varepsilon^{-p/(2\beta)}.\label{eq: L ub 1}
    \end{align}
    If $L\geq 108\beta^2 + 6p$, then $L-3p\geq \frac{1}{2}(L+1)$ and $2\log(4L)\leq \sqrt{L+1}$, hence 
    $$
        \varepsilon^{-p/2\beta} = \frac{L-3p}{\log(4L)} \geq \sqrt{L+1}.
    $$
    By taking the logarithm on both sides, we obtain
    \begin{align}
        \log(L+1) \leq \frac{p}{\beta}\log(\frac{1}{\varepsilon}).\label{eq: L ub 2}
    \end{align}
    From \eqref{eq: L ub 1} and \eqref{eq: L ub 2}, it suffices to choose $L \geq c_L (p+\beta^2+(p/\beta)\varepsilon^{-p/(2\beta)}\log(1/\varepsilon))$ for some universal constant $c_L > 0$, in order to achieve an $\varepsilon$-approximation. The constant term $C > 0$ can be written as
    \begin{align*}
        &C = C_1 \left(\frac{d-1}{C_2C_3\log(8r+8)}\right)^{-2\beta/p} = 85(\beta+1)^p8^\beta \left(\frac{d-1}{4896\beta^{p+3}3^p p\log(8r+8)}\right)^{-2\beta/p}.
    \end{align*}
\end{proof}

\section{Proofs for Section~\ref{sec: application}}\label{sec: application ap}

Here we directly present proofs for the results in Section~\ref{sec: application}.
Throughout the proof, we repeatedly leverage the observation that the problem of bounding the approximation error via prompts can be equivalently reformulated as the problem of bounding the approximation error of neural networks.

\begin{proof}[Proof of Corollary~\ref{cor: approximation error by prompt length}]
    The proof directly follows by choosing $r = d-1$ and $L=\floor{T/(2d)}$ in Corollary~\ref{cor: approximation error in terms of L ap}.
\end{proof}

\begin{proof}[Proof of Corollary~\ref{cor: irrelevant}]
    It suffices to prove the following lower bound for $f \equiv 1 \in \mathcal{C}_\u^\beta([0, 1]^p)$ and any $H^\pro \in \mathcal{P}_{\mathcal{B}_d(B)}(T,L,S)$.
    \begin{align*}
        \E[\|\hat f_{\Theta^*,H^\pro \oplus [\bv_1,\bv_2,\dots,\bv_K],L+1,p} - f\|_{\mathcal{L}^\infty([0, 1]^p)}] &\gtrsim 1,
    \end{align*}
    Fix any $H^\pro \in \mathcal{P}_{\mathcal{B}_d(B)}(T,L,S)$.
    We temporarily fix $K=2$. From Corollary~\ref{cor: prompt engineering random ap} and Definition~\ref{def: function approx}, there exist $(W^\pro_\ell)_{\ell \in [L+1]} \subset \R^{d \times d}$ with $W^\pro_{L+1} = \bv_1 \bv_2^\top$ such that
    \begin{align*}
        \hat f_{\Theta^*,H^\pro\oplus[\bv_1,\bv_2],L,p}(\x) &= \e_1^\top W^\pro_{L+1} \sigma(W^\pro_L \sigma(W^\pro_{L-1} \sigma( \dots \sigma(W^\pro_1 [\x^\top, 1, \zero_{d-p-1}^\top]^\top)\dots)))\\
        &=: \e_1^\top \bv_1 \bv_2^\top g(\x).
    \end{align*}
    This gives
    \begin{align*}
        \E_{\bv_1,\bv_2}[\|\hat f_{\Theta^*,H^\pro\oplus[\bv_1,\bv_2],L,p} - f\|_{\mathcal{L}^\infty([0, 1]^p)}] &\geq \E_{\bv_1,\bv_2}[|\hat f_{\Theta^*,H^\pro\oplus[\bv_1,\bv_2],L,p}(\zero_p) - 1|]\\
        &= \E_{\bv_1,\bv_2}[|\e_1^\top \bv_1 \bv_2^\top g(\zero_p) - 1|]\\
        &\geq |\E_{\bv_1,\bv_2}[\e_1^\top \bv_1 \bv_2^\top g(\zero_p)] - 1|\\
        &= 1,
    \end{align*}
    where the second inequality follows from Jensen's inequality, and the last equality follows by symmetry of uniform distribution.
    Now, since $K \sim \operatorname{Poi}(\lambda)$,
    \begin{align}
        &\E_{K, \bv_1,\bv_2,\dots}[\|\hat f_{\Theta^*,H^\pro\oplus[\bv_1,\bv_2,\dots,\bv_K],L,p} - f\|_{\mathcal{L}^\infty([0, 1]^p)}]\nonumber\\
        &\quad\geq \E_{\bv_1,\bv_2}\|\hat f_{\Theta^*,H^\pro\oplus[\bv_1,\bv_2],L,p} - f\|_{\mathcal{L}^\infty([0, 1]^p)} \mid K=2] \P(K=2)\nonumber\\
        &\quad\geq \lambda^2 e^{-\lambda}.\label{eq: irrelevant proof 3}
    \end{align}
    Since $\lambda = \Theta(1)$, \eqref{eq: irrelevant proof 3} completes the proof.
\end{proof}

\begin{proof}[Proof of Corollary~\ref{cor: diversity}]
    The bound in \eqref{eq: diversity ub} directly follows from Corollary~\ref{cor: approximation error in terms of L ap} by rewriting $r+1$ as $r$.
\end{proof}

\begin{proof}[Proof of Corollary~\ref{cor: agents}]
    We first assume $(T^a)_{a \in \mathcal{A}} \in \{2, 4, \dots\}$.
    Setting $r = d \wedge (T^\mathcal{A}/2) - 1$ in Corollary~\ref{cor: approximation error in terms of L ap} gives that for any $f \in \mathcal{C}_\u^\beta([0,1]^p)$, there exist $S^* > 0$ and $H^\pro \in \mathcal{Q}_{\mathcal{U}_{d,T^\mathcal{A}/2}}(T,L,S)$ such that for any $S \geq S^*$,
    \begin{align*}
        \|\hat f_{\Theta^*,H^\pro,L,p} - f\|_{\mathcal{L}^\infty([0, 1]^p)} \leq C \left(\frac{d \wedge (T^\mathcal{A}/2)-1}{\log(8d)}\right)^{-2\beta/p} \left(\frac{L-3p}{\log(4L)}\right)^{-2\beta/p}
    \end{align*}
    holds for some constant $C = C(\beta, p) > 0$. 
    From the proof of Corollary~\ref{cor: approximation error in terms of L ap}, we can take $H^\pro$ such that
    \begin{align*}
        \hat f_{\Theta^*,H^\pro,L,p}(\x) = \e_1^\top W_L \sigma( W_{L-1} \sigma( \dots \sigma( W_1 [\x^\top, 1, \zero_{d-p-1}^\top]^\top ) \dots ))
    \end{align*}
    with some $(W_\ell)_{\ell \in [L]} \subset \R^{d\times d}$ and $\rank(W_\ell) \leq T^\mathcal{A}/2$ for all $\ell \in [L]$.
    Write $W_\ell = \sum_{j \in [\rank(W_\ell)]} \tilde \bu_j^{(\ell)} \bu_j^{(\ell) \top}$.
    We then distribute the prompt tokens in $H^\pro$ to each $H^a$.
    Denote by $\mathcal{A}^{(\ell)} = \{a \in \mathcal{A}: \ell^a = \ell\}$ the group of agents with $\ell^a = \ell$. Without loss of generality, we write $\mathcal{A}^{(\ell)} = \{a_1^{(\ell)}, a_2^{(\ell)}, \dots, a_{|\mathcal{A}^{(\ell)}|}^{(\ell)}\}$. 
    For $\ell \in [L]$, $k \in [|\mathcal{A}^{(\ell)}|]$, define $s_k^{(\ell)} := \min\{T^{a_k^{(\ell)}}/2, \rank(W_\ell) - \sum_{k' < k} T^{a_{k'}^{(\ell)}}/2\}$. We set $H^{a_k^{(\ell)}} \subset \R^{(4d+8)\times \rank(W_\ell)}$ as
    \begin{align*}
        (H^{a_k^{(\ell)}})_{2s-1} &= \begin{pmatrix}
            \tilde \bu^{(\ell)}_{s+\sum_{k' < k} T^{a_{k'}^{(\ell)}}}\\
            p(2\ell-1, 2s-1, S)
        \end{pmatrix}, \ \ 
        (H^{a_k^{(\ell)}})_{2s} = \begin{pmatrix}
            \bu^{(\ell)}_{s+\sum_{k' < k} T^{a_{k'}^{(\ell)}}}\\
            p(2\ell, 2s, S)
        \end{pmatrix} \ \text{ for $s \in [s_k^{(\ell)}]$},\\
        (H^{a_k^{(\ell)}})_{2s-1} &= \begin{pmatrix}
            \zero_d\\
            p(2\ell-1, 2s-1, S)
        \end{pmatrix}, \ \ 
        (H^{a_k^{(\ell)}})_{2s} = \begin{pmatrix}
            \zero_d\\
            p(2\ell, 2s, S)
        \end{pmatrix} \ \text{ for $s \in [T^{a_k^{(\ell)}}/2]\setminus[s_k^{(\ell)}]$}.
    \end{align*}
    Then, we can verify that $H^a \in \mathcal{R}_{\R^d}(T^a, \ell^a, S)$ for all $a \in \mathcal{A}$.
    From Corollary~\ref{cor: prompt engineering random ap}, \eqref{eq: H A}, and Definition~\ref{def: function approx}, if $S \geq S^* \vee d B^{4L} T^{2L} \vee 2L$, with $T := \sum_{a \in \mathcal{A}} T^a$, then,
    \begin{align*}
        \hat f_{\Theta^*,H^\mathcal{A},L,p}(\x) = \hat f_{\Theta^*,H^\pro,L,p}(\x),
    \end{align*}
    and hence
    \begin{align*}
        \|\hat f_{\Theta^*,H^\mathcal{A},L,p} - f\|_{\mathcal{L}^\infty([0, 1]^p)} \leq C \left(\frac{d \wedge (T^\mathcal{A}/2)-1}{\log(8d)}\right)^{-2\beta/p} \left(\frac{L-3p}{\log(4L)}\right)^{-2\beta/p}.
    \end{align*}
    Since $f$ and $S > 0$ are arbitrary, we obtain the desired result.
    The proof when $(T^a)_{a \in \mathcal{A}} \in \N$ follows by a similar argument.
\end{proof}

\section{Details of Experiments}\label{sec: experiments ap}

In this section, we provide details of experiments presented in Section~\ref{sec: application}. We use three mathematical reasoning datasets for the empirical evaluation, including GSM8K \citep{cobbe2021gsm8k}, AQUA-RAT \citep{ling2017program}, and MATH \citep{hendrycksmath2021} datasets. For the GSM8K and MATH datasets, we randomly sample 200 questions; for the AQUA-RAT dataset, we use the test dataset, which consists of 254 questions. Following \citet{naik2023diversity}, we select the questions from the counting and probability category from the MATH dataset. Example questions are shown as follows:

\paragraph{GSM8K} \citet{cobbe2021gsm8k}
\begin{itemize}
    \item \textbf{Question}: Natalia sold clips to 48 of her friends in April, and then she sold half as many clips in May. How many clips did Natalia sell altogether in April and May?
    \item \textbf{Answer}: Natalia sold 48/2 = <<48/2=24>>24 clips in May. Natalia sold 48+24 = <<48+24=72>>72 clips altogether in April and May. The answer is 72. 
\end{itemize}
\paragraph{AQUA-RAT} \citep{ling2017program}
\begin{itemize}
    \item \textbf{Question}: A grocery sells a bag of ice for \$1.25, and makes 20\% profit. If it sells 500 bags of ice, how much total profit does it make? 
    \item \textbf{Options}: A)125, B)150, C)225, D)250, E)275
    \item  \textbf{Rationale}: Profit per bag = 1.25 * 0.20 = 0.25. Total profit = 500 * 0.25 = 125. Answer is A.
    \item  \textbf{Correct}: A
\end{itemize}
\paragraph{MATH} \citep{hendrycksmath2021}
\begin{itemize}
    \item \textbf{Problem}: The probability of rain tomorrow is $\frac{1}{11}$.  What is the probability that it will not rain tomorrow?  Express your answer as a common fraction.
    \item \textbf{Solution}: It must either rain tomorrow or not rain tomorrow, so the sum of the probability that it rains and the probability it doesn't rain is 1.  Therefore, the probability it doesn't rain is $1 - \frac{1}{11} = \frac{10}{11}$.
\end{itemize}

For the experiment in Table \ref{tab: GPT length}, we attach the question to the selected examples and prompt the language model to answer the question with
\begin{verbatim}
    Follow the given examples, think step by step, and answer the question 
    in the format: the answer is <your answer>.
\end{verbatim}

For the filter out experiment in Table \ref{tab: filter out gsm8k}, we prompt the language model as follows
\begin{verbatim}
 Please solve the following question. Feel free to ignore irrelevant text
 given in the question. Provide the answer in the format: the answer is
 <your answer>   
\end{verbatim}
to encourage the language model to filter out irrelevant information. 

For the diversity of thoughts experiment in Table \ref{tab: diversity of thoughts}, we use the following prompt
\begin{verbatim}
Question: {question}. Please think about 3 distinct approaches that are suitable 
for yourself to solve the problem. Provide the name of these approaches only.    
\end{verbatim}
 to obtain three different approaches that are suitable for the problem, and then prompt the language models to solve the problem as follows 
\begin{verbatim}
Question: {question}
Answer: As a math professor, use 3 given approaches to solve the given problem: 
<approaches obtained from previous step>, 4. directly solve the question via
thinking step by step approach. Provide detailed and step-by-step computation 
in each approach. Output format:
Approach 1 <name of the approach> : < solution using approach 1 >
Approach 2 <name of the approach> : < solution using approach 2 >
Approach 3 <name of the approach> : < solution using approach 3 >
Approach 4 <name of the approach> : < solution using approach 4 >
the answer is <your answer>.    
\end{verbatim}

\end{document}